\newtheorem{proposition}{Proposition}
\newtheorem{definition}{Definition}
\newtheorem{theorem}{Theorem}
\newtheorem{lemma}{Lemma}
\newenvironment{thmbis}[1]
  {%
   \begin{theorem}}
  {\end{theorem}}
  \newenvironment{propbis}[1]
  {%
   \begin{proposition}}
  {\end{proposition}}
\newenvironment{proof}{{\indent \indent \it \textbf{Proof:}\ }}{\hfill $\blacksquare$\par}
\newcommand{\R}{\mathbb{R}}
\newcommand{\W}{\mathbf{W}}
\newcommand{\w}{\mathbf{w}}
\newcommand{\K}{\mathbf{K}}
\newcommand{\J}{\mathbf{J}}
\newcommand{\D}{\mathcal{D}}
\newcommand{\x}{\mathbf{x}}
\newcommand{\X}{\mathcal{X}}
\newcommand{\Y}{\mathcal{Y}}
\newcommand{\Loss}{\mathcal{L}}
\newcommand{\I}{\mathbb{I}}
\newcommand{\df}{\operatorname{d}}
\title{Rethinking Influence Functions of Neural Networks\\ in the Over-parameterized Regime}
\author{
    Rui Zhang,\textsuperscript{\rm 1,2} Shihua Zhang\textsuperscript{\rm 1,2}\thanks{Shihua Zhang is the corresponding author.}
}
\begin{document}

\maketitle

\begin{abstract}
Understanding the black-box prediction for neural networks is challenging. To achieve this, early studies have designed influence function (IF) to measure the effect of removing a single training point on neural networks. However, the classic implicit Hessian-vector product (IHVP) method for calculating IF is fragile, and theoretical analysis of IF in the context of neural networks is still lacking. To this end, we utilize the neural tangent kernel (NTK) theory to calculate IF for the neural network trained with regularized mean-square loss, and prove that the approximation error can be arbitrarily small when the width is sufficiently large for two-layer ReLU networks. We analyze the error bound for the classic IHVP method in the over-parameterized regime to understand when and why it fails or not. In detail, our theoretical analysis reveals that (1) the accuracy of IHVP depends on the regularization term, and is pretty low under weak regularization; (2) the accuracy of IHVP has a significant correlation with the probability density of corresponding training points. We further borrow the theory from NTK to understand the IFs better, including quantifying the complexity for influential samples and depicting the variation of IFs during the training dynamics. Numerical experiments on real-world data confirm our theoretical results and demonstrate our findings.
\end{abstract}  %

\section{Introduction}
Influence function \citep{doi:10.1080/01621459.1974.10482962} is a classic technique from robust statistics, which measures the effect of changing a single sample point on an estimator. \citet{pmlr-v70-koh17a} transferred the concept of IFs to understanding why neural networks make corresponding predictions. IF is one of the most common approaches in explainable AI and is widely used for boosting model performance \citep{wang2018data}, measuring group effects \citep{NEURIPS2019_a78482ce}, investigating model bias \citep{pmlr-v97-wang19l}, understanding generative models \citep{kong2021understanding} and so on. Specifically, IF reflects the effect of removing one training point on a neural network’s prediction, thus can be used to discover the most influential training points for a given prediction. In their work, the implicit Hessian-vector product (IHVP) was utilized to estimate the IF for neural networks. However, the numerical experiments in \citep{basu2021influence} pointed out that IFs calculated via IHVPs are often erroneous for neural networks. Furthermore, theoretical understanding for why these phenomena happened still lacks in the neural network regime.

To theoretically understand the IF for neural networks, we need to overcome the non-linearity and over-parameterized properties in neural networks. Fortunately, recent advances of NTK \cite{NEURIPS2018_5a4be1fa,NEURIPS2019_0d1a9651} shed light on the theory of over-parameterized neural networks. The key idea of NTK is that an infinitely wide neural network trained by gradient descent is equivalent to kernel regression with NTK. Remarkably, the theory of NTK builds a bridge between the over-parameterized neural networks and the kernel regression method, which dramatically reduces the difficulty of analyzing the neural networks theoretically. With the help of NTK, the theory for over-parameterized neural networks has achieved rapid progresses \cite{pmlr-v97-arora19a,NEURIPS2019_663fd3c5,Hu2020Simple,zhang2021how}, which encourage us to deal with the puzzle of calculating and understanding the IFs for neural networks in the NTK regime.

In this work, we utilize the NTK theory to calculate IFs and analyze the behavior theoretically for the over-parameterized neural networks. In summary, we make the following contributions:
\begin{itemize}
    \item We utilize the NTK theory to calculate IFs for over-parameterized neural networks trained with regularized mean-square loss, and prove that the approximation error can be arbitrarily small when the width is sufficiently large for two-layer ReLU networks. Remarkably, we prove the first rigorous result to build the equivalence between the fully-trained neural network and the kernel predictor in the regularized situation. Numerical experiments confirm that IFs calculated in the NTK regime can approximate the actual IFs with high accuracy.
    \item We analyze the error bound for the classic IHVP method in the over-parameterized regime to understand when and why it fails or not. On the one hand, our bounds reveal that the accuracy of IHVP depends on the regularization term which was only characterized before by numerical experiments in \citep{basu2021influence}. On the other hand, we theoretically prove that the accuracy of IHVP has a significant correlation with the probability density of corresponding training points, which has not been revealed in previous literature. Numerical experiments verify our bounds and statements.
    \item Furthermore, we borrow the theory from NTK to understand the behavior of IFs better. On the one hand, we utilize the theory of model complexity in NTK to quantify the complexity for influential samples and reveal that the most influential samples make the model more complicated. On the other hand, we track the dynamic system of the neural networks and depict the variation of IFs during the training dynamics.
\end{itemize}

\section{Related Works}
\subsection{Influence Functions in Machine Learning}
 To explain the black-box prediction in neural networks, \citet{pmlr-v70-koh17a} utilized the concept of IF to trace a model’s predictions through its learning algorithm and back to the training data. To be specific, they considered the following question:
\emph{How would the model’s predictions change if we did not have this training point?}

To calculate the IF of a training point for neural networks, \citet{pmlr-v70-koh17a} proposed an approximate method based on IHVP and \citet{DBLP:conf/aaai/ChenLYWM21} considered the training trajectory to avoid the calculation of Hessian matrix. However, \citet{basu2021influence} figured out that the predicting precision via IHVP may become particularly poor under certain conditions for neural networks. 

\subsection{Theory and Applications of NTK}
In the last few years, several papers have shown that the infinite-width neural network with the square loss during training can be characterized by a linear differential equation \cite{NEURIPS2018_5a4be1fa,NEURIPS2019_0d1a9651,pmlr-v97-arora19a}. In particular, when the loss function is the mean square loss, the inference performed by an infinite-width neural network is equivalent to the kernel regression with NTK. The progresses about NTK shed light on the theory of over-parameterized neural networks and were utilized to understand the optimization and generalization for shallow and deep neural networks \citep{pmlr-v97-arora19a,NEURIPS2019_cf9dc5e4, pmlr-v97-allen-zhu19a, NEURIPS2020_8abfe8ac}, regularization methods
\citep{Hu2020Simple}, and data augmentation methods \citep{li2019enhanced,zhang2021how} in the over-parameterized regime. NTK can be analytically calculated using exact Bayesian inference, which has been implemented in NEURAL TANGENTS for working with infinite-width networks efficiently \citep{Novak2020Neural}. 

In this work, we will firstly give rigorous prove to reveal the equivalence between the regularized neural networks and the kernel ridge regression predictor via NTK in the over-parameterized regime, then utilize the tools about NTK to calculate and better understand the properties of IFs.

\section{Preliminaries}
\subsection{Notations}
We utilize bold-faced letters for vectors and matrices. For a matrix $\mathbf{A}$, let $[\mathbf{A}]_{i j}$ be its $(i, j)$-th entry and $\operatorname{vec} (\mathbf{A})$ be its vectorization. For a vector $\mathbf{a}$, let $[\mathbf{a}]_i$ be its $i$-th entry. We use $\|\cdot\|_{2}$ to denote the Euclidean norm of a vector or the spectral norm of a matrix, and use $\|\cdot\|_{F}$ to denote the Frobenius norm of a matrix. We use $\langle\cdot,\cdot\rangle$ to denote the standard Euclidean inner product between two vectors or matrices. Let $\mathbf{I}_n$ be an $n\times n$ identity matrix, and $\mathbf{e}_i$ denote the $i$-th unit vector and $[n]=\{1,2, \cdots, n\}$. For a set $A$, we utilize $\mathrm{unif}(A)$ to denote the uniform distribution over $A$. We utilize $\mathbb{I}(\cdot)$ to denote the indicator function. We utilize $f^{\backslash i}$ to denote the network or kernel predictor trained without the $i$-th training point. To be clear, we respectively define $f_{nn}$ and $f_{ntk}$ as neural network (nn) and its corresponding NTK predictor. We further denote $\W(t)$ as the parameters of neural networks at time $t$ during the training process. 

\subsection{Network Models and Training Dynamics}
In this paper, we consider the two-layer neural networks with rectified
linear unit (ReLU) activation:
\begin{equation}
  f_{nn}(\mathbf{x})=\frac{1}{\sqrt{m}} \sum_{r=1}^{m} a_{r} \sigma\left(\mathbf{w}_{r}^{\top} \mathbf{x}\right) \mathbf{,}  
  \label{nn}
\end{equation}
where $\mathbf{x} \in \R^d$ is the input, $\mathbf{W} = [\mathbf{w}_1, \cdots, \mathbf{w}_m] \in \R^{d\times m}$ is the weight matrix in the first layer, and $\mathbf{a} = [a_1, \cdots, a_m]^{\top} \in \R^{m}$ is the weight vector in the second layer. We initialize the parameters randomly as follows:
$$
\mathbf{w}_{r}(0) \sim \mathcal{N}\left(\mathbf{0}, \kappa^{2} \mathbf{I}_d\right), \ a_{r}(0) \sim \mathrm{unif}(\{-1,1\}), \quad \forall r \in[m],
$$
where $0 < \kappa \ll 1$ controls the magnitude of initialization, and all randomnesses are independent. For simplicity, we fix the second layer $\mathbf{a}$ and only update the first layer $\W$ during training. The same setting has been used in \cite{pmlr-v97-arora19a,du2018gradient}.
 
We are given $n$ training points $(\X,\Y) = \{\x_i,y_i\}_{i=1}^n$ drawn i.i.d. from an underlying data distribution $\mathcal{D}$ over $\R^d\times\R$. For simplicity, we assume that for each $(\x, y)$ sampled from $\mathcal{D}$ satisfying $\|\x\|_2 = 1$ and $|y| \leq 1$.

To study the effect of regularizer on calculating the IF, we train the neural networks through gradient descent on the regularized mean square error loss function as follows:
\begin{equation}
    \mathcal{L}(\W)=\frac{1}{2} \sum_{i=1}^{n}\left(f_{nn}\left(\x_i; \W\right)-{y}_{i}\right)^{2}+\frac{\lambda}{2}\|\W-\W(0)\|_F^{2}, \label{loss}
\end{equation}
where the regularizer term restricts the distance between the network parameters to initialization, and has been previously studied in \cite{Hu2020Simple}. And we consider minimizing the loss function $\Loss(\W)$ in the gradient flow regime, i.e., gradient descent with infinitesimal step size, then the evolution of $\W(t)$ can be described by the following ordinary differential equation:

\begin{equation}
\begin{aligned}
    &\frac{\df\W(t)}{\df t} =- \frac{\partial \Loss(\W(t))}{\partial \W(t)} \\ 
    = & 
    \sum_{i=1}^{n} (y_i - f_{nn}(\x_i;\W(t)))\frac{\partial f_{nn}(\x_i;\W(t))}{\partial \W(t)} - \lambda (\W(t)- \W(0)).
\end{aligned}\label{W_ode}
\end{equation}

\subsection{NTK for Two-layer ReLU Neural Networks}
Given two data points $\x$ and $\x^{\prime}$, the NTK associated with two-layer ReLU neural networks has a closed form expression as follows \cite{pmlr-v54-xie17a,du2018gradient}:
\begin{equation}
\begin{aligned}
\K^{\infty} (\x,\x^{\prime})&\triangleq  
\lim_{m \to \infty}
\left\langle\frac{\partial f_{nn}(\x;\W(0))}{\partial \W(0)},\frac{\partial f_{nn}(\x^{\prime};\W(0))}{\partial \W(0)}\right\rangle
\\&= \mathbb{E}_{\mathbf{w} \sim \mathcal{N}(\mathbf{0}, \mathbf{I}_d)}\left[\mathbf{x}^{\top} \mathbf{x}^{\prime} \mathbb{I}\left\{\mathbf{w}^{\top} \mathbf{x} \geq 0, \mathbf{w}^{\top} \mathbf{x}^{\prime} \geq 0\right\}\right] \\
&=\frac{\mathbf{x}^{\top} \mathbf{x}^{\prime}\left(\pi-\arccos \left(\mathbf{x}^{\top} \mathbf{x}^{\prime}\right)\right)}{2 \pi}.
\end{aligned}\label{ntk_closed}
\end{equation}

Equipped with the NTK function, we consider the following kernel ridge regression problem:
\begin{equation}
    \min_{\boldsymbol{\beta}\in\R^{n}}\ \frac{1}{2}\|\Y-\K^{\infty}_{tr}\boldsymbol{\beta}\|_2^2+\frac{\lambda}{2}\boldsymbol{\beta}^{\top}{\K}_{tr}^{\infty}\boldsymbol{\beta} \label{krr},
\end{equation}
where $\K_{tr}^{\infty}\in \R^{n \times n}$ is the NTK matrix evaluated on the training data, i.e., $[\K_{tr}^{\infty}]_{i,j} = \K^{\infty}(\x_i,\x_j)$, and $\boldsymbol{\beta}^{*} \triangleq (\K_{tr}^{\infty}+\lambda \mathbf{I}_n)^{-1}\Y$ is the optimizer of the problem \eqref{krr}. Hence the prediction of kernel ridge regression using
NTK on a test point $\x_{te}$ is:
\begin{equation}
    f_{ntk}(\x_{te}) = (\K_{te}^{\infty})^{\top} (\K_{tr}^{\infty}+\lambda \mathbf{I}_n)^{-1}\Y, \label{krr_pred}
\end{equation}
 where $\K_{te}^{\infty} \in \R^{n}$ is the NTK evaluated between the test point $\x_{te}$ and the training points $\X$, i.e., $[\K_{te}]_i = \K^{\infty}(\x_{te},\x_i)$.

\section{IFs for Over-parameterized Neural Networks}
In this section, our goal is to evaluate the effect of a single training point for the over-parameterized neural network’s predictions. In detail, given a training point $\x_i \in \X$, we need to calculate the variation of test loss after removing $\x_i$ from $\X$ for the neural network $f_{nn}$, which is denoted by $\mathcal{I}_{nn}(\x_i,\x_{te})$ as follows:
\begin{equation}
\mathcal{I}_{nn}(\x_i,\x_{te}) \triangleq \frac{1}{2}(f_{nn}^{\backslash i}(\x_{te})-y_{te})^2-\frac{1}{2}(f_{nn}(\x_{te})-y_{te})^2.
\end{equation}

To calculate $\mathcal{I}_{nn}(\x_i,\x_{te})$ for all $\x_{i}$, it is impossible to retrain the neural network one by one. Furthermore, due to the fact that the neural network is over-parameterized, it is prohibitively slow to calculate the IF via the IHVP method \citep{pmlr-v70-koh17a}. In this work, we utilize the corresponding NTK predictor $f_{ntk}$ to approximate the behavior of $f_{nn}$, hence we define the IF in the NTK regime as follows:
\begin{equation}
       \mathcal{I}_{ntk}(\x_i,\x_{te}) \triangleq \frac{1}{2}(f_{ntk}^{\backslash i}(\x_{te})-y_{te})^2-\frac{1}{2}(f_{ntk}(\x_{te})-y_{te})^2.
\end{equation}

To calculate $f_{ntk}^{\backslash i}(\x_{te})$ efficiently, we borrow the technique from the online Gaussian regression to update the inverse of kernel matrix \cite{6790153,1315946}. Then we have:
\begin{equation}
    f_{ntk}^{\backslash i}(\x_{te}) = (\K_{te}^{\infty})^{\top}\left((\K_{tr}^{\infty}+\lambda \mathbf{I}_n)^{-1} - \frac{\mathbf{k}_{-i}\mathbf{k}_{-i}^{\top}}{k_{-ii}}\right) \Y,
\end{equation}
where $\mathbf{k}_{-i}$ and $k_{-ii}$ denote the $i$-th column and the $(i,i)$-th entry of $(\K_{tr}^{\infty}+\lambda \mathbf{I}_n)^{-1}$ respectively.

After giving a calculation method for $\mathcal{I}_{ntk}(\x_{i},\x_{te})$, we show that $\mathcal{I}_{ntk}(\x_{i},\x_{te})$ can be a good approximation of $\mathcal{I}_{nn}(\x_{i},\x_{te})$ in the over-parameterized regime. Note that in the ridgeless regime, i.e., $\lambda = 0$,  \citet{NEURIPS2019_dbc4d84b} rigorously showed the equivalence between a fully-trained neural network and its corresponding kernel predictor. However, in the kernel ridge regime, i.e., $\lambda > 0$, 
there is no rigorous result so far. In this paper, we propose the following theorem which reveals the equivalence between the two-layer fully-trained wide ReLU neural network $f_{nn}$ and its corresponding NTK predictor $f_{ntk}$ in the ridge regression regime.
\begin{theorem}\label{theorem1}
Suppose $0<\lambda< n^{\frac{1}{2}},\ \kappa = \mathcal{O}\left(\frac{\sqrt{\delta} \lambda \epsilon}{n}\right)$ and $\ m = \Omega\left(\frac{n^8}{\kappa^2\epsilon^2\delta^4\lambda^6}\right)$, then for any $\x_{te} \in \mathbb{R}^{d}$ with $\|\x_{t e}\|_2=1$, with probability at least $1 - \delta$ over random initialization, we have:
\begin{equation}
    |f_{nn}(\x_{te}) - f_{ntk}(\x_{te})|= \mathcal{O}(\epsilon).
\end{equation}
\end{theorem}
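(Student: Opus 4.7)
The plan is to couple the gradient flow on the regularized loss $\Loss(\W)$ with its linearization around $\W(0)$ and to show that both the coupling error and the deviation of the linearized fixed point from $f_{ntk}$ are $\mathcal{O}(\epsilon)$. Define $\tilde f(\x;\W) \triangleq f_{nn}(\x;\W(0)) + \langle \nabla_{\W} f_{nn}(\x;\W(0)),\W - \W(0)\rangle$; substituting $\tilde f$ for $f_{nn}$ in \eqref{loss} yields a $\lambda$-strongly convex quadratic in $\W - \W(0)$ whose gradient flow is a linear ODE. By the push-through identity $\J(\J^{\top}\J + \lambda \mathbf{I})^{-1}\J^{\top} = \K(0)(\K(0) + \lambda \mathbf{I}_n)^{-1}$, with $\J$ the Jacobian at initialization, its unique fixed point evaluated at $\x_{te}$ equals $f_{nn}(\x_{te};\W(0)) + \K_{te}(0)^{\top}(\K_{tr}(0)+\lambda\mathbf{I}_n)^{-1}(\Y - f_{nn}(\X;\W(0)))$, which, modulo the small initial output and the kernel-concentration gap, equals $f_{ntk}(\x_{te})$. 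The task is therefore to bound $|f_{nn}(\x_{te};\W(\infty)) - \tilde f(\x_{te};\W^*)|$.

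The proof proceeds in five ingredients. \emph{(i) Initialization.} With the stated $\kappa$, Hoeffding over the $m$ i.i.d.\ neurons plus a union bound give $|f_{nn}(\x;\W(0))| = \mathcal{O}(\kappa)$ for any fixed $\x$ with $\|\x\|_2 = 1$, and $\|\K_{tr}(0) - \K^{\infty}_{tr}\|_2 + \|\K_{te}(0) - \K^{\infty}_{te}\|_\infty = \tilde{\mathcal{O}}(n/\sqrt{m})$, each with probability at least $1-\delta/3$. \emph{(ii) Weight boundedness.} A singular-value analysis of the linearized fixed point yields $\|\W^* - \W(0)\|_F = \mathcal{O}(\sqrt{n/\lambda})$, and the $\lambda$-strong convexity of $\Loss$ combined with a Lyapunov argument extends this to $\sup_{t\geq 0}\|\W(t) - \W(0)\|_F$ for the true flow. \emph{(iii) Kernel stability.} For a two-layer ReLU net at initialization scale $\kappa$, a weight displacement of radius $R$ flips the activation pattern of only $\tilde{\mathcal{O}}(R/\kappa)$ neurons per input with high probability, so that (ii) gives a bound of the form $\sup_{t\geq 0}\|\K(t)-\K(0)\|_2 = \mathrm{poly}(n,1/\kappa,1/\lambda)/\sqrt{m}$, which vanishes as $m \to \infty$. \emph{(iv) Coupling.} The true and linearized prediction vectors $\mathbf{u}(t)$ and $\tilde{\mathbf{u}}(t)$ satisfy linear ODEs differing only by this kernel perturbation, so a Gronwall estimate with the $e^{-\lambda t}$ contraction supplied by the ridge drift yields $\sup_{t\geq 0}\|\mathbf{u}(t)-\tilde{\mathbf{u}}(t)\|_2 \lesssim \lambda^{-1}\sup_{t\geq 0}\|\K(t)-\K(0)\|_2\cdot\|\Y\|_2$. \emph{(v) Test transfer.} Running the same coupling along the scalar output at $\x_{te}$ gives $|f_{nn}(\x_{te};\W(\infty))-\tilde f(\x_{te};\W^*)| = \mathcal{O}(\epsilon)$; combining with (i) and the triangle inequality completes the argument.

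The principal obstacle is step (iv): the Gronwall estimate must absorb kernel perturbations that decay only on the $\mathcal{O}(1/\lambda)$ time scale of the ridge contraction, uniformly across the entire range $\lambda \in (0,\sqrt{n})$ permitted by the hypothesis. Because $\lambda$ simultaneously governs the weight-displacement bound in (ii) and, through (iii), the activation-flip estimate, its dependence compounds in the final kernel-deviation bound; tracking this $\lambda$-dependence through each step without losing polynomial factors is what ultimately forces the width requirement $m = \Omega(n^8/(\kappa^2\epsilon^2\delta^4\lambda^6))$, and any looseness in the Lyapunov or Gronwall constants will visibly degrade this rate.
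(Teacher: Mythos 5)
Your overall architecture matches the paper's: linearize at $\W(0)$, couple the true and linearized output ODEs using the $e^{-\lambda t}$ ridge contraction, show the linearized fixed point is the finite-width kernel ridge predictor, and finish with kernel concentration and the triangle inequality. Steps (i), (iv), (v) and the fixed-point identification are exactly what the paper does. But there is a concrete gap in how your step (ii) feeds step (iii). Your (ii) only delivers a Frobenius bound $\|\W(t)-\W(0)\|_F = \mathcal{O}(\sqrt{n/\lambda})$, which carries no $m$-dependence; yet your (iii) invokes "a weight displacement of radius $R$ flips the activation pattern of only $\tilde{\mathcal{O}}(R/\kappa)$ neurons," which requires a \emph{per-neuron} radius $R = \max_r\|\w_r(t)-\w_r(0)\|_2$ that shrinks as $m\to\infty$. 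A Frobenius bound of order $\sqrt{n/\lambda}$ does not imply any individual neuron moves by $o(1)$, so the claimed kernel stability $\|\K(t)-\K(0)\|_2 = \mathrm{poly}(n,1/\kappa,1/\lambda)/\sqrt{m}$ does not follow from the ingredients you list. The paper closes this by proving the per-neuron bound directly (its Lemma 1): each neuron's gradient satisfies $\|\mathbf{Z}_r(t)\|_F \leq \sqrt{n/m}$, and integrating the per-neuron ODE against the $e^{-\lambda t}$ kernel gives $\|\w_r(t)-\w_r(0)\|_2 = \mathcal{O}\left(\frac{n}{\sqrt{\delta m}\,\lambda}\right)$ uniformly in $t$ and $r$; this $1/\sqrt{m}$ scaling is precisely what makes the anti-concentration/activation-flip argument yield a vanishing kernel perturbation. (One could instead salvage your route by a Markov count of the neurons displaced by more than $R$ and optimizing over $R$, but that gives a strictly worse $m^{-1/3}$ rate and you do not articulate it.)

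A second, smaller omission: because the regularizer contributes $-\lambda(\W(t)-\W(0))$ to the flow and ReLU homogeneity gives $\J(\X;t)^{\top}\operatorname{vec}(\W(t)) = f_{nn}(\X;\W(t))$, the true and linearized output ODEs differ not only by the kernel perturbation $\K(t)-\K(0)$ applied to the residual but also by the source term $\lambda\bigl(\J(\X;t)-\J(\X;0)\bigr)^{\top}\operatorname{vec}(\W(0))$. Your coupling step (iv) asserts the two ODEs "differ only by this kernel perturbation" and so drops this term. It is of the same order as the kernel-perturbation contribution (the paper bounds it by $\frac{2\sqrt{m}nR^{2}}{\sqrt{2\pi}\kappa\delta}$ in its Lemma 2) and must be tracked to obtain the stated width requirement.
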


For the proof of Theorem \ref{theorem1}, we first analyze the evolution of $\W(t)$ and prove the perturbation of parameters are small during training. Then we follow the ideas from \cite{du2018gradient,pmlr-v97-arora19a} to prove the perturbation of the kernel matrix can be bounded by the perturbation of parameters during training, and build the equivalence between the wide neural network and its linearization. After that, we borrow the lemma from \citep{pmlr-v97-arora19a} to establish the equivalence between the linearization of neural network and NTK predictor, then the theorem can be proved via triangle inequality. See Appendix B for the proof.

After proving Theorem \ref{theorem1}, the approximation error can be evaluated via simple analysis, which is shown in the following theorem. See Appendix B for the proof.

\begin{theorem}\label{theorem2}
Suppose $0<\lambda< n^{\frac{1}{2}},\ \kappa = \mathcal{O}\left(\frac{\sqrt{\delta} \lambda \epsilon}{n}\right)$, and $\ m = \Omega\left(\frac{n^8}{\kappa^2\epsilon^2\delta^4\lambda^6}\right)$,  $f_{nn}$ and $f_{ntk}$ are uniformly bounded over the unit sphere by a constant $C$, i.e. $|f_{nn}(\x)|<C$ and $|f_{ntk}(\x)|<C$ for all $\|\x\|_2 =1$. Then for any training point $\x_i \in \X$ and test point $\x_{te} \in \mathbb{R}^{d}$ with $\|\x_{t e}\|_2=1$, with probability at least $1 - \delta$ over random initialization, we have:
\begin{equation}
    |\mathcal{I}_{nn}(\x_i,\x_{te}) - \mathcal{I}_{ntk}(\x_i,\x_{te})|= \mathcal{O}(\epsilon).
\end{equation}
\end{theorem}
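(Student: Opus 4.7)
The plan is to reduce Theorem~\ref{theorem2} directly to Theorem~\ref{theorem1} via the elementary factorization of a difference of squares. Subtracting the two influence functions yields
\begin{align*}
\mathcal{I}_{nn}(\x_i,\x_{te}) - \mathcal{I}_{ntk}(\x_i,\x_{te}) =\ & \tfrac{1}{2}\bigl[(f_{nn}^{\backslash i}(\x_{te})-y_{te})^{2} - (f_{ntk}^{\backslash i}(\x_{te})-y_{te})^{2}\bigr] \\
& - \tfrac{1}{2}\bigl[(f_{nn}(\x_{te})-y_{te})^{2} - (f_{ntk}(\x_{te})-y_{te})^{2}\bigr].
\end{align*}
Each bracketed term factors as $(a-b)(a+b-2y_{te})$, so after one triangle inequality the task reduces to controlling the two pointwise discrepancies $|f_{nn}(\x_{te}) - f_{ntk}(\x_{te})|$ and $|f_{nn}^{\backslash i}(\x_{te}) - f_{ntk}^{\backslash i}(\x_{te})|$, each multiplied by a prefactor of the form $|f+g-2y_{te}|$.

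The prefactors are harmless under the uniform boundedness hypothesis: since $|f_{nn}(\x_{te})|,\,|f_{ntk}(\x_{te})| < C$ and $|y_{te}|\le 1$, the full-data prefactor is bounded by $2C+2$. I would read the same uniform-boundedness assumption as applying to the leave-one-out predictors $f_{nn}^{\backslash i}$ and $f_{ntk}^{\backslash i}$ as well, since they arise from the same training procedure on the $(n-1)$-point set, so the corresponding prefactor is again a constant independent of $\epsilon$. Hence the whole difference is bounded by a constant multiple of $|f_{nn}(\x_{te}) - f_{ntk}(\x_{te})| + |f_{nn}^{\backslash i}(\x_{te}) - f_{ntk}^{\backslash i}(\x_{te})|$.

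The final step is to invoke Theorem~\ref{theorem1} twice, once with the original training set and once with $\x_i$ removed. The second invocation uses the monotonicity of the hypotheses $\kappa = \mathcal{O}(\sqrt{\delta}\lambda\epsilon/n)$ and $m = \Omega(n^{8}/(\kappa^{2}\epsilon^{2}\delta^{4}\lambda^{6}))$ in the training-set size, so they are automatically inherited by the $(n-1)$-point problem. Each invocation (with failure probability reset to $\delta/2$) gives a pointwise error of $\mathcal{O}(\epsilon)$, and a union bound yields the claimed $\mathcal{O}(\epsilon)$ bound with probability at least $1-\delta$. The only real subtlety is this leave-one-out bookkeeping---verifying that Theorem~\ref{theorem1} can be quoted verbatim for the shrunken training set and that the two independent random initializations are handled correctly in the union bound; everything else is the factorization above plus the triangle inequality.
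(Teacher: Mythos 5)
Your proposal is correct and follows essentially the same route as the paper's proof: both reduce the claim to the two pointwise discrepancies $|f_{nn}(\x_{te}) - f_{ntk}(\x_{te})|$ and $|f_{nn}^{\backslash i}(\x_{te}) - f_{ntk}^{\backslash i}(\x_{te})|$ via a difference-of-squares factorization, absorb the prefactors using the uniform bound $C$ (implicitly extended to the leave-one-out predictors), and invoke Theorem~\ref{theorem1} for both the full and the leave-one-out training sets. You are in fact slightly more careful than the paper about the leave-one-out bookkeeping and the probability union bound, which the paper's proof glosses over.
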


Theorem \ref{theorem2} reveals that the IF calculated in the NTK regime can be arbitrarily close to the actual ones with high probability as long as the hidden layer is sufficiently wide. We compare the IFs calculated in the NTK regime and the ones obtained via leave-one-out retraining to verify our theory. In particular, we evaluate our method on MNIST \cite{726791} and CIFAR-10 \cite{krizhevsky2009learning} for two-layer ReLU neural networks with the width from $10^4$ to $8\times 10^4$ respectively. Details of experimental settings and more experiments can be seen in the Appendix A. The numerical results are shown in Figure \ref{calif_1} and Table \ref{calif_2} respectively. We find that the predicted IFs are highly close to the actual ones, with the Pearson correlation coefficient ($R$) and Spearman’s rank correlation coefficient ($\rho$) greater than 0.90 in general, and the approximate accuracy is increasing with the width of neural networks, which is consistent with Theorem \ref{theorem2}. 
\begin{figure}
\includegraphics[width=0.48\textwidth]{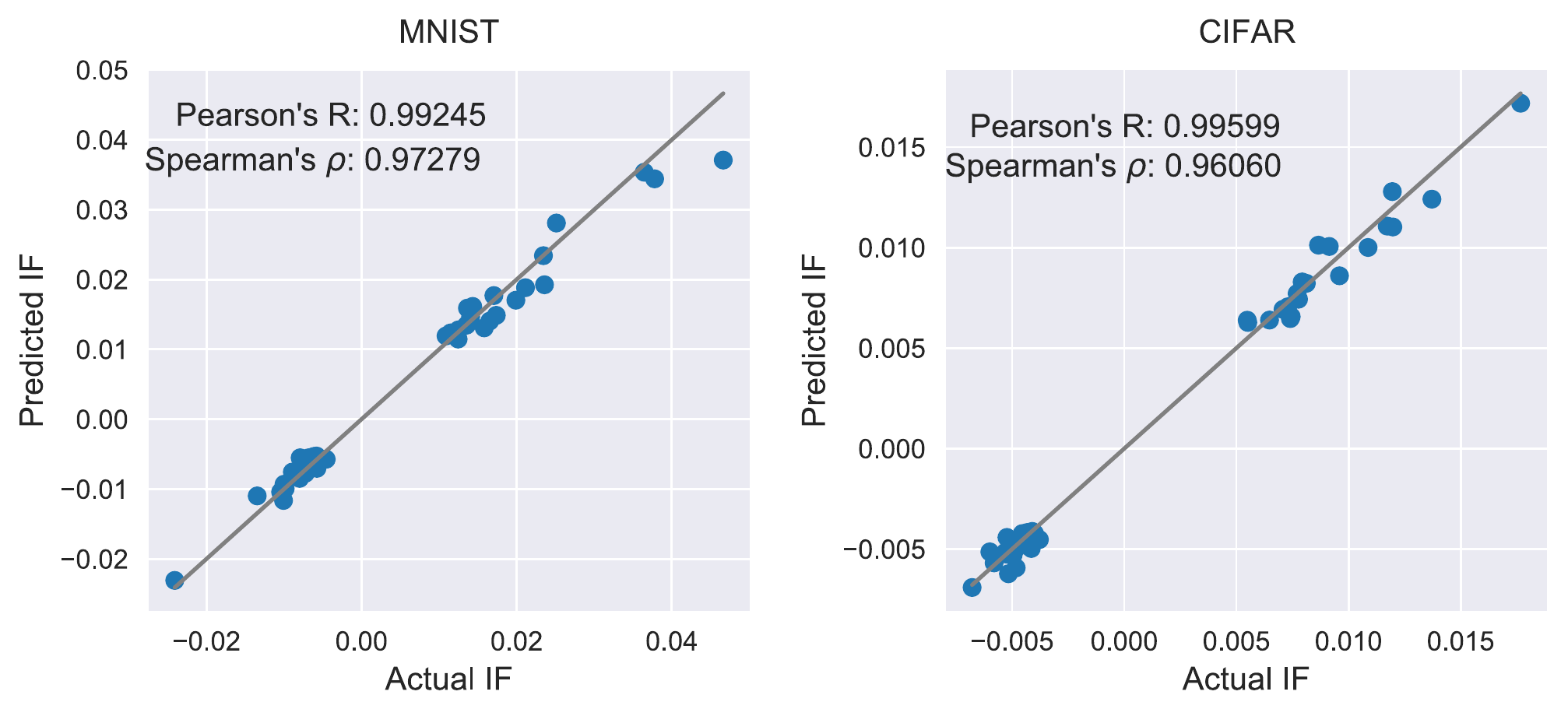} 
\caption{Comparison of predicted IFs and actual ones on the 40
most influential training points for a two-layer ReLU networks with $8\times 10^4$ neurons in the hidden layer.} 
\label{calif_1} 
\end{figure}

\begin{table}[]
\begin{tabular}{|l|c|c|c|c|}
\hline
\multicolumn{5}{|c|}{MNIST}                           \\ \hline
Width ($10^4$)    & 8      & 4      & 2      & 1      \\ \hline
Pearson's R       & 0.9955 & 0.9866 & 0.9802 & 0.9766 \\ \hline
Spearman's $\rho$ & 0.9728 & 0.9565 & 0.9531 & 0.9341 \\ \hline
\multicolumn{5}{|c|}{CIFAR}                           \\ \hline
Width ($10^4$)    & 8      & 4      & 2      & 1      \\ \hline
Pearson's R       & 0.9960 &   0.9904     & 0.9808 & 0.9683 \\ \hline
Spearman's $\rho$ & 0.9606 &    0.9146    & 0.8672 & 0.8583 \\ \hline
\end{tabular}
\caption{The correlation coefficients between actual and predicted IFs on MNIST and CIFAR respectively.} \label{calif_2}
\end{table}

\section{Error Analysis for the IHVP Method}
In this section, we aim to analyze the approximation error of the IHVP method \citep{pmlr-v70-koh17a} when calculating the IF in the over-parameterized regime. Our analysis also reveals when and why IHVP can be accurate or not, which theoretically explains the phenomenon that the regularization term controls the estimation accuracy of IFs proposed in \citep{basu2021influence}, and also brings new insights into the understanding of IHVP.
\subsection{The IHVP Method in Over-parameterized Regime}
Formally, we utilize $\hat{\mathcal{I}}_{nn}(\x_i,\x_{te})$ to denote the IF calculated by IHVP, which is defined as follows:
\begin{equation}\label{hessian-vector}
    \hat{\mathcal{I}}_{nn}(\x_i,\x_{te}) = \operatorname{vec}\left(\frac{\partial \ell(\x_{te})}{\partial \W}\right)^{\top}H(\infty)^{-1}\operatorname{vec}\left(\frac{\partial \ell(\x_{i})}{\partial \W}\right),
\end{equation}
where $\ell(\x) \triangleq \frac{1}{2}(f_{nn}(\x)-y) ^2$ and $H(\infty)\triangleq \sum_{i=1}^{n} \nabla_{\W}^{2} \ell\left(\x_{i}; \W(\infty)\right)+\lambda\mathbf{I}_{md}$ denote the mean-square loss and the Hessian matrix at the end of training respectively. Remarkably, the calculation of Equation (\ref{hessian-vector}) is infeasible due to the size of modern neural networks. To solve this problem, \citet{pmlr-v70-koh17a} and \citet{DBLP:conf/aaai/ChenLYWM21} utilized stochastic estimation or hypergradient-based optimization to calculate Equation (\ref{hessian-vector}) respectively. However, the gap between $\hat{\mathcal{I}}_{nn}(\x_i,\x_{te})$ and the actual IF is inevitable in general, and cannot be avoided by these methods. Thus in this work, we only analyze the error between $\hat{\mathcal{I}}_{nn}(\x_i,\x_{te})$ and the actual IF in the over-parameterized regime. The following proposition shows that $ \hat{\mathcal{I}}_{nn}(\x_i,\x_{te})$ tends to $ \hat{\mathcal{I}}_{ntk}(\x_i,\x_{te})$ when the width $m$ goes to infinity, where $ \hat{\mathcal{I}}_{ntk}(\x_i,\x_{te})$ denotes the IF calculated via IHVP method in the NTK regime.
\begin{proposition}\label{pop1}
For any $\x_{te} \in \R^d$ with $\|\x_{te}\|_2=1$, let the width $m \to \infty$, then with
probability arbitrarily close to 1 over random initialization, we have:
\begin{equation}\label{equa_if_hat}
\begin{aligned}
    \hat{\mathcal{I}}_{nn}(\x_i,\x_{te}) \to& 
    \alpha(\x_i,\x_{te})(f_{ntk}(\x_{te})-y_{te})(f_{ntk}(\x_{i})-y_{i}) \\ \triangleq &\hat{\mathcal{I}}_{ntk}(\x_i,\x_{te}),
\end{aligned}
\end{equation}
where $\alpha(\x_i,\x_{te}) \triangleq (\mathbf{K}_{te}^{\infty})^{\top}(\mathbf{K}_{tr}^{\infty}+\lambda \mathbf{I}_n)^{-1}\mathbf{e}_i$.
\end{proposition}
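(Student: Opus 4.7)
The plan is to exploit the piecewise-linear structure of ReLU to reduce the loss Hessian to a Gauss--Newton form, then use a push-through identity to convert the $md$-dimensional quadratic form into an $n$-dimensional one involving the empirical NTK, and finally pass to the infinite-width limit using Theorem \ref{theorem1}.

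First I would differentiate $\ell(\x) = \frac{1}{2}(f_{nn}(\x) - y)^2$ to obtain $\nabla_\W \ell(\x) = (f_{nn}(\x) - y)\nabla_\W f_{nn}(\x)$, and then $\nabla_\W^2 \ell(\x) = \nabla_\W f_{nn}(\x)\nabla_\W f_{nn}(\x)^\top + (f_{nn}(\x)-y)\nabla_\W^2 f_{nn}(\x)$. For the two-layer ReLU architecture in Equation \eqref{nn}, the per-neuron gradient $\frac{a_r}{\sqrt{m}} \mathbb{I}(\w_r^\top \x \geq 0)\x$ is locally constant in $\w_r$, so $\nabla_\W^2 f_{nn}(\x)$ vanishes almost everywhere and the second term drops out. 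Writing $\mathbf{g}(\x) \triangleq \operatorname{vec}(\nabla_\W f_{nn}(\x; \W(\infty)))$ and letting $\J \in \R^{md \times n}$ be the matrix whose $i$-th column is $\mathbf{g}(\x_i)$, I obtain the clean factorization $H(\infty) = \J\J^\top + \lambda \mathbf{I}_{md}$.

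Substituting into Equation \eqref{hessian-vector} yields
\[
\hat{\mathcal{I}}_{nn}(\x_i, \x_{te}) = (f_{nn}(\x_{te}) - y_{te})(f_{nn}(\x_i) - y_i)\, \mathbf{g}(\x_{te})^\top (\J\J^\top + \lambda \mathbf{I}_{md})^{-1} \mathbf{g}(\x_i).
\]
Since $\mathbf{g}(\x_i) = \J \mathbf{e}_i$, the push-through identity $(\J\J^\top + \lambda \mathbf{I}_{md})^{-1} \J = \J (\J^\top \J + \lambda \mathbf{I}_n)^{-1}$ (valid because $\lambda>0$) collapses the quadratic form to $(\J^\top \mathbf{g}(\x_{te}))^\top (\J^\top \J + \lambda \mathbf{I}_n)^{-1} \mathbf{e}_i$, an object that now lives in the $n$-dimensional kernel space rather than the $md$-dimensional parameter space. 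Letting $m \to \infty$, standard NTK concentration gives $[\J^\top \J]_{jk} \to \K^\infty(\x_j,\x_k)$ and $[\J^\top \mathbf{g}(\x_{te})]_j \to \K^\infty(\x_j, \x_{te})$ with probability approaching $1$. Continuity of matrix inversion at the positive-definite limit $\K_{tr}^\infty + \lambda \mathbf{I}_n$ pushes the $n$-dimensional form to $(\K_{te}^\infty)^\top (\K_{tr}^\infty + \lambda \mathbf{I}_n)^{-1} \mathbf{e}_i = \alpha(\x_i,\x_{te})$, while Theorem \ref{theorem1} gives $f_{nn}(\x) \to f_{ntk}(\x)$ for $\x\in\{\x_i,\x_{te}\}$. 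Combining the three limits by the continuous mapping theorem yields the claim.

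The main obstacle is that the Jacobians in $\J$ are evaluated at the trained parameters $\W(\infty)$, whereas the textbook NTK concentration statement is formulated at initialization $\W(0)$. To bridge this gap, I would reuse the parameter-perturbation bound established in the proof of Theorem \ref{theorem1} showing that $\|\W(\infty) - \W(0)\|_F$ is small in the over-parameterized NTK regime, together with the fact that the empirical kernel entries $\langle \nabla_\W f_{nn}(\x_j;\W), \nabla_\W f_{nn}(\x_k;\W)\rangle$ are locally constant in $\W$ except on a set of configurations where some $\w_r^\top \x$ is close to zero, whose probability over random initialization is arbitrarily small for large $m$. This transfers the concentration from $\W(0)$ to $\W(\infty)$ and justifies the two kernel limits stated above, completing the argument.
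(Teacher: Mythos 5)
Your proposal is correct and follows essentially the same route as the paper's own proof: the vanishing of $\nabla_\W^2 f_{nn}$ for the piecewise-linear ReLU network gives the Gauss--Newton form $H(\infty)=\J\J^\top+\lambda\mathbf{I}_{md}$, the push-through identity $(\J\J^\top+\lambda\mathbf{I}_{md})^{-1}\J=\J(\J^\top\J+\lambda\mathbf{I}_n)^{-1}$ collapses the quadratic form to $\K_{te}(t)^\top(\K_{tr}(t)+\lambda\mathbf{I}_n)^{-1}\mathbf{e}_i$, and the $m\to\infty$ limit is taken via NTK concentration together with Theorem \ref{theorem1}. Your final paragraph on transferring the kernel concentration from $\W(0)$ to $\W(\infty)$ via the parameter-perturbation bound makes explicit a step the paper leaves implicit, but the argument is the same.
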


For simplicity of analysis, we rewrite $\mathcal{I}_{ntk}(\x_i,\x_{te})$ as follows:
\begin{proposition}
For any $\x_{te} \in \R^d$ with $\|\x_{te}\|_2=1$ and $\x_i \in \X$, we have:
\begin{equation}\label{equa_if}
\begin{aligned}
    \mathcal{I}_{ntk}(\x_i,\x_{te}) &=  \underbrace{\alpha(\x_i,\x_{te})(f_{ntk}(\x_{te})-y_{te})(f_{ntk}^{\backslash i}(\x_{i})-y_{i})}_{\text{\uppercase\expandafter{\romannumeral1}}} \\ 
    &+\underbrace{\frac{1}{2}\alpha(\x_i,\x_{te})^2(f_{ntk}^{\backslash i}(\x_i) - y_i)^2}_{\text{\uppercase\expandafter{\romannumeral2}}}.
\end{aligned}
\end{equation}\label{pop2}
\end{proposition}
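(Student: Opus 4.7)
The plan is to expand the definition of $\mathcal{I}_{ntk}(\x_i,\x_{te})$ as a difference of squares and then substitute the closed-form leave-one-out formula for $f_{ntk}^{\backslash i}$ given earlier in the section. Writing $a := f_{ntk}^{\backslash i}(\x_{te})-y_{te}$ and $b := f_{ntk}(\x_{te})-y_{te}$, the elementary identity $\tfrac12(a^2-b^2) = \tfrac12(a-b)^2 + (a-b)b$ gives
\begin{equation*}
\mathcal{I}_{ntk}(\x_i,\x_{te}) = \tfrac12\bigl(f_{ntk}^{\backslash i}(\x_{te}) - f_{ntk}(\x_{te})\bigr)^2 + \bigl(f_{ntk}^{\backslash i}(\x_{te}) - f_{ntk}(\x_{te})\bigr)\bigl(f_{ntk}(\x_{te})-y_{te}\bigr).
\end{equation*}
Matching this against the claimed decomposition reduces the proposition to establishing the single algebraic identity
\begin{equation*}
f_{ntk}^{\backslash i}(\x_{te}) - f_{ntk}(\x_{te}) \;=\; \alpha(\x_i,\x_{te})\bigl(f_{ntk}^{\backslash i}(\x_i) - y_i\bigr).
\end{equation*}

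To prove this identity I would set $M := (\K_{tr}^{\infty}+\lambda\mathbf{I}_n)^{-1}$, so that $\mathbf{k}_{-i} = M\mathbf{e}_i$ and $k_{-ii} = [M]_{ii}$. Subtracting the two prediction formulas directly yields
\begin{equation*}
f_{ntk}^{\backslash i}(\x_{te}) - f_{ntk}(\x_{te}) \;=\; -\,\frac{(\K_{te}^{\infty})^{\top}M\mathbf{e}_i \cdot \mathbf{e}_i^{\top}M\Y}{[M]_{ii}} \;=\; -\,\frac{\alpha(\x_i,\x_{te})\,[M\Y]_i}{[M]_{ii}},
\end{equation*}
so it remains to show $f_{ntk}^{\backslash i}(\x_i) - y_i = -[M\Y]_i/[M]_{ii}$. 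For this I would apply the same leave-one-out formula at the training point $\x_i$ and invoke the resolvent identity $\K_{tr}^{\infty}M = \mathbf{I}_n - \lambda M$, which is immediate from the definition of $M$. By symmetry of $\K_{tr}^{\infty}$ we have $(\K_i^{\infty})^{\top} = \mathbf{e}_i^{\top}\K_{tr}^{\infty}$, hence $(\K_i^{\infty})^{\top}M = \mathbf{e}_i^{\top}(\mathbf{I}_n - \lambda M)$ and $(\K_i^{\infty})^{\top}M\mathbf{e}_i = 1 - \lambda[M]_{ii}$; expanding $f_{ntk}^{\backslash i}(\x_i)$ then produces two terms proportional to $\lambda[M\Y]_i$ that cancel exactly, leaving $y_i - [M\Y]_i/[M]_{ii}$ as required.

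Substituting the key identity back into the difference-of-squares expansion produces precisely the two summands (I) and (II) of the proposition. The argument is purely algebraic and presents no real obstacle; the only step demanding a moment of care is the cancellation of the $\lambda[M\Y]_i$ contributions inside $f_{ntk}^{\backslash i}(\x_i)$, which is exactly where the resolvent identity and the symmetry of $\K_{tr}^{\infty}$ play their role.
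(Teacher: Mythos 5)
Your proposal is correct and follows essentially the same route as the paper: expand the difference of squares around $\Delta f = f_{ntk}^{\backslash i}(\x_{te}) - f_{ntk}(\x_{te})$, then use the rank-one leave-one-out update together with the resolvent identity $\K_{tr}^{\infty}(\K_{tr}^{\infty}+\lambda\mathbf{I}_n)^{-1} = \mathbf{I}_n - \lambda(\K_{tr}^{\infty}+\lambda\mathbf{I}_n)^{-1}$ to show $f_{ntk}^{\backslash i}(\x_i) - y_i = -\mathbf{k}_{-i}^{\top}\Y/k_{-ii}$ and hence $\Delta f = \alpha(\x_i,\x_{te})(f_{ntk}^{\backslash i}(\x_i) - y_i)$. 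The cancellation of the $\lambda[M\Y]_i$ terms you flag is exactly the step carried out in the paper's computation.
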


The proof of these two propositions can be seen in the Appendix C. It is worth mentioning that the expectation of $\left|f_{ntk}(\x_{te}) - y_{te}\right|$ and $\left|f_{ntk}^{\backslash i}(\x_{i}) - y_{i}\right|$ both represent the generalization error of the model \citep{elisseeff2003leave}. And $\alpha(\x_i,\x_{te})$ represents the coefficient corresponding to $\x_i$ when projecting $\mathbf{J}({\x_{te}})$ onto $\mathbf{J}({\X})$, where $\mathbf{J}(\cdot)$ denotes the feature map in NTK. In general, $\alpha(\x_i,\x_{te})$ is distinctly smaller than $1$. Thus we have $|\text{term   }\text{\uppercase\expandafter{\romannumeral1}}| \gg |\text{term} \ \text{{\uppercase\expandafter{\romannumeral2}}}|$, which means it is reasonable to approximate $ \mathcal{I}_{ntk}(\x_i,\x_{te})$ as follows:
\begin{equation}\label{equa_approx}
     \mathcal{I}_{ntk}(\x_i,\x_{te}) \approx {\alpha(\x_i,\x_{te})(f_{ntk}(\x_{te})-y_{te})(f_{ntk}^{\backslash i}(\x_{i})-y_{i})}.
\end{equation}

By Comparing Equation (\ref{equa_approx}) with Equation (\ref{equa_if_hat}), we can obtain that the main approximation error is caused by the variance between $f^{\backslash i}_{ntk}(\x_i)$ and $f_{ntk}(\x_i)$, which plays a key role when bounding the approximation error of IHVP. In the following two subsections, we reveal why and how the regularization term and the probability density of the corresponding training points control the approximation error.

\subsection{The Effects of Regularization}
Although previous literature \citep{basu2021influence} has pointed out that the regularization term is essential to get high-quality IF estimates via numerical experiments, this phenomenon is not well-understood for neural networks in theory. In this subsection, we prove that the lower bound of the approximation error is controlled by the regularization parameter $\lambda$, and the least eigenvalue of NTK also plays a key role. The following theorem reveals the relationship between the approximation error and the regularization term for over-parameterized neural networks. See Appendix C for the proof.
\begin{theorem}
Given $\x_i \in \X$ and $\x_{te} \in \R^d$ with $\|\x_{te}\|_2=1$, we have:
\begin{equation}
\begin{aligned}
     &\left|\mathcal{I}_{ntk}(\x_i,\x_{te})- \hat{\mathcal{I}}_{ntk}(\x_i,\x_{te})\right| \\\geq& \frac{\lambda_{min}}{\lambda_{min}+\lambda}|\mathcal{I}_{ntk}(\x_i,\x_{te})|-\frac{1}{2}\alpha(\x_i,\x_{te})^2(f_{ntk}^{\backslash i}(\x_i) - y_i)^2,
\end{aligned}
\end{equation}
where $\lambda_{min}$ is the least eigenvalue of $\K_{tr}^{\infty}$. Furthermore, if $|{\alpha}(\x_i,\x_{te})| \leq \sqrt{\frac{\gamma}{n}}\|\boldsymbol{\alpha}(\x_{te}) \|_2$ for some $\gamma>0$, where $\boldsymbol{\alpha}(\x_{te}) \triangleq (\mathbf{K}_{te}^{\infty})^{\top}(\mathbf{K}_{tr}^{\infty}+\lambda \mathbf{I}_n)^{-1}$, and $\mathbb{E}_{\x\sim\mathcal{D}}[(f_{ntk}(\x)-y)^2]=\mathcal{O}({\sqrt{1/n}})$, then with probability at least $1-\delta$ we have:
\begin{equation}
\begin{aligned}
  &\left|\mathcal{I}_{ntk}(\x_i,\x_{te})- \hat{\mathcal{I}}_{ntk}(\x_i,\x_{te})\right|\\\geq& \frac{\lambda_{min}}{\lambda_{min}+\lambda}|\mathcal{I}_{ntk}(\x_i,\x_{te})|-\mathcal{O}(\frac{\gamma}{\delta\lambda n^{3/2}}).
\end{aligned}
\end{equation}\label{lowerbound}
\end{theorem}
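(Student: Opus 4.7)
The plan is to make the decompositions in Propositions~1 and 2 interact through the standard leave-one-out (LOO) identity for kernel ridge regression. Write $P_{ii}\triangleq [(\K_{tr}^{\infty}+\lambda\mathbf{I}_n)^{-1}]_{ii}$ and let $\mathrm{I}$, $\mathrm{II}$ denote the two summands in Proposition~2, so that $\mathcal{I}_{ntk}(\x_i,\x_{te})=\mathrm{I}+\mathrm{II}$ with $\mathrm{II}=\tfrac{1}{2}\alpha(\x_i,\x_{te})^2(f_{ntk}^{\backslash i}(\x_i)-y_i)^2\ge 0$. A short Sherman--Morrison calculation on the explicit formula for $f_{ntk}^{\backslash i}$ given earlier (equivalently, using the classical hat-matrix identity $1-[\K_{tr}^{\infty}(\K_{tr}^{\infty}+\lambda\mathbf{I}_n)^{-1}]_{ii}=\lambda P_{ii}$) yields the LOO residual identity
\[
  f_{ntk}^{\backslash i}(\x_i)-y_i \;=\; \frac{f_{ntk}(\x_i)-y_i}{\lambda P_{ii}},
\]
so that substituting into the formula for $\hat{\mathcal{I}}_{ntk}$ in Proposition~1 gives the clean structural relation $\hat{\mathcal{I}}_{ntk}(\x_i,\x_{te})=\lambda P_{ii}\cdot \mathrm{I}$. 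This is the identity that drives the whole proof.

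Next I would prove the deterministic inequality. Since the eigenvalues of $(\K_{tr}^{\infty}+\lambda\mathbf{I}_n)^{-1}$ lie in $[1/(\lambda_{max}+\lambda),\,1/(\lambda_{min}+\lambda)]$, we have $0\le \lambda P_{ii}\le \lambda/(\lambda_{min}+\lambda)\le 1$. Writing $\mathrm{I}=\mathcal{I}_{ntk}-\mathrm{II}$ and combining the reverse triangle inequality with $|\mathrm{I}|\le |\mathcal{I}_{ntk}|+|\mathrm{II}|$,
\begin{align*}
  |\mathcal{I}_{ntk}-\hat{\mathcal{I}}_{ntk}|
  &\ge |\mathcal{I}_{ntk}|-\lambda P_{ii}\,|\mathrm{I}|\\
  &\ge |\mathcal{I}_{ntk}|-\lambda P_{ii}\bigl(|\mathcal{I}_{ntk}|+|\mathrm{II}|\bigr)\\
  &\ge \tfrac{\lambda_{min}}{\lambda_{min}+\lambda}|\mathcal{I}_{ntk}|-|\mathrm{II}|,
\end{align*}
which is precisely the first claim of the theorem.

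For the probabilistic refinement I would bound $|\mathrm{II}|$ using the two extra hypotheses. The coordinate-regularity assumption $|\alpha(\x_i,\x_{te})|\le\sqrt{\gamma/n}\,\|\boldsymbol{\alpha}(\x_{te})\|_2$ replaces $\alpha^2$ by $\gamma\|\boldsymbol{\alpha}(\x_{te})\|_2^2/n$. The norm is handled through the quadratic-form identity $\|\boldsymbol{\alpha}(\x_{te})\|_2^2=\K_{te}^{\top}(\K_{tr}^{\infty}+\lambda\mathbf{I}_n)^{-2}\K_{te}$ together with the PSD bound $(\K_{tr}^{\infty}+\lambda\mathbf{I}_n)^{-2}\preceq \lambda^{-1}(\K_{tr}^{\infty}+\lambda\mathbf{I}_n)^{-1}$ and the unit-norm input hypothesis, which give the required $1/\lambda$ scaling. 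For the residual, exchangeability of the training sample together with the LOO identity propagates the assumed rate $\mathbb{E}_{\x\sim\mathcal{D}}[(f_{ntk}(\x)-y)^2]=\mathcal{O}(\sqrt{1/n})$ to $\mathbb{E}[(f_{ntk}^{\backslash i}(\x_i)-y_i)^2]=\mathcal{O}(\sqrt{1/n})$, and a Markov step then yields $(f_{ntk}^{\backslash i}(\x_i)-y_i)^2=\mathcal{O}(1/(\delta\sqrt{n}))$ with probability at least $1-\delta$. Multiplying these estimates produces $|\mathrm{II}|=\mathcal{O}(\gamma/(\delta\lambda n^{3/2}))$, and inserting this into the deterministic inequality gives the stated high-probability conclusion.

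The main obstacle will be the last step. Squeezing a genuine $\mathcal{O}(1/\lambda)$ (rather than the naive $\mathcal{O}(1/\lambda^2)$) scaling from $\|\boldsymbol{\alpha}(\x_{te})\|_2^2$ requires treating it as a quadratic form and peeling off one resolvent factor via the PSD inequality above, and transferring the $\mathcal{O}(\sqrt{1/n})$ average test-error rate into a pointwise LOO residual without paying an extra factor of $n$ forces us to invoke exchangeability rather than a union bound over $i$; these are the two technically delicate points that tie the inequality to exactly the rate claimed.
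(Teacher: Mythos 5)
Your proof is correct and follows essentially the same route as the paper's: the leave-one-out identity (in your notation $1-A_i=\lambda P_{ii}$, so $\hat{\mathcal{I}}_{ntk}=\lambda P_{ii}\cdot\mathrm{I}$), the triangle-inequality manipulation producing the $\frac{\lambda_{min}}{\lambda_{min}+\lambda}$ factor, and the Markov bound on the leave-one-out residual via the expected-generalization-error assumption are exactly the paper's steps. The only real divergence is the sub-lemma $\|\boldsymbol{\alpha}(\x_{te})\|_2^2=\mathcal{O}(1/\lambda)$: the paper obtains it by viewing $\boldsymbol{\alpha}_m(\x_{te})$ as the minimizer of a ridge objective over the finite-width Jacobians and comparing against the zero vector (with a double limit in $t$ and $m$), whereas your resolvent inequality $(\K_{tr}^{\infty}+\lambda\mathbf{I}_n)^{-2}\preceq\lambda^{-1}(\K_{tr}^{\infty}+\lambda\mathbf{I}_n)^{-1}$ combined with $(\K_{te}^{\infty})^{\top}(\K_{tr}^{\infty}+\lambda\mathbf{I}_n)^{-1}\K_{te}^{\infty}\le\K^{\infty}(\x_{te},\x_{te})=\tfrac{1}{2}$ is a cleaner, purely kernel-level argument reaching the same $\tfrac{1}{2\lambda}$ bound without ever invoking the finite-width network.
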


\noindent\textbf{Remark:} Similar to Theorem \ref{lowerbound}, we can give a upper bound of the error rate controlled by $\frac{\lambda_{max}}{\lambda_{max}+\lambda}$, where $\lambda_{max}$ is the largest eigenvalue of $\K_{tr}^{\infty}$. However, $\lambda_{max}$ is of order $\mathcal{O}{(n)}$ \citep{pmlr-v108-li20j}, hence the term $\frac{\lambda_{max}}{\lambda_{max}+\lambda}$ will close to $1$ and be meaningless in general.  
\citet{pmlr-v97-arora19a} proved that the generalization error $\mathbb{E}_{\x\sim\mathcal{D}}[(f_{ntk}(\x)-y)^2]$ is in the order of $\mathcal{O}({\sqrt{1/n}})$ for two-layer ReLU networks when the data is generated by certain functions in Theorem 5.1. Hence this assumption is reasonable.

Theorem \ref{lowerbound} reveals that the error rate can be lower bounded by $\frac{\lambda_{min}}{\lambda_{min}+\lambda}$. To verify our lower bound, we compare the theoretical lower bound of error rates with the mean actual error rates for different $\lambda$, ranging from $2^{-4}$ to $2^4$ respectively. Numerical experiment results reveal our bound can reflect the performance of IHVP well in the NTK regime (Figure \ref{lower_bound}).
\begin{figure}
\includegraphics[width=0.48\textwidth]{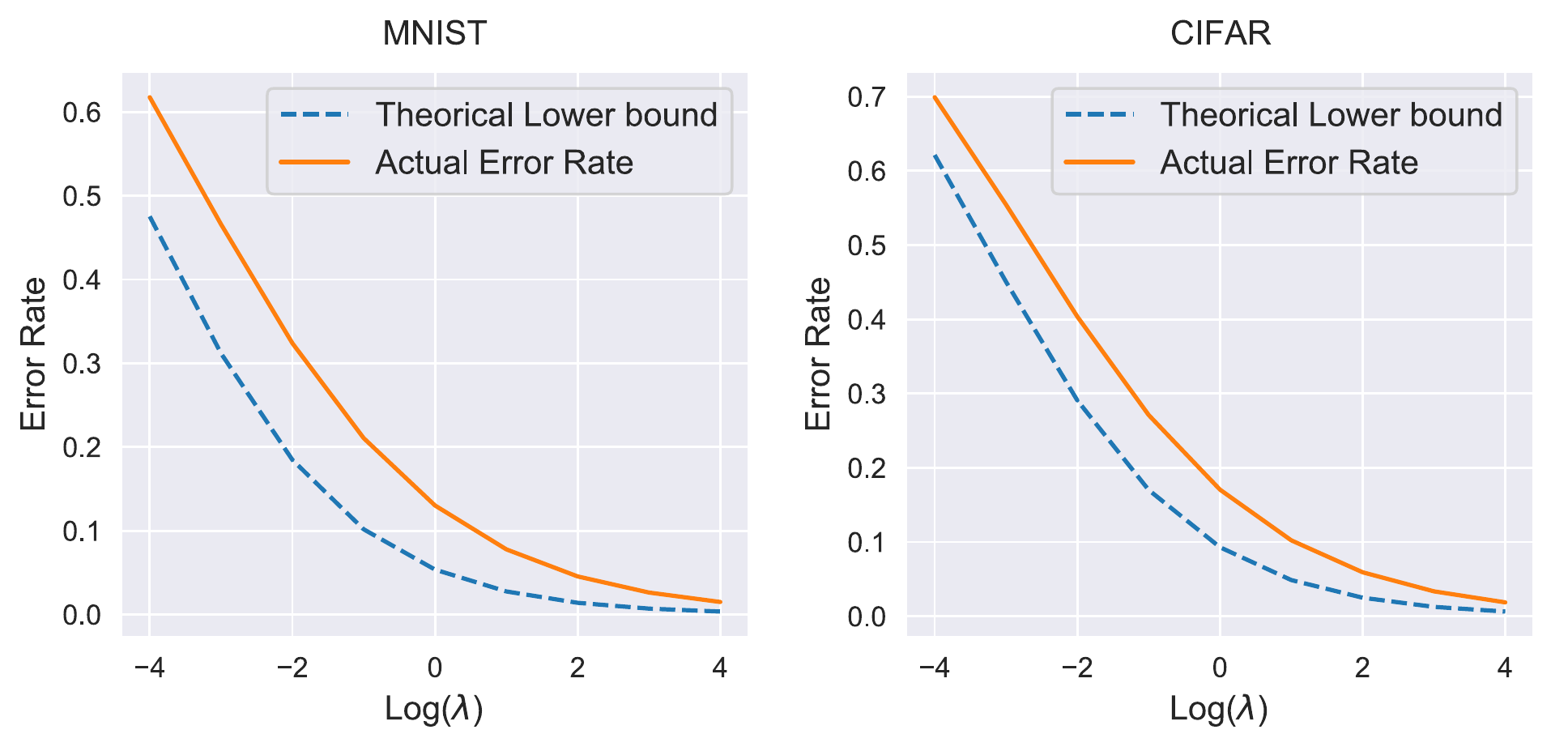}\centering
\caption{Comparison of the theoretical lower bound of error rates and the actual error rates for different $\lambda$ on MNIST and CIFAR respectively.} 
\label{lower_bound} 
\end{figure}

\subsection{ The Effects of Training Points}
In this subsection, we show that the approximation error is highly related to the probability density of corresponding training point, which has not been clearly revealed in previous literature. To model this phenomenon, we firstly assume the data sampled from the following finite mixture distribution.

\begin{definition}\label{assump}
Consider the data is generated from a finite mixture distribution $\mathcal{D}$ as follows. There are $K$ unknown distributions $\{\D_{k}\}_{k=1}^{K}$ over $\R^d$ with probabilities $\{p_k\}_{k=1}^{K}$ respectively, such that $\sum_{k=1}^{K}p_k=1$. Assume we sample $n_k$ data points from $\mathcal{D}_k$ respectively such that $\sum_{k=1}^{K}n_k = n$, and let $\X_k$ be the training data sampled from $D_k$. Let us define the support of a distribution $\mathcal{D}$ with density $p$ over $\R^{d}$ as $\operatorname{supp}(\mathcal{D}) \triangleq \{x: p(x)>0\}$, and the radius of a distribution $\mathcal{D}$ over  $\R^{d}$ as $r\left(\mathcal{D}\right)\triangleq \max\left\{\|\x-\mathbb{E}_{\mathbf{z}\sim \mathcal{D}}[\mathbf{z}]\|_2:\x\in \operatorname{supp}(\mathcal{D})\right\} .$ Then we make the following assumptions about the data.

\textbf{(A1)} (Sample Proportion) We have $n_k = p_k\cdot n$ for all $k\in [K]$ . 

\textbf{(A2)} (Uniformly Bounded) There exists a constant $r>0$, such that $r(\mathcal{D}_k)<r$ for all $k \in [K]$.  
\end{definition}

The above assumptions of data follow that of the previous works in \cite{NEURIPS2018_54fe976b,dong2019distillation,pmlr-v108-li20j}. We prove the following theorem which reveals the relationship between the approximation error for IFs of $\x_i\in \X_k$ and its corresponding probability density $p_k$. See Appendix C for the proof.

\begin{theorem}\label{theorem4}
Consider a dataset $(\X,\Y) = \{\x_i,y_i\}_{i=1}^n$ generated from the finite mixture model described in Definition \ref{assump}. Let $\boldsymbol{\alpha}(\x_i) \triangleq (\mathbf{K}_{tr}^{\infty})^{\top}(\mathbf{K}_{tr}^{\infty}+\lambda \mathbf{I}_n)^{-1}\mathbf{e}_i$. Suppose $|[\boldsymbol{\alpha}(\x_i)]_i|\leq \sqrt{\frac{\gamma}{n}}\|\boldsymbol{\alpha}(\x_{i}) \|_2$ for some $\gamma>0$, and $\lambda > \frac{\sqrt{2}\lambda_{max}\epsilon_r}{1-\sqrt{2}\epsilon_r}$, where $\epsilon_r^2 \triangleq 2r^2+\arccos(1-2r^2)$ is a small constant. Then for $\x_i \in \X_k$ and $\x_{te}\sim \mathcal{D}$, we have:
\begin{equation}
\begin{aligned}
         & \left|\mathcal{I}_{ntk}(\x_i,\x_{te})- \hat{\mathcal{I}}_{ntk}(\x_i,\x_{te})\right| \\\leq&
    \underbrace{\sqrt{\frac{\gamma}{n^2p_k}}|\mathcal{I}_{ntk}(\x_i,\x_{te})|}_{\text{{\uppercase\expandafter{\romannumeral1}}}} +   \underbrace{\frac{1}{2}\alpha(\x_i,\x_{te})^2(f_{ntk}^{\backslash i}(\x_i) - y_i)^2}_{\text{\uppercase\expandafter{\romannumeral2}}}.
\end{aligned}
\end{equation}
\end{theorem}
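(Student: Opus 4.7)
The plan is to isolate an algebraic identity that reduces the discrepancy $\mathcal{I}_{ntk}-\hat{\mathcal{I}}_{ntk}$ to a single leverage-type scalar, and then estimate that scalar using the mixture structure of Definition~\ref{assump}. First, I would subtract Proposition~\ref{pop1}'s expression for $\hat{\mathcal{I}}_{ntk}$ from the decomposition of $\mathcal{I}_{ntk}$ supplied by Proposition~2; the second summand $\frac{1}{2}\alpha(\x_i,\x_{te})^2(f_{ntk}^{\backslash i}(\x_i)-y_i)^2$ is already term~II of the target bound, so what remains is to control
\[
\alpha(\x_i,\x_{te})\bigl(f_{ntk}(\x_{te})-y_{te}\bigr)\bigl(f_{ntk}^{\backslash i}(\x_i)-f_{ntk}(\x_i)\bigr).
\]

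Next I would derive the classical leave-one-out identity for kernel ridge regression, by specialising the online-update formula already given in the paper to $\x_{te}=\x_i$ and combining the two facts $\K_{tr}^{\infty}(\K_{tr}^{\infty}+\lambda\mathbf{I}_n)^{-1}=\mathbf{I}_n-\lambda(\K_{tr}^{\infty}+\lambda\mathbf{I}_n)^{-1}$ and $[\boldsymbol{\alpha}(\x_i)]_i=1-\lambda k_{-ii}$:
\[
f_{ntk}(\x_i)-f_{ntk}^{\backslash i}(\x_i)=[\boldsymbol{\alpha}(\x_i)]_i\bigl(y_i-f_{ntk}^{\backslash i}(\x_i)\bigr).
\]
Substituting this back and using the split of Proposition~2 into terms~I and~II, the discrepancy collapses to $\mathcal{I}_{ntk}-\hat{\mathcal{I}}_{ntk}=[\boldsymbol{\alpha}(\x_i)]_i\,\mathcal{I}_{ntk}+(1-[\boldsymbol{\alpha}(\x_i)]_i)\cdot\text{II}$. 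Since $[\boldsymbol{\alpha}(\x_i)]_i$ is a diagonal entry of the PSD hat matrix $\K_{tr}^{\infty}(\K_{tr}^{\infty}+\lambda\mathbf{I}_n)^{-1}$, whose eigenvalues lie in $[0,1]$, it itself lies in $[0,1]$, and a single triangle inequality gives $|\mathcal{I}_{ntk}-\hat{\mathcal{I}}_{ntk}|\leq|[\boldsymbol{\alpha}(\x_i)]_i|\,|\mathcal{I}_{ntk}|+\text{II}$. The hypothesis $|[\boldsymbol{\alpha}(\x_i)]_i|\leq\sqrt{\gamma/n}\,\|\boldsymbol{\alpha}(\x_i)\|_2$ then reduces the whole theorem to the single bound $\|\boldsymbol{\alpha}(\x_i)\|_2\leq 1/\sqrt{np_k}$.

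To obtain this $\ell^{2}$ bound I would introduce a surrogate kernel $\K^{id}$ that collapses each $\x_j\in\X_l$ to $\mathbb{E}_{\mathbf{z}\sim\D_l}[\mathbf{z}]$; then $\K^{id}$ is block-constant across cluster pairs and has rank at most $K$. The NTK formula \eqref{ntk_closed} is Lipschitz in $\x^{\top}\x'$ on the unit sphere, so assumption~(A2) yields an entrywise gap of order $\epsilon_r$ between $\K_{tr}^{\infty}$ and $\K^{id}$, which aggregates to the operator-norm estimate $\|\K_{tr}^{\infty}-\K^{id}\|_2\leq\sqrt{2}\lambda_{max}\epsilon_r$ (this is the precise constant appearing in the $\lambda$-threshold). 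For the ideal kernel, a Woodbury computation combined with assumption~(A1) shows that $\boldsymbol{\alpha}^{id}(\x_i)$ is, up to a scalar, concentrated on the $n_k=np_k$ coordinates of cluster~$k$, so $\|\boldsymbol{\alpha}^{id}(\x_i)\|_2=\mathcal{O}(1/\sqrt{np_k})$.

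The main obstacle is propagating this bound from $\K^{id}$ back to $\K_{tr}^{\infty}$. I would use the resolvent identity
\[
(\K_{tr}^{\infty}+\lambda\mathbf{I}_n)^{-1}=(\K^{id}+\lambda\mathbf{I}_n)^{-1}-(\K^{id}+\lambda\mathbf{I}_n)^{-1}(\K_{tr}^{\infty}-\K^{id})(\K_{tr}^{\infty}+\lambda\mathbf{I}_n)^{-1},
\]
iterated as a Neumann series. Because $\lambda_{min}(\K^{id}+\lambda\mathbf{I}_n)\geq\lambda$, the $k$-th Neumann term has operator norm at most $(\sqrt{2}\lambda_{max}\epsilon_r/\lambda)^{k}$; the hypothesis $\lambda>\sqrt{2}\lambda_{max}\epsilon_r/(1-\sqrt{2}\epsilon_r)$ is precisely the threshold that keeps this geometric series summable with a constant prefactor, so that $\|\boldsymbol{\alpha}(\x_i)\|_2$ remains at the same $1/\sqrt{np_k}$ scale as $\|\boldsymbol{\alpha}^{id}(\x_i)\|_2$. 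Multiplying by $\sqrt{\gamma/n}$ delivers the $\sqrt{\gamma/(n^{2}p_k)}$ coefficient and closes the argument. The genuinely delicate step is ensuring that the spectral perturbation does not destroy the cluster-concentration structure of $\boldsymbol{\alpha}^{id}(\x_i)$, and the specific $\lambda$-threshold in the hypothesis is dictated by exactly this requirement.
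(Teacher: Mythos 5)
Your first half is correct and coincides with the paper's argument: the identity $\mathcal{I}_{ntk}-\hat{\mathcal{I}}_{ntk}=[\boldsymbol{\alpha}(\x_i)]_i\,\mathcal{I}_{ntk}+(1-[\boldsymbol{\alpha}(\x_i)]_i)\cdot\mathrm{II}$ via the leave-one-out relation $f_{ntk}(\x_i)-y_i=(1-[\boldsymbol{\alpha}(\x_i)]_i)(f_{ntk}^{\backslash i}(\x_i)-y_i)$, the triangle inequality using $0\le[\boldsymbol{\alpha}(\x_i)]_i\le 1$, and the reduction to showing $\|\boldsymbol{\alpha}(\x_i)\|_2\le 1/\sqrt{n_k}$ are all exactly what the paper does. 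The gap is in how you establish that $\ell^2$ bound. Your surrogate-kernel-plus-Neumann-series route controls only $\|\boldsymbol{\alpha}(\x_i)-\boldsymbol{\alpha}^{id}(\x_i)\|_2$, and since $\boldsymbol{\alpha}(\x_i)=\mathbf{e}_i-\lambda(\K_{tr}^{\infty}+\lambda\mathbf{I}_n)^{-1}\mathbf{e}_i$, the resolvent identity gives $\boldsymbol{\alpha}(\x_i)-\boldsymbol{\alpha}^{id}(\x_i)=\lambda(\K^{id}+\lambda\mathbf{I}_n)^{-1}(\K_{tr}^{\infty}-\K^{id})(\K_{tr}^{\infty}+\lambda\mathbf{I}_n)^{-1}\mathbf{e}_i$, whose norm is only bounded by $\|\K_{tr}^{\infty}-\K^{id}\|_2/\lambda$. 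Under the stated $\lambda$-threshold this ratio is merely less than $1$, i.e.\ a \emph{constant}, which swamps the $1/\sqrt{np_k}$ target; summability of the Neumann series does not rescue this, because a constant-order additive perturbation destroys the claimed scale of $\|\boldsymbol{\alpha}(\x_i)\|_2$ whenever $np_k\gg 1$. You flag this yourself as ``the genuinely delicate step,'' but as sketched it fails rather than being merely delicate. A secondary problem is the asserted operator-norm bound $\|\K_{tr}^{\infty}-\K^{id}\|_2\le\sqrt{2}\lambda_{max}\epsilon_r$: writing both kernels as Gram matrices of feature maps and using (A2) naturally yields something like $2\sqrt{n\lambda_{max}}\,\epsilon_r$, not the constant you reverse-engineered from the hypothesis, and the $\arccos$ term in $\epsilon_r^2$ reflects that the NTK is not Lipschitz in $\x^{\top}\x'$ near $1$, so an ``entrywise Lipschitz'' aggregation is not available.

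The paper avoids perturbation theory entirely with a variational comparison. It observes that $\boldsymbol{\alpha}(\x_i)$ is the minimizer of $\frac{1}{2}\|\J(\x_i)-\J(\X)\boldsymbol{\alpha}\|_2^2+\frac{\lambda}{2}\|\boldsymbol{\alpha}\|_2^2$, hence also the minimum-norm point of the constrained problem $\min\|\boldsymbol{\alpha}\|_2^2$ subject to $\|\J(\x_i)-\J(\X)\boldsymbol{\alpha}\|_2^2\le\epsilon_{\lambda}^2$, where $\epsilon_{\lambda}\ge\frac{1}{\sqrt{2}}\frac{\lambda}{\lambda+\lambda_{max}}$ is the residual attained by the minimizer itself. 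It then exhibits the explicit candidate $\hat{\boldsymbol{\alpha}}$ equal to $1/n_k$ on the coordinates of cluster $k$ and $0$ elsewhere; assumption (A2) and the closed form \eqref{ntk_closed} give $\|\J(\x_i)-\J(\X)\hat{\boldsymbol{\alpha}}\|_2\le\epsilon_r$, and the $\lambda$-threshold is exactly the condition $\epsilon_r\le\epsilon_{\lambda}$ making $\hat{\boldsymbol{\alpha}}$ feasible. Optimality then gives $\|\boldsymbol{\alpha}(\x_i)\|_2\le\|\hat{\boldsymbol{\alpha}}\|_2=1/\sqrt{n_k}$ directly, with no need to show $\boldsymbol{\alpha}(\x_i)$ is \emph{close} to any ideal vector. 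If you want to salvage your write-up, replace the Neumann-series step with this feasibility comparison.
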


\noindent\textbf{Remark:}
 Notice that term $\text{\uppercase\expandafter{\romannumeral1}}$  decreases with $p_k$, and term $\text{\uppercase\expandafter{\romannumeral2}}$ represents the leave-one-out error which also decreases with $p_k$ in general. In the theorem, we require $\lambda>\frac{\sqrt{2} \lambda_{max}\epsilon_r}{1-\sqrt{2}\epsilon_r}$, which can be well controlled when $r$ is small, and our numerical experiments show the phenomenon is founded whether $\lambda$ is large or small.

\begin{figure}
\includegraphics[width=0.48\textwidth]{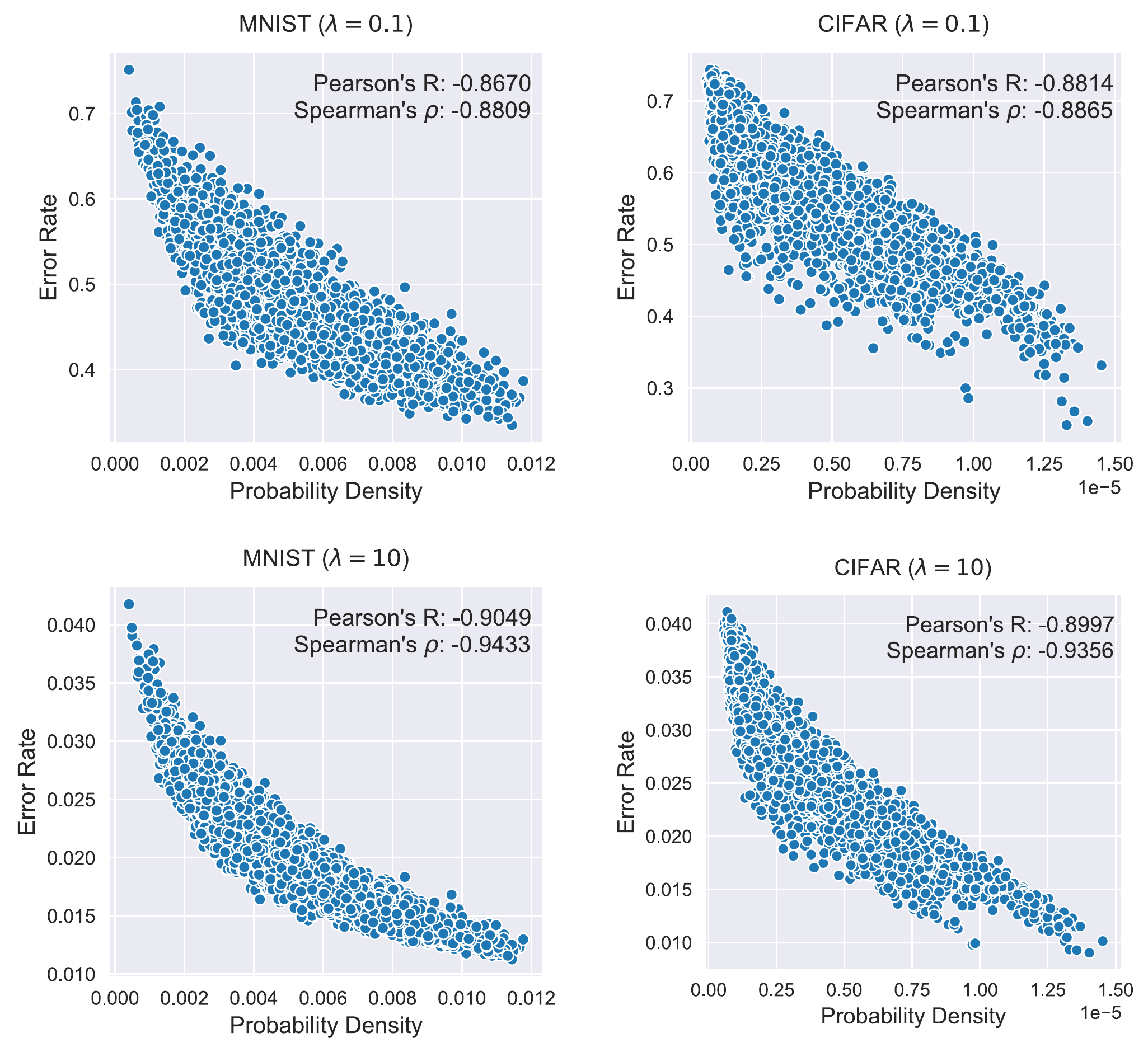}
\caption{The probability density of training data and its corresponding error rate of estimating the IFs via IHVP on MNIST and CIFAR, respectively. Density for training data is calculated through the Gaussian kernel density estimate method.} 
\label{pd} 
\end{figure}

We can see that the approximation error rate of the classic IHVP method has a significant correlation with the probability density of corresponding training points no matter $\lambda$ is large or not, which confirms our conclusion in the NTK regime (Figure \ref{pd}). More experiments on simulated data can be seen in the Appendix A.

\section{Towards Understanding of IFs}
\begin{figure}[h] \centering 
\includegraphics[width=0.48\textwidth]{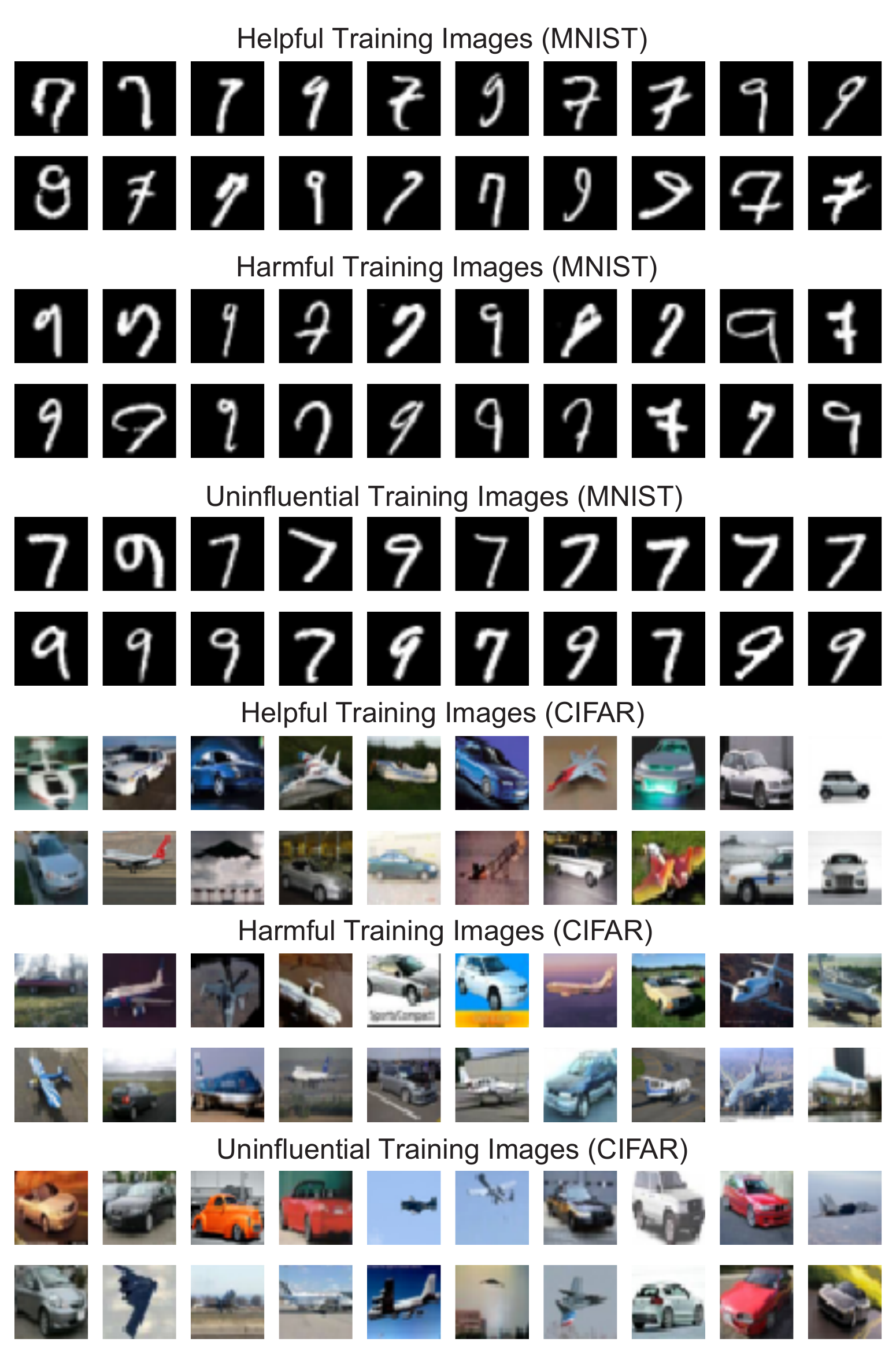}
\caption{The most helpful, harmful and uninfluential images in MNIST and CIFAR respectively.} 
\label{comp1} 
\end{figure}

The NTK theory can not only be utilized to calculate the IFs and estimate the approximation error, but also help to understand IFs better from two new views. On the one hand, we can quantify the complexity for the influential samples through the complexity metric in the NTK theory. On the other hand, we can trace the influential samples during the training process through the linear ODE depicting the training dynamics.

\subsection{Quantify the Complexity for Influential Samples}
After calculating the IF for every training point, one natural question is that what is the difference between influential samples and uninfluential ones. It is interesting to see in Figure \ref{comp1} that the uninfluential samples seem to be homogenized, while the most influential ones tend to be more complicated. For instance, planes in the uninfluential groups have the backgrounds of daylight or airport in general, while the backgrounds in the influential groups tend to be more diverse, such as grassland, dusk, midnight. To quantify this phenomenon, we borrow the complexity metric in NTK theory and define the complexity of ${\X}_{\mathcal{I}} \subset \X$ as follows:
\begin{equation}
    \mathcal{C}({\X}_{\mathcal{I}}) \triangleq \sqrt{{\Y^{\top}(\K^{\infty}_{tr})^{-1}\Y}} - \sqrt{{({\Y}^{\backslash \mathcal{I}})^{\top}({\K}^{\backslash \mathcal{I}}_{tr})^{-1}{\Y}^{\backslash \mathcal{I}}}},
\end{equation}
where ${\Y}^{\backslash \mathcal{I}}$ and $
{\K}_{tr}^{\backslash \mathcal{I}}$ denote the label sets and kernel matrix 
constituted without ${\X}_\mathcal{I}$ respectively. On the one hand, for a function $f(\x) = \sum_{i=1}^{n}\beta_{i}k(\x,\x_i)$ in the reproducing kernel Hilbert space (RKHS) $\mathbb{H}$, its RKHS norm is $\|f\|_{\mathbb{H}} = \sqrt{\boldsymbol{\beta}^{T}\K\boldsymbol{\beta}}$ \citep{mohri2018foundations}. Hence for $f_{ntk}(\x)$ in the NTK regime, we have $\|f_{ntk}\|_{\mathbb{H}} = \sqrt{{\Y^{\top}(\K^{\infty}_{tr})^{-1}\Y}}$, and $\mathcal{C}({\mathcal{X}}_{\mathcal{I}})$ actually denotes the increment of RKHS norm contributed from the training data ${\X}_{\mathcal{I}}$. On the other hand, $\sqrt{{\Y^{\top}(\K^{\infty}_{tr})^{-1}\Y}}$ controls the upper bound of Rademacher complexity for a class of neural networks, and thus controls the test error \citep{pmlr-v97-arora19a}.

Next, we explore the relationship between the complexity and the IF for each training point. In detail, we divide the training set into ten groups sorted by their IFs, from the most harmful groups to the most helpful groups. Then we calculate $\mathcal{C}(\X_{\mathcal{I}})$ for each group, and it is interesting to see that the more influential data make the model more complicated. Two most influential groups (Group 0 and 9) contribute the most to the model complexity (Figure \ref{comp2}). The results indicate that the most helpful data increase the complexity and the generalization ability of the model. In contrast, the most harmful data increase the complexity of the model but hurts the generalization.

\begin{figure}[t]
	\includegraphics[width=0.5\textwidth]{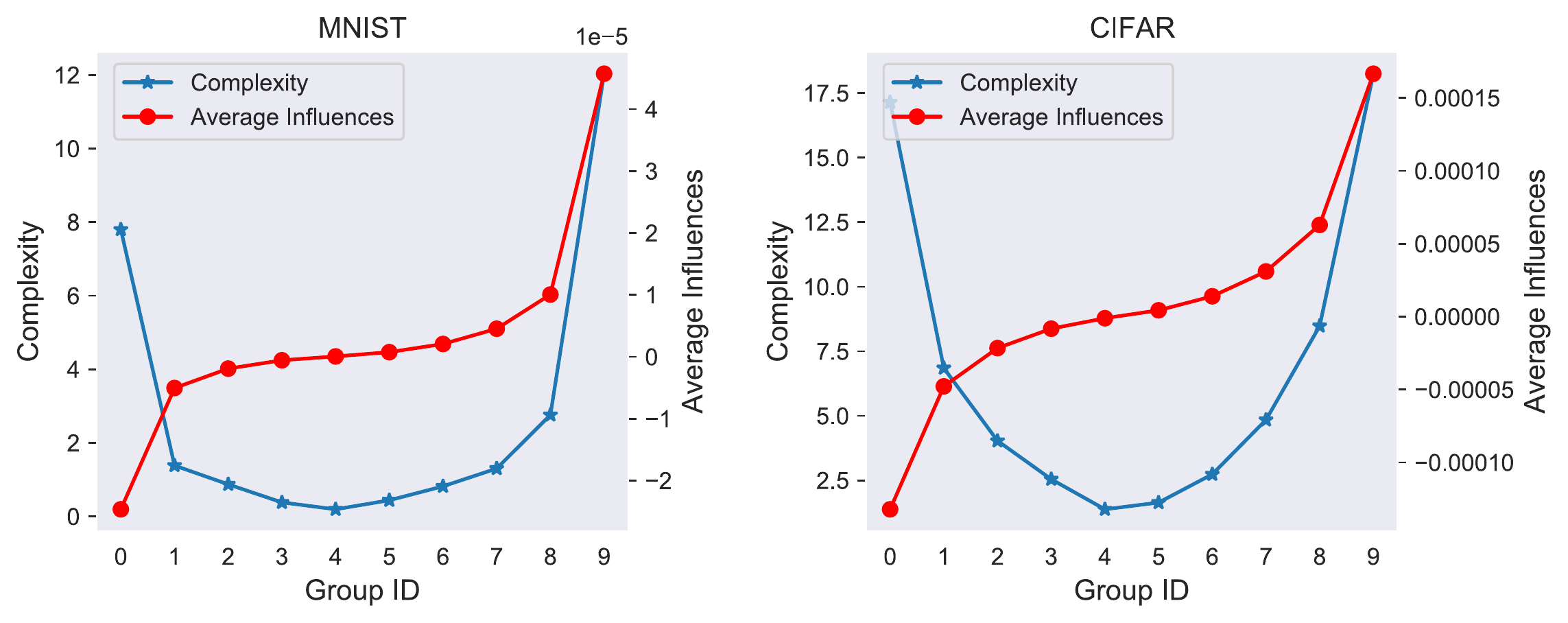}
	\caption{The relationship between complexity and influence of each group divided by their IFs on MNIST and CIFAR respectively. Notice that Group 0 and 9 denote the most harmful and helpful groups respectively, which contribute the most to the model complexity.} 
	\label{comp2} 
\end{figure}

\begin{figure}[!h]
	\includegraphics[width=0.49\textwidth]{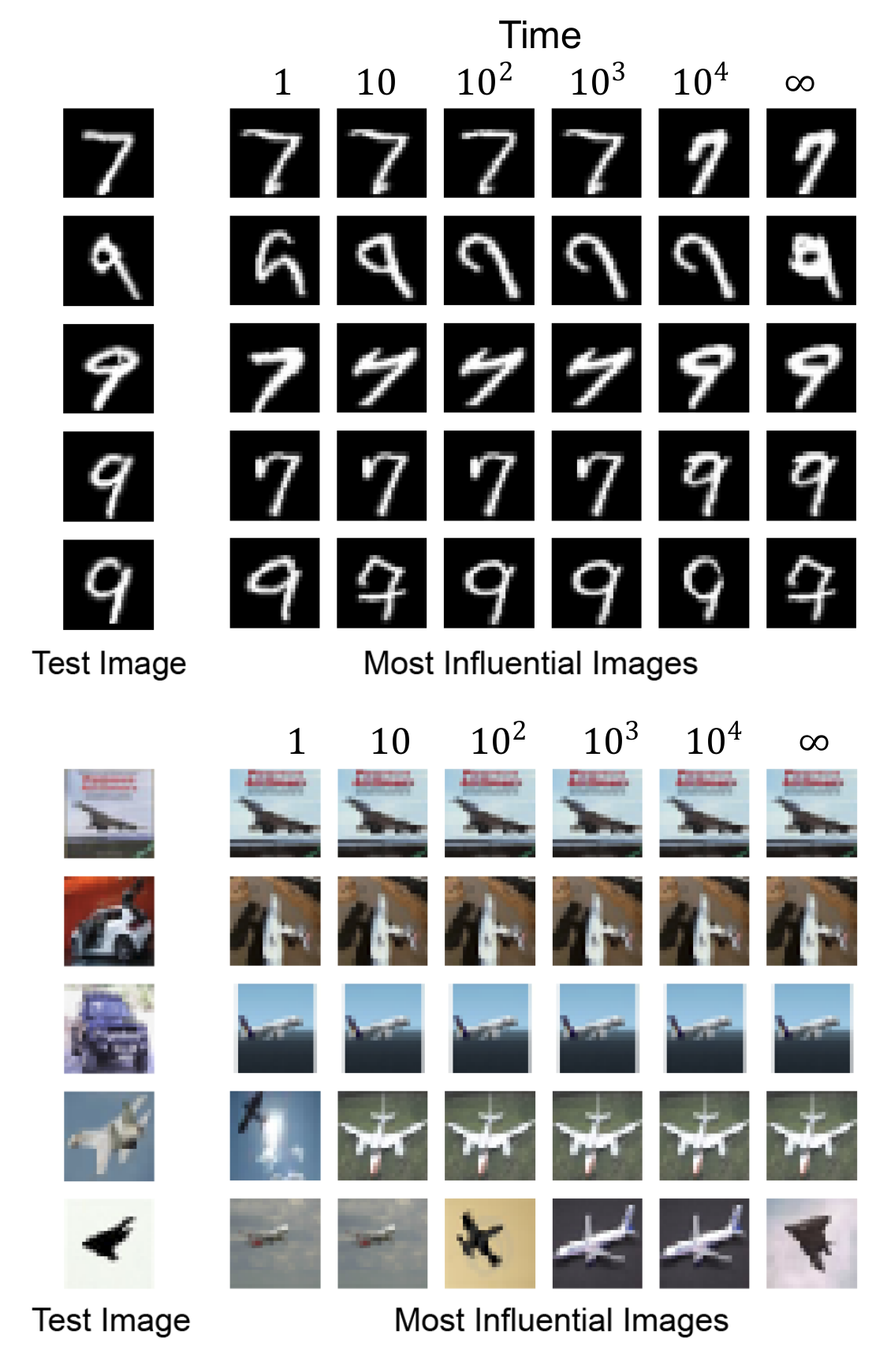}
	\caption{Variation of the most influential images for some test data on MNIST and CIFAR during the training dynamics.} 
	\label{dynamic1} 
\end{figure}

\subsection{Track Influential Samples During Training}
Different with IHVP, which can only calculate the IF at the end of training, the NTK theory depicts the training dynamics via the following linear ODE:
\begin{equation}
    f_{ntk}(\x_{te};t)=(\K^{\infty}_{te})^{\top}\left(\K^{\infty}_{tr} \right)^{-1}\left(\mathbf{I}_{n}-\exp\left({{-\frac{2 t}{n} \K^{\infty}_{tr}}}\right)\right) \Y,
\end{equation}
hence we can track influential samples efficiently during the training process. The most influential images for certain test points do not keep constant in general (Figure \ref{dynamic1}). Considering the variation of IFs in the different processes can help to understand the model behavior better. For instance, we track the most influential images for every test point in the presence of label noise during the training process and record the proportion of the clean and noise data in the most influential data in Figure \ref{dynamic2} respectively.

At the beginning of the training process, most test images are affected by the clean samples (Figure \ref{dynamic2}). However, as the training progresses, the influence of noise samples begin to dominate the training, which means that the model begins to learn the noise data. After that, the model has learned the label noise, thus the influence of the noise samples begin to decrease gradually. Numerical experiments reveal the variation of the most influential samples in the presence of label noise and help us understand why early-stopping can prevent over-fitting from the perspective of IFs (Figure \ref{dynamic2}).

\begin{figure}[htb]
\includegraphics[width=0.49\textwidth]{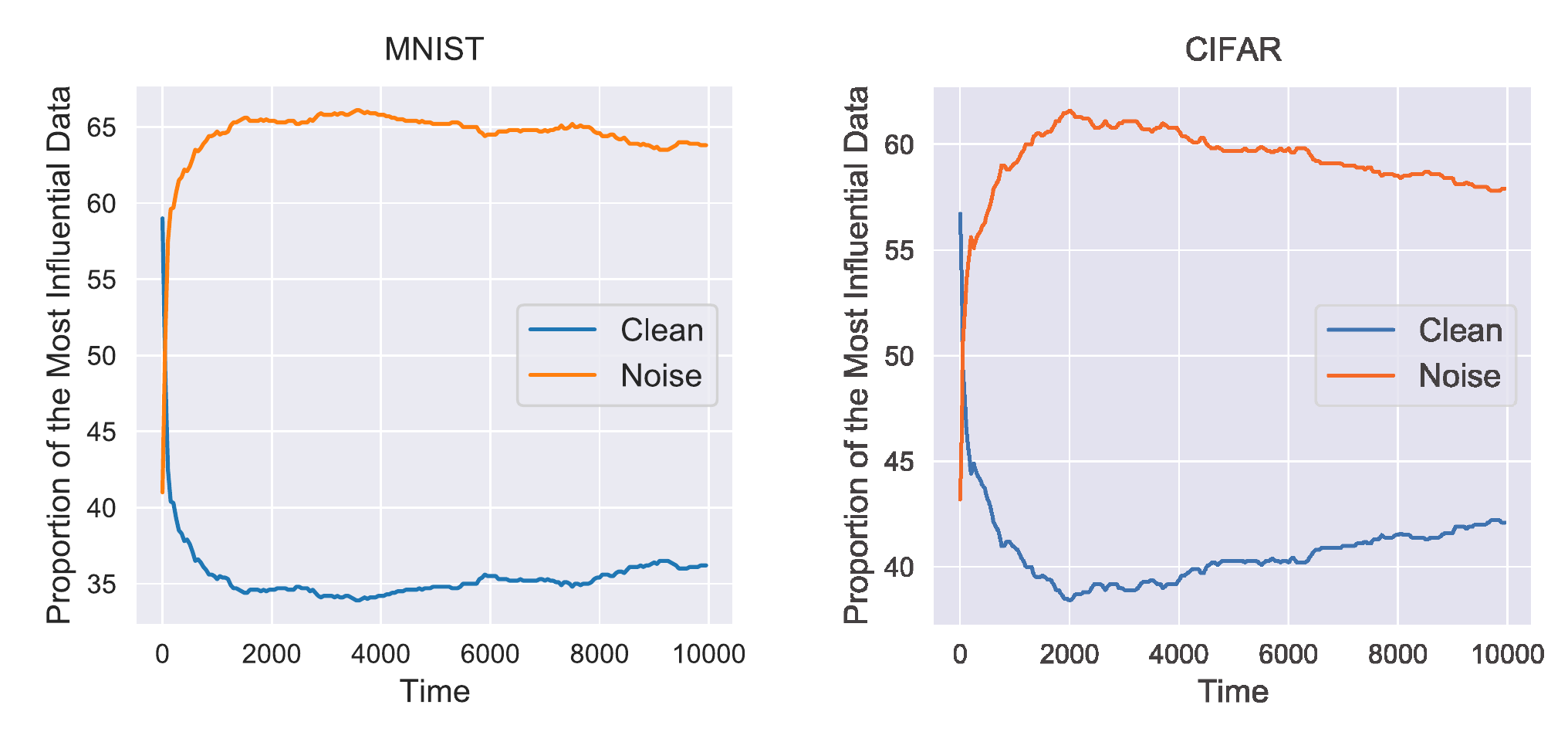}
\caption{The proportion of the most influential images for 1000 test images on MNIST and CIFAR with the presence of label noise during the training dynamics. We flip the labels of 40\% training data to simulate the label noise.} 
\label{dynamic2} 
\end{figure}

\section{Conclusion}
In this paper, we calculate the IFs for over-parameterized neural networks and utilize the NTK theory to understand when and why the IHVP method fails or not. At last, we quantify the complexity and track the variation of influential samples. Our research can help understand IFs better in the regime of neural networks and bring new insights for explaining the training process through IFs. Some future works are mentioned as follows:
(1) Explore the approximation error caused by stochastic estimation in IHVP; (2) Generalize our results in Theorem \ref{theorem2} to broad scenarios, such as deep neural networks, convolutional neural networks and non-convex loss functions; (3) Track the IFs to reveal what does neural networks learn during training dynamics.

\clearpage
\section{Acknowledgments}
We thank reviewers for their helpful advice. Rui Zhang would like to thank Xingbo Du for his valuable and detailed feedback to improve the writing. This work has been supported by the National Key Research and Development Program of China [2019YFA0709501]; the Strategic Priority Research Program of the Chinese Academy of Sciences (CAS) [XDA16021400, XDPB17]; the National Natural Science Foundation of China [61621003]; the National Ten Thousand Talent Program for Young Top-notch Talents. 

\bibliography{aaai22}

\begin{thebibliography}{32}
\providecommand{\natexlab}[1]{#1}

\bibitem[{Allen-Zhu, Li, and Song(2019)}]{pmlr-v97-allen-zhu19a}
Allen-Zhu, Z.; Li, Y.; and Song, Z. 2019.
\newblock A Convergence Theory for Deep Learning via Over-Parameterization.
\newblock In \emph{Proceedings of the 36th International Conference on Machine
  Learning}, volume~97 of \emph{Proceedings of Machine Learning Research},
  242--252. PMLR.

\bibitem[{Arora et~al.(2019{\natexlab{a}})Arora, Du, Hu, Li, and
  Wang}]{pmlr-v97-arora19a}
Arora, S.; Du, S.; Hu, W.; Li, Z.; and Wang, R. 2019{\natexlab{a}}.
\newblock Fine-Grained Analysis of Optimization and Generalization for
  Overparameterized Two-Layer Neural Networks.
\newblock In \emph{Proceedings of the 36th International Conference on Machine
  Learning}, volume~97 of \emph{Proceedings of Machine Learning Research},
  322--332. PMLR.

\bibitem[{Arora et~al.(2019{\natexlab{b}})Arora, Du, Hu, Li, Salakhutdinov, and
  Wang}]{NEURIPS2019_dbc4d84b}
Arora, S.; Du, S.~S.; Hu, W.; Li, Z.; Salakhutdinov, R.~R.; and Wang, R.
  2019{\natexlab{b}}.
\newblock On Exact Computation with an Infinitely Wide Neural Net.
\newblock In \emph{Advances in Neural Information Processing Systems},
  volume~32. Curran Associates, Inc.

\bibitem[{Basu, Pope, and Feizi(2021)}]{basu2021influence}
Basu, S.; Pope, P.; and Feizi, S. 2021.
\newblock Influence Functions in Deep Learning Are Fragile.
\newblock In \emph{International Conference on Learning Representations}.

\bibitem[{Cao and Gu(2019)}]{NEURIPS2019_cf9dc5e4}
Cao, Y.; and Gu, Q. 2019.
\newblock Generalization Bounds of Stochastic Gradient Descent for Wide and
  Deep Neural Networks.
\newblock In \emph{Advances in Neural Information Processing Systems},
  volume~32. Curran Associates, Inc.

\bibitem[{Chen et~al.(2021)Chen, Li, Yu, Wu, and
  Miao}]{DBLP:conf/aaai/ChenLYWM21}
Chen, Y.; Li, B.; Yu, H.; Wu, P.; and Miao, C. 2021.
\newblock HyDRA: Hypergradient Data Relevance Analysis for Interpreting Deep
  Neural Networks.
\newblock In \emph{Thirty-Fifth {AAAI} Conference on Artificial Intelligence},
  7081--7089. {AAAI} Press.

\bibitem[{Csató and Opper(2002)}]{6790153}
Csató, L.; and Opper, M. 2002.
\newblock Sparse On-Line Gaussian Processes.
\newblock \emph{Neural Computation}, 14(3): 641--668.

\bibitem[{Dong et~al.(2019)Dong, Hou, Lu, and Zhang}]{dong2019distillation}
Dong, B.; Hou, J.; Lu, Y.; and Zhang, Z. 2019.
\newblock Distillation $\approx$ Early Stopping? Harvesting Dark Knowledge
  Utilizing Anisotropic Information Retrieval for Overparameterized Neural
  Network.
\newblock \emph{arXiv preprint arXiv:1910.01255}.

\bibitem[{Du et~al.(2019{\natexlab{a}})Du, Hou, Salakhutdinov, Poczos, Wang,
  and Xu}]{NEURIPS2019_663fd3c5}
Du, S.~S.; Hou, K.; Salakhutdinov, R.~R.; Poczos, B.; Wang, R.; and Xu, K.
  2019{\natexlab{a}}.
\newblock Graph Neural Tangent Kernel: Fusing Graph Neural Networks with Graph
  Kernels.
\newblock In \emph{Advances in Neural Information Processing Systems},
  volume~32. Curran Associates, Inc.

\bibitem[{Du et~al.(2019{\natexlab{b}})Du, Zhai, Poczos, and
  Singh}]{du2018gradient}
Du, S.~S.; Zhai, X.; Poczos, B.; and Singh, A. 2019{\natexlab{b}}.
\newblock Gradient Descent Provably Optimizes Over-parameterized Neural
  Networks.
\newblock In \emph{International Conference on Learning Representations}.

\bibitem[{Elisseeff, Pontil et~al.(2003)}]{elisseeff2003leave}
Elisseeff, A.; Pontil, M.; et~al. 2003.
\newblock Leave-One-Out Error and Stability of Learning Algorithms with
  Applications.
\newblock \emph{NATO Science Series III: Computer and Systems Sciences}, 190:
  111--130.

\bibitem[{Engel, Mannor, and Meir(2004)}]{1315946}
Engel, Y.; Mannor, S.; and Meir, R. 2004.
\newblock The Kernel Recursive Least-Squares Algorithm.
\newblock \emph{IEEE Transactions on Signal Processing}, 52(8): 2275--2285.

\bibitem[{Hampel(1974)}]{doi:10.1080/01621459.1974.10482962}
Hampel, F.~R. 1974.
\newblock The Influence Curve and its Role in Robust Estimation.
\newblock \emph{Journal of the American Statistical Association}, 69(346):
  383--393.

\bibitem[{Hu, Li, and Yu(2020)}]{Hu2020Simple}
Hu, W.; Li, Z.; and Yu, D. 2020.
\newblock Simple and Effective Regularization Methods for Training on Noisily
  Labeled Data with Generalization Guarantee.
\newblock In \emph{International Conference on Learning Representations}.

\bibitem[{Jacot, Gabriel, and Hongler(2018)}]{NEURIPS2018_5a4be1fa}
Jacot, A.; Gabriel, F.; and Hongler, C. 2018.
\newblock Neural Tangent Kernel: Convergence and Generalization in Neural
  Networks.
\newblock In \emph{Advances in Neural Information Processing Systems},
  volume~31. Curran Associates, Inc.

\bibitem[{Koh and Liang(2017)}]{pmlr-v70-koh17a}
Koh, P.~W.; and Liang, P. 2017.
\newblock Understanding Black-box Predictions via Influence Functions.
\newblock In \emph{Proceedings of the 34th International Conference on Machine
  Learning}, volume~70 of \emph{Proceedings of Machine Learning Research},
  1885--1894. PMLR.

\bibitem[{Koh et~al.(2019)Koh, Ang, Teo, and Liang}]{NEURIPS2019_a78482ce}
Koh, P. W.~W.; Ang, K.-S.; Teo, H.; and Liang, P.~S. 2019.
\newblock On the Accuracy of Influence Functions for Measuring Group Effects.
\newblock In \emph{Advances in Neural Information Processing Systems},
  volume~32. Curran Associates, Inc.

\bibitem[{Kong and Chaudhuri(2021)}]{kong2021understanding}
Kong, Z.; and Chaudhuri, K. 2021.
\newblock Understanding Instance-Based Interpretability of Variational
  Auto-Encoders.
\newblock In \emph{Thirty-Fifth Conference on Neural Information Processing
  Systems}.

\bibitem[{Krizhevsky and Hinton(2009)}]{krizhevsky2009learning}
Krizhevsky, A.; and Hinton, G. 2009.
\newblock Learning Multiple Layers of Features from Tiny Images.

\bibitem[{Lecun et~al.(1998)Lecun, Bottou, Bengio, and Haffner}]{726791}
Lecun, Y.; Bottou, L.; Bengio, Y.; and Haffner, P. 1998.
\newblock Gradient-Based Learning Applied to Document Recognition.
\newblock \emph{Proceedings of the IEEE}, 86(11): 2278--2324.

\bibitem[{Lee et~al.(2019)Lee, Xiao, Schoenholz, Bahri, Novak, Sohl-Dickstein,
  and Pennington}]{NEURIPS2019_0d1a9651}
Lee, J.; Xiao, L.; Schoenholz, S.; Bahri, Y.; Novak, R.; Sohl-Dickstein, J.;
  and Pennington, J. 2019.
\newblock Wide Neural Networks of Any Depth Evolve as Linear Models Under
  Gradient Descent.
\newblock In \emph{Advances in Neural Information Processing Systems},
  volume~32. Curran Associates, Inc.

\bibitem[{Li, Soltanolkotabi, and Oymak(2020)}]{pmlr-v108-li20j}
Li, M.; Soltanolkotabi, M.; and Oymak, S. 2020.
\newblock Gradient Descent with Early Stopping is Provably Robust to Label
  Noise for Overparameterized Neural Networks.
\newblock In \emph{Proceedings of the Twenty Third International Conference on
  Artificial Intelligence and Statistics}, volume 108 of \emph{Proceedings of
  Machine Learning Research}, 4313--4324. PMLR.

\bibitem[{Li and Liang(2018)}]{NEURIPS2018_54fe976b}
Li, Y.; and Liang, Y. 2018.
\newblock Learning Overparameterized Neural Networks via Stochastic Gradient
  Descent on Structured Data.
\newblock In Bengio, S.; Wallach, H.; Larochelle, H.; Grauman, K.;
  Cesa-Bianchi, N.; and Garnett, R., eds., \emph{Advances in Neural Information
  Processing Systems}, volume~31. Curran Associates, Inc.

\bibitem[{Li et~al.(2019)Li, Wang, Yu, Du, Hu, Salakhutdinov, and
  Arora}]{li2019enhanced}
Li, Z.; Wang, R.; Yu, D.; Du, S.~S.; Hu, W.; Salakhutdinov, R.; and Arora, S.
  2019.
\newblock Enhanced Convolutional Neural Tangent Kernels.
\newblock \emph{arXiv preprint arXiv:1911.00809}.

\bibitem[{Mohri, Rostamizadeh, and Talwalkar(2018)}]{mohri2018foundations}
Mohri, M.; Rostamizadeh, A.; and Talwalkar, A. 2018.
\newblock \emph{Foundations of Machine Learning}.
\newblock MIT Press.

\bibitem[{Nguyen and Mondelli(2020)}]{NEURIPS2020_8abfe8ac}
Nguyen, Q.~N.; and Mondelli, M. 2020.
\newblock Global Convergence of Deep Networks with One Wide Layer Followed by
  Pyramidal Topology.
\newblock In \emph{Advances in Neural Information Processing Systems},
  volume~33, 11961--11972. Curran Associates, Inc.

\bibitem[{Novak et~al.(2020)Novak, Xiao, Hron, Lee, Alemi, Sohl-Dickstein, and
  Schoenholz}]{Novak2020Neural}
Novak, R.; Xiao, L.; Hron, J.; Lee, J.; Alemi, A.~A.; Sohl-Dickstein, J.; and
  Schoenholz, S.~S. 2020.
\newblock Neural Tangents: Fast and Easy Infinite Neural Networks in Python.
\newblock In \emph{International Conference on Learning Representations}.

\bibitem[{Paszke et~al.(2019)Paszke, Gross, Massa, Lerer, Bradbury, Chanan,
  Killeen, Lin, Gimelshein, Antiga, Desmaison, Kopf, Yang, DeVito, Raison,
  Tejani, Chilamkurthy, Steiner, Fang, Bai, and
  Chintala}]{NEURIPS2019_bdbca288}
Paszke, A.; Gross, S.; Massa, F.; Lerer, A.; Bradbury, J.; Chanan, G.; Killeen,
  T.; Lin, Z.; Gimelshein, N.; Antiga, L.; Desmaison, A.; Kopf, A.; Yang, E.;
  DeVito, Z.; Raison, M.; Tejani, A.; Chilamkurthy, S.; Steiner, B.; Fang, L.;
  Bai, J.; and Chintala, S. 2019.
\newblock PyTorch: An Imperative Style, High-Performance Deep Learning Library.
\newblock In \emph{Advances in Neural Information Processing Systems},
  volume~32. Curran Associates, Inc.

\bibitem[{Wang, Ustun, and Calmon(2019)}]{pmlr-v97-wang19l}
Wang, H.; Ustun, B.; and Calmon, F. 2019.
\newblock Repairing without Retraining: Avoiding Disparate Impact with
  Counterfactual Distributions.
\newblock In \emph{Proceedings of the 36th International Conference on Machine
  Learning}, volume~97 of \emph{Proceedings of Machine Learning Research},
  6618--6627. PMLR.

\bibitem[{Wang, Huan, and Li(2018)}]{wang2018data}
Wang, T.; Huan, J.; and Li, B. 2018.
\newblock Data Dropout: Optimizing Training Data for Convolutional Neural
  Networks.
\newblock In \emph{2018 IEEE 30th International Conference on Tools with
  Artificial Intelligence (ICTAI)}, 39--46. IEEE.

\bibitem[{Xie, Liang, and Song(2017)}]{pmlr-v54-xie17a}
Xie, B.; Liang, Y.; and Song, L. 2017.
\newblock {Diverse Neural Network Learns True Target Functions}.
\newblock In \emph{Proceedings of the 20th International Conference on
  Artificial Intelligence and Statistics}, volume~54 of \emph{Proceedings of
  Machine Learning Research}, 1216--1224. PMLR.

\bibitem[{Zhang et~al.(2021)Zhang, Deng, Kawaguchi, Ghorbani, and
  Zou}]{zhang2021how}
Zhang, L.; Deng, Z.; Kawaguchi, K.; Ghorbani, A.; and Zou, J. 2021.
\newblock How Does Mixup Help With Robustness and Generalization?
\newblock In \emph{International Conference on Learning Representations}.

\end{thebibliography}

\onecolumn
\begin{center}
   \LARGE{\textbf{Appendix: Rethinking Influence Functions of Neural Networks\\ in the Over-parameterized Regime}}
\end{center}

\section{Appendix A: Experimental Settings and Supplementary Experiments}
\subsection{Experimental Settings}
The architecture of neural networks in our experiments is described in the main text. We train the neural networks using full-batch gradient descent, with a fixed learning rate $lr = 0.001$. Theorem \ref{theorem2} requires $\kappa$ to be small in the initialization, thus we fix $\kappa = 0.01$ in the experiments. We set the number of epochs as $5000$ to ensure the training loss converges. We only use the first two classes of images in the MNIST (`7' v.s. `9') \cite{726791} and CIFAR (`car' v.s. `plane') \cite{krizhevsky2009learning}, with 5000 training images and 1000 test images in total for each dataset. In both datasets, we normalize each image $\x_i$ such that $\|\x_i\|_2=1$ and set the corresponding label $y_i$ to be $+1$ for the first class, and $-1$ otherwise. The neural networks are trained via the PyTorch package \cite{NEURIPS2019_bdbca288}, using NVIDIA GeForce RTX 3090 24GB.
\subsection{Supplementary Experiments}
This section repeats the experiments for multi-class classification tasks on MINI MINST and MINI CIFAR, respectively. All experimental settings are the same as the previous section, and MINI MINST and MINI CIFAR are constructed by 5000 training images and 1000 test images from MNIST and CIFAR, respectively. Numerical experiments verify our theory on multi-class classification task (Figure \ref{supp1}, \ref{supp2}, \ref{supp3} and \ref{supp4}). Furthermore, we verify the upper bound in Theorem 4 on simulated data (Figure \ref{supp5}). 

\begin{figure}[H]\centering
\includegraphics[width=0.6\textwidth]{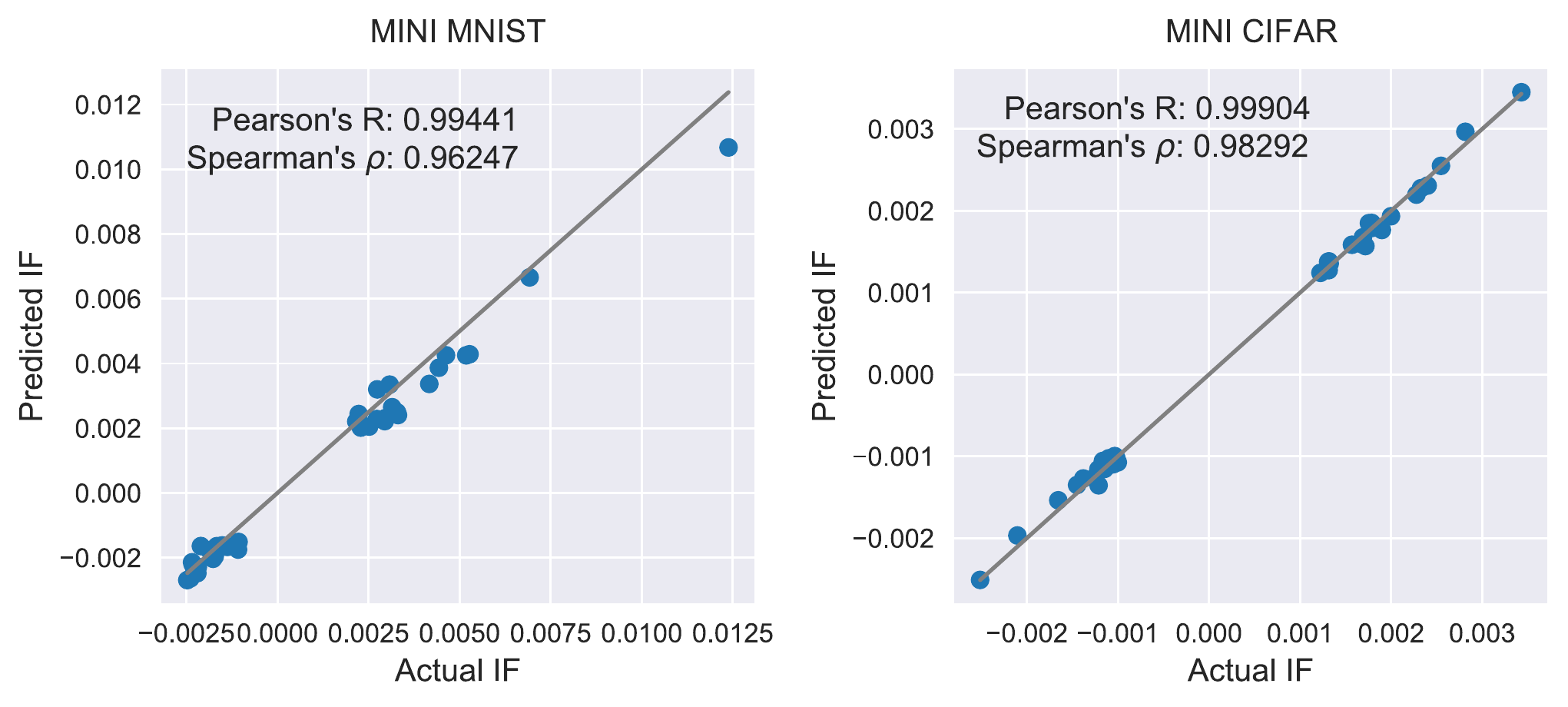} 
\caption{\textbf{{Comparison of predicted IFs and actual ones on the 40 most influential training points for a two-layer ReLU networks with $8 \times 10^4$ neurons in the hidden layer. }}
We set $\lambda = 4$ on MINI MNIST and $\lambda=10$ on MINI CIFAR respectively.} 
\label{supp1}\end{figure}

\begin{figure}[H]\centering
\includegraphics[width=0.6\textwidth]{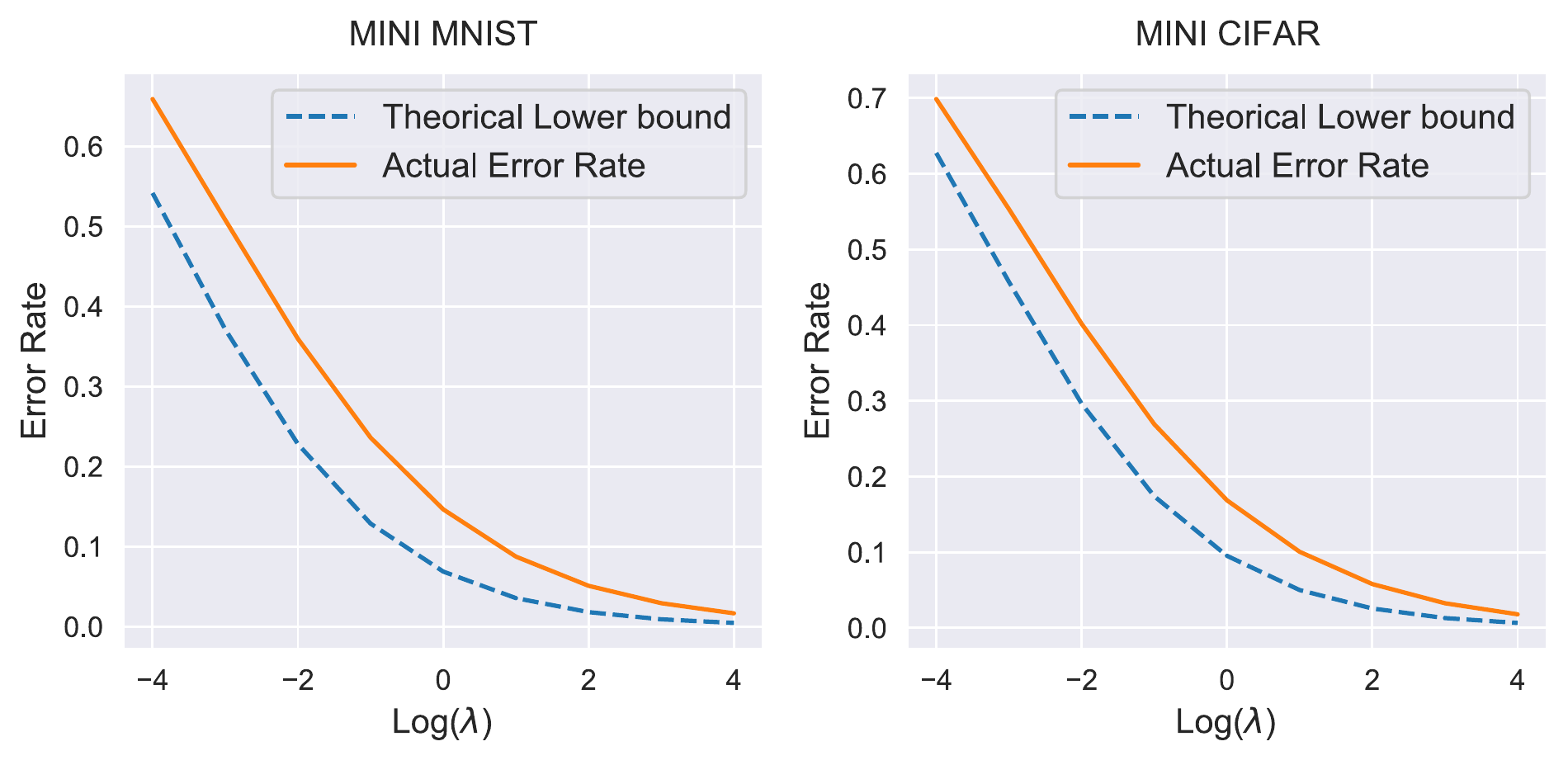} 
\caption{\textbf{{The Effects of Regularization.}} Comparison of the theoretical lower bound of error rates and the actual error rates for different $\lambda$s on MINI MNIST and MINI CIFAR respectively.} 
\label{supp2}\end{figure}

\begin{figure}[h]\centering
\includegraphics[width=0.48\textwidth]{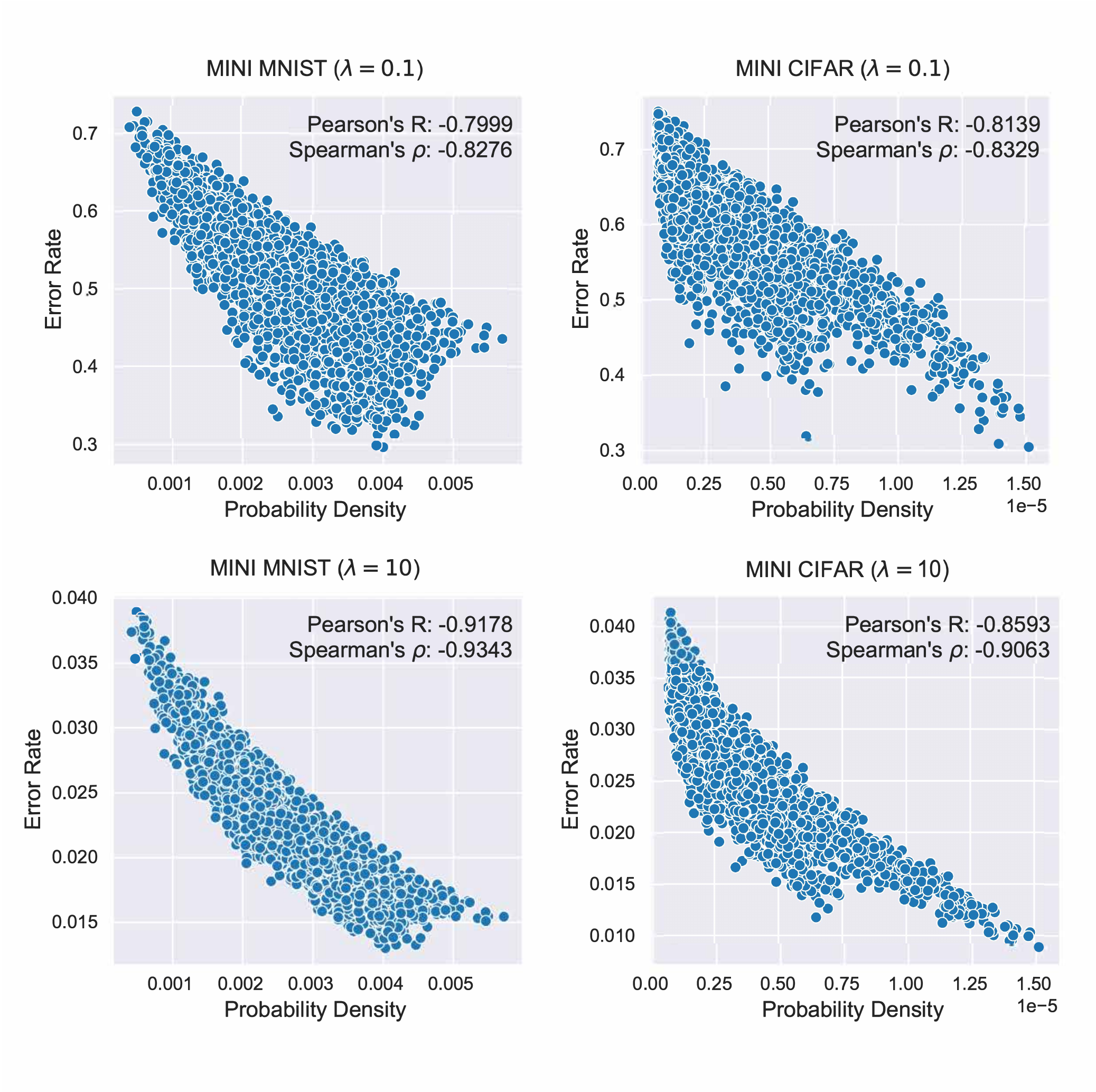}
\caption{\textbf{{The Effects of Training Points.}} Probability density of training data and its corresponding error rates when estimates the IFs via the classic IHVP method on MINI MNIST and MINI CIFAR respectively. Density for training data is calculated through the Gaussian kernel density estimate method.} \label{supp3}
\end{figure}

\begin{figure}[h]\centering
\includegraphics[width=0.53\textwidth]{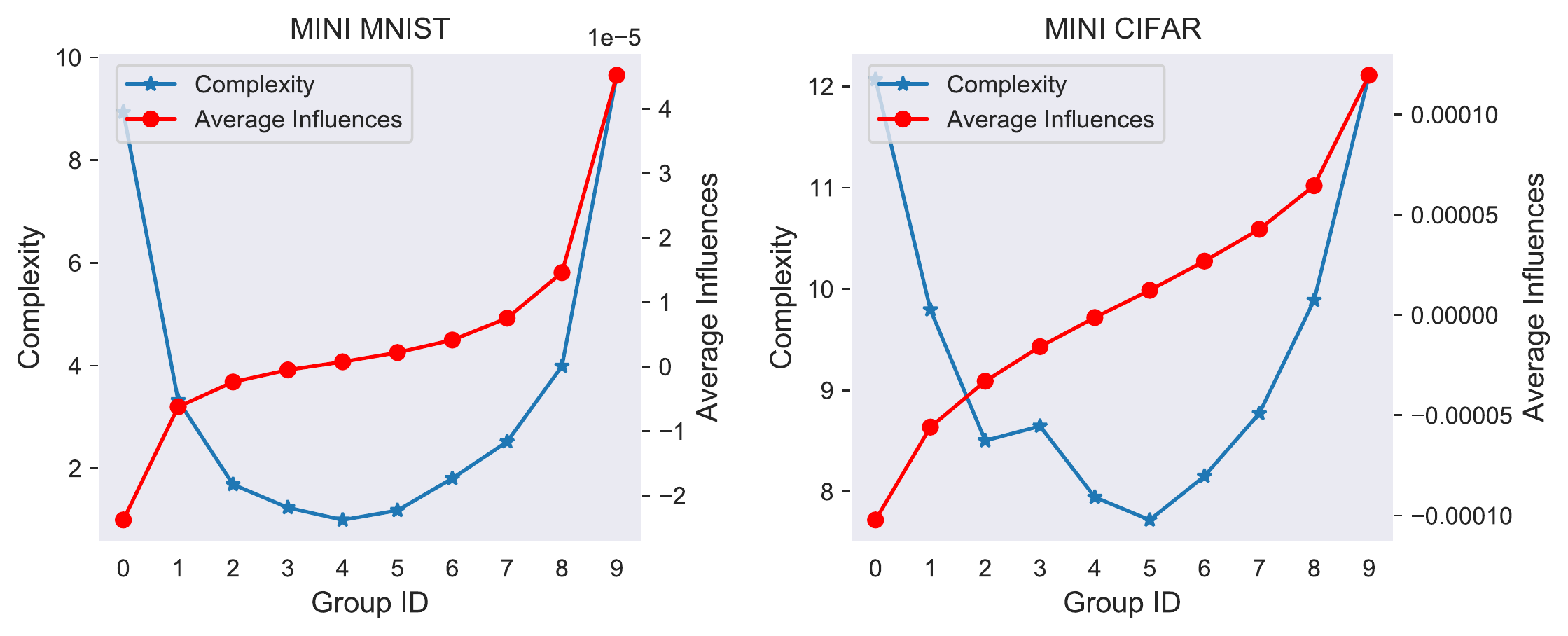}
\caption{\textbf{Quantify the Complexity for Influential Samples.} The relationship between complexity and influence of each group divided by their IFs on MINI MNIST and MINI CIFAR respectively.} \label{supp4}
\end{figure}

\begin{figure}[h]\centering
\includegraphics[width=0.25\textwidth]{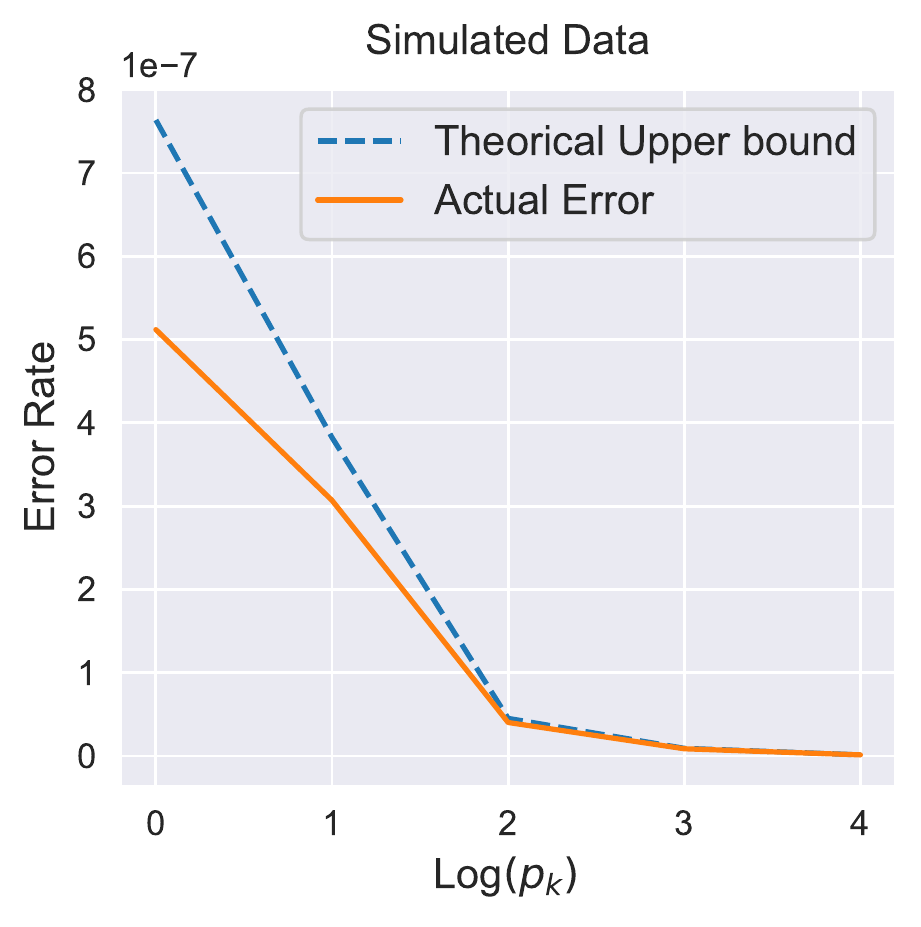}
\caption{\textbf{Comparison of the theoretical upper bound of error and the actual error in Theorem 4.} We generate the positive samples from $5$ clusters in $\R^{100}$ and set $n_k=50\times 2^k$, and the same for negative samples. We set $r=0.001$, and other parameters are calculated based on Theorem 4.} 
\label{supp5}
\end{figure}

\clearpage
\section{Appendix B: Proofs for Section 5}
In this paper, we consider minimizing the loss function $\Loss(\W)$ in the gradient flow regime, i.e., gradient descent with infinitesimal step size. The evolution of $\W$ can be described by the following ordinary differential equation (ODE):

\begin{equation}
\begin{aligned}
    \frac{\df\W(t)}{\df t} &=- \frac{\partial \Loss(\W(t))}{\partial \W(t)} \\ 
    &=  
    \sum_{i=1}^{n} (y_i - f_{nn}(\x_i;\W(t)))\frac{\partial f_{nn}(\x_i;\W(t))}{\partial \W(t)} - \lambda (\W(t)- \W(0)),
\end{aligned}
\end{equation}
where $\W(t)$ denotes the parameters of the neural network at time $t$. According to the chain rule, the evolution of $f_{nn}(\X)$ can be derived as follows:

\begin{equation}
\begin{aligned}
    \frac{\df f_{nn}(\X;\W(t))}{\df t} &= 
    \left\langle \frac{\partial f_{nn}(\X;\W(t))}{\partial \W(t)}, \frac{\df \W(t)}{\df t} \right\rangle
    \\&=-\K_{tr}(t)(f_{nn}(\X;\W(t)) - \Y) - \lambda \J(\X;t)^{\top}\left(\operatorname{vec}(\W(t)) - \operatorname{vec}(\W(0)) \right) ,
\end{aligned}
\end{equation}
where $\J(\X;t) = [\J(\x_1;t),\cdots,\J(\x_n;t)]\triangleq \left[\operatorname{vec}\left(\frac{\partial f_{nn}(\x_{1};\W(t))}{\partial \W(t)}\right) , \cdots , \operatorname{vec}\left(\frac{\partial f_{nn}(\x_{n}, \W(t)}{\partial \W(t)}\right)\right] \in \R^{md \times n}$ is the Jacobian matrix of the neural network, and $\K_{tr}(t) \triangleq \J(\X;t)^{\top}\J(\X;t) $ is an $n\times n$ positive semi-definite matrix. Notice that $\sigma(x) = x \I(x>0) = x\sigma^{\prime} (x)$ for all $x \in \R$, hence we can rewrite the evolution of $f_{nn}(\X;\W(t))$ as follows:
\begin{equation}
    \frac{\df f_{nn}(\X;\W(t))}{\df t}+(\K_{tr}(t)+\lambda \mathbf{I}_n)(f_{nn}(\X;\W(t))-\Y) = -\lambda \Y + \lambda \J(\X;t)^{\top} \operatorname{vec}({\W(0)})\label{X_ode_new}.
\end{equation}

For an arbitrary test point $\x_{te}$, we have:
\begin{equation}
    \frac{\df f_{nn}(\x_{te};\W(t))}{\df t} + \lambda f_{nn}(\x_{te};\W(t))=-\mathbf{K}_{te}(t)^{\top}(f_{nn}(\X;\W(t))-\Y)+\lambda \J(\x_{te};t)^{\top} \operatorname{vec}({\W(0)}) , 
\end{equation}
where $\J(\x_{te};t) \triangleq \operatorname{vec}\left(\frac{\partial f_{nn}(\x_{te};\W(t))}{\partial \W(t)}\right) \in \R^{md}$ and $\mathbf{K}_{te}(t) \triangleq \J(\X;t)^{\top}\J(\x_{te};t) \in \R^{n}$ respectively. As $t \to \infty$, we define $f_{nn}(\x_{te}) \triangleq \lim_{t\to \infty} f_{nn}(\x_{te};\W(t))$ as the prediction of the neural network at the end of training.

To prove Theorem \ref{theorem1}, we first analyze the evolution of $\W(t)$, and present the following lemma to reveal that the perturbation of each $\mathbf{w}_r$ only moves little when the width is sufficiently large.
\begin{lemma}\label{lemma1} 
With probability at least $1 - \delta$ over random initialization, we have $\|\w_r(t)-\w_r(0)\|_2 =  \mathcal{O}\left(\frac{n}{\sqrt{\delta m}\lambda}\right)$ for all $t \ge 0$ and $r \in [m]$.
\end{lemma}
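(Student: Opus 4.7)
The plan is to analyze the coordinate-wise ODE for each $\w_r(t)$ directly, exploiting the fact that the regularization term $-\lambda(\w_r(t)-\w_r(0))$ provides linear damping that pulls each neuron back toward its initialization. Setting $\mathbf{u}_r(t):=\w_r(t)-\w_r(0)$, the gradient flow \eqref{W_ode} restricted to the $r$-th column reads
\begin{equation*}
  \frac{\df \mathbf{u}_r(t)}{\df t}=\tilde{\mathbf{g}}_r(t)-\lambda\mathbf{u}_r(t),\qquad \tilde{\mathbf{g}}_r(t):=\frac{a_r}{\sqrt{m}}\sum_{i=1}^{n}\bigl(y_i-f_{nn}(\x_i;\W(t))\bigr)\sigma'(\w_r(t)^{\top}\x_i)\x_i.
\end{equation*}
This is a linear inhomogeneous ODE with $\mathbf{u}_r(0)=\mathbf{0}$, so the integrating factor $e^{\lambda t}$ gives the explicit representation $\mathbf{u}_r(t)=\int_0^t e^{-\lambda(t-s)}\tilde{\mathbf{g}}_r(s)\,\df s$, which already contains the mechanism producing the $1/\lambda$ factor in the target bound.

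Next I would bound the forcing in a width-independent way. Using $|a_r|=1$, $|\sigma'|\leq 1$, $\|\x_i\|_2=1$ and Cauchy--Schwarz on the sum over training points yields $\|\tilde{\mathbf{g}}_r(s)\|_2\leq \tfrac{\sqrt{n}}{\sqrt{m}}\|\mathbf{v}(s)\|_2$, where $\mathbf{v}(s):=f_{nn}(\X;\W(s))-\Y$ is the training residual. The crucial observation is that $\Loss$ is non-increasing along gradient flow and its regularizer vanishes at $\W(0)$, so
\begin{equation*}
  \tfrac{1}{2}\|\mathbf{v}(t)\|_2^{2}\;\leq\;\Loss(\W(t))\;\leq\;\Loss(\W(0))\;=\;\tfrac{1}{2}\|\mathbf{v}(0)\|_2^{2},
\end{equation*}
which delivers the $t$-uniform control $\|\mathbf{v}(t)\|_2\leq \|\mathbf{v}(0)\|_2$ for free. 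The initial residual is then controlled in expectation: a direct computation using independence of $\{a_r\}$ and $\{\w_r(0)\}$, together with $\mathbb{E}[\sigma(\w^{\top}\x_i)^2]=\kappa^2/2$ for $\w\sim\mathcal{N}(\mathbf{0},\kappa^2\mathbf{I}_d)$, yields $\mathbb{E}\|\mathbf{v}(0)\|_2^{2}\leq n+n\kappa^2/2=\mathcal{O}(n)$, so Markov's inequality gives $\|\mathbf{v}(0)\|_2=\mathcal{O}(\sqrt{n/\delta})$ with probability at least $1-\delta$.

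Combining the three ingredients and using $\int_0^t e^{-\lambda(t-s)}\df s\leq 1/\lambda$, I obtain
\begin{equation*}
  \|\w_r(t)-\w_r(0)\|_2\;\leq\;\frac{\sqrt{n}}{\sqrt{m}\,\lambda}\,\|\mathbf{v}(0)\|_2\;=\;\mathcal{O}\!\left(\frac{n}{\sqrt{\delta m}\,\lambda}\right),
\end{equation*}
uniformly in $r\in[m]$ and $t\geq 0$; notably, no union bound over neurons is required, since the same event on $\|\mathbf{v}(0)\|_2$ controls every $\w_r$ simultaneously. The main obstacle is the $t$-uniform bound on $\|\mathbf{v}(t)\|_2$: in the unregularized setting this would force us to track the spectrum of $\K_{tr}(t)$ along the trajectory, whereas the explicit $\ell_2$ regularizer supplies monotonicity of $\Loss$ for free and decouples the weight-movement estimate from any spectral condition on $\K_{tr}(t)$. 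The only remaining care point is the $1/\sqrt{\delta}$ factor, which enters \emph{only} through the Markov step for $\|\mathbf{v}(0)\|_2$.
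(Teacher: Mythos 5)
Your proof is correct and follows essentially the same route as the paper's: both integrate the damped ODE for $\w_r(t)-\w_r(0)$ with the factor $e^{-\lambda t}$, bound the forcing by $\sqrt{n/m}\,\|f_{nn}(\X;\W(s))-\Y\|_2$ (your Cauchy--Schwarz step is exactly the paper's $\|\mathbf{Z}_r(s)\|_F\le\sqrt{n/m}$ bound), and control the residual uniformly in $t$ via monotonicity of $\Loss$ plus Markov's inequality applied to $\Loss(\W(0))=\mathcal{O}(n)$ in expectation. No substantive differences.
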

\begin{proof}
Because the loss function $\mathcal{L}(\W(t))$ is non-increasing during the training dynamics, thus we have:
\begin{equation}
    \frac{1}{2}\sum_{i=1}^{n}(f_{nn}(\x_i;\W(t))-y_i)^2
    \leq \mathcal{L}(\W(t))\leq \mathcal{L}(\W(0)).
\end{equation}
On the other hand, we have:
\begin{equation}
\begin{aligned}
    \mathbb{E}\left[\mathcal{L}(\W(0))\right]&=\mathbb{E}\left[\frac{1}{2}\sum_{i=1}^{n}(f_{nn}(\x_i;\W(t))-y_i)^2+\frac{\lambda}{2}\|\W(0)-\W(0)\|_F^2\right]\\&= 
    \frac{1}{2}\sum_{i=1}^{n}\left(y_i^2+\mathbb{E}[f_{nn}(\x_i;\W(0))^2]-2y_i\mathbb{E}[f_{nn}(\x_i;\W(0))]\right)
    \\&=\mathcal{O}(n).
\end{aligned}
\end{equation}
According to the Markov’s inequality, with probability at least $1-\delta$, we have:
\begin{equation}
    \frac{1}{2}\sum_{i=1}^{n}(f_{nn}(\x_i;\W(t))-y_i)^2
    \leq \mathcal{L}(\W(0)) = \mathcal{O}\left(\frac{n}{\delta}\right).
\end{equation}
Now we consider the training dynamics of $\mathbf{w}_r(t)$ for any $r \in [m]$, which is controlled by the ODE as follows:
\begin{equation}
    \begin{aligned}
        \frac{\df \mathbf{w}_r(t)}{\df t} =& -\frac{\partial \Loss (\W(t))}{\partial \mathbf{w}_r(t)}\\ 
        = & -\sum_{i=1}^{n}(f_{nn}(\x_i;\W(t))-y_i)\frac{\partial f_{nn}(\x_i;\W(t))}{\partial \mathbf{w}_{r}(t)} - \lambda (\mathbf{w}_r(t)- \mathbf{w}_r(0))\\=&-\mathbf{Z}_r(t)(f_{nn}(\X;\W(t))-\Y) - \lambda (\mathbf{w}_r(t)-\mathbf{w}_r(0)),
    \end{aligned}
\end{equation}
where $\mathbf{Z}_r(t) \triangleq \left[\frac{\partial f_{nn}(\x_{1};\mathbf{W}(t))}{\partial \mathbf{w}_r(t)} , \cdots , \frac{\partial f_{nn}(\x_{n}, \mathbf{W}(t))}{\partial \w_{r}(t)}\right] \in \R^{d \times n}$, and we can bound $\|\mathbf{Z}_r(t)\|_F^2$ for all $t \ge 0$ as follows:
\begin{equation}
\|\mathbf{Z}_r(t)\|_F^2 = \sum_{i=1}^{n}   \frac{1}{m}a_r^2 \mathbb{I}\{\mathbf{w}_r(t)^{\top}\x_i\geq 0\}\|\x_i\|_2^2 \leq \frac{n}{m}. 
\end{equation}
Notice that the training dynamic of $\mathbf{w}_r(t)$ satisfies:
\begin{equation}
  \frac{\df (\mathbf{w}_r(t) - \mathbf{w}_r(0))}{\df t} 
   +\lambda (\mathbf{w}_r(t) - \mathbf{w}_r(0)) = -\mathbf{Z}_r(t)(f_{nn}(\X;\W(t))-\Y),
\end{equation}
hence we can bound $\|\w_r(t)-\w_r(0)\|_2$ for all $t \ge 0$ and $r \in [m]$ as follows:
\begin{equation}
\begin{aligned}
        \|\w_r(t)-\w_r(0)\|_2 &=\left\|-e^{-\lambda t}\int_0^t e^{\lambda s}\mathbf{Z}_r(s)(f_{nn}(\X;\W(s))-\Y)\df s\right\|_2\\
        &\leq \max_{0 \leq s \leq t}\left\|\mathbf{Z}_r(s)(f_{nn}(\X;\W(s))-\Y)\right\|_2\cdot \left|e^{-\lambda t}\int_0^t e^{\lambda s}\df s\right|\\
        &\leq \max_{0 \leq s \leq t}\frac{\|\mathbf{Z}_r(s)\|_F \|f_{nn}(\X;\W(s))-\Y\|_2}{\lambda}\\
        &= \mathcal{O}\left(\frac{n}{\sqrt{\delta m}\lambda}\right).
\end{aligned}
\end{equation}
\end{proof}

Let $R$ be the maximum perturbation for all $\mathbf{w}_r$ during the training process, i.e., $R\triangleq \max_{t>0}\|\mathbf{w}_r(t)-\mathbf{w}_r(0)\|_2$. According to Lemma \ref{lemma1}, we have $R = \mathcal{O}\left(\frac{n}{\sqrt{\delta m}\lambda}\right)$ with probability at least $1 - \delta$. Next, we prove the following important terms can be controlled by $R$, which plays a crucial role in the proof of Theorem \ref{theorem1}.

\begin{lemma}\label{lemma2}
If $\mathbf{w}_{1}(0), \cdots, \mathbf{w}_{m}(0)$ are i.i.d. generated from $\mathcal{N}(\mathbf{0}, \kappa^2\mathbf{I}_d)$, then the following holds with probability at least $1-\delta$. For any set of weight vectors $\mathbf{w}_{1}, \cdots, \mathbf{w}_{m} \in \mathbb{R}^{d}$ that satisfy for any $r \in[\mathrm{m}]$, $\left\|\mathbf{w}_{r}(0)-\mathbf{w}_{r}\right\|_{2} \leq  R$, then the Gram matrix $\mathbf{K}_{tr} \in \mathbb{R}^{n \times n}$, $\mathbf{K}_{te} \in \R^{n}$ and the Jacobian matrix $\mathbf{J}(\X) \in \R^{md \times n}$, $\mathbf{J}(\x_{te}) \in \R^{md}$ defined by
\begin{equation}
    \begin{aligned}
	\left[\mathbf{K}_{tr}\right]_{ij}\triangleq \frac{1}{m} \mathbf{x}_{i}^{\top} \mathbf{x}_{j} \sum_{r=1}^{m} \mathbb{I}\left\{\mathbf{w}_{r}^{\top} \mathbf{x}_{i} \geq 0, \mathbf{w}_{r}^{\top} \mathbf{x}_{j} \geq 0\right\}&;\\ 
[\mathbf{K}_{te}]_{i} \triangleq \frac{1}{m} \mathbf{x}_{i}^{\top} \mathbf{x}_{te} \sum_{r=1}^{m} \mathbb{I}\left\{\mathbf{w}_{r}^{\top} \mathbf{x}_{i} \geq 0, \mathbf{w}_{r}^{\top} \mathbf{x}_{te} \geq 0\right\}&;\\
\J(\X) \triangleq \left[\operatorname{vec}\left(\frac{\partial f_{nn}(\x_{1};\W)}{\partial \W}\right) , \cdots , \operatorname{vec}\left(\frac{\partial f_{nn}(\x_{n}, \W)}{\partial \W}\right)\right]; \quad \J(\x_{te}) \triangleq& \operatorname{vec}\left(\frac{\partial f_{nn}(\x_{te};\W)}{\partial \W}\right),
    \end{aligned}
\end{equation}
satisfying: 
\begin{itemize}
    \item $\|\K_{tr} - \K_{tr}(0)\|_2 \leq \frac{4n^2R}{\sqrt{2 \pi}\kappa\delta}$;
    \item $\left\|(\J(\X) - \J(\X;0))^{\top}\operatorname{vec}(\W(0))\right\|_2 \leq \frac{2\sqrt{m}nR^2}{\sqrt{2\pi}\kappa \delta}$;
    \item $\|\K_{te} - \K_{te}(0)\|_2 \leq \frac{4nR}{\sqrt{2 \pi}\kappa\delta}$;
    \item $\left\|(\J(\x_{te}) - \J(\x_{te};0))^{\top}\operatorname{vec}(\W(0))\right\|_2 \leq \frac{2\sqrt{m}R^2}{\sqrt{2\pi}\kappa \delta}$.
\end{itemize}
\end{lemma}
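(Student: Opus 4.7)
The plan is to reduce all four inequalities to a single combinatorial quantity per input — namely the number of hidden neurons whose pre-activation sign flips between $\W(0)$ and $\W$ — and then bound that quantity via Gaussian anti-concentration. Since $\|\x\|_2 = 1$ and $\|\w_r - \w_r(0)\|_2 \le R$, we have $|(\w_r - \w_r(0))^\top \x| \le R$, so whenever $\operatorname{sign}(\w_r^\top \x)$ differs from $\operatorname{sign}(\w_r(0)^\top \x)$ the initial pre-activation must satisfy $|\w_r(0)^\top \x| \le R$. Accordingly I would define the flip sets $S_i := \{r \in [m] : \operatorname{sign}(\w_r^\top \x_i) \ne \operatorname{sign}(\w_r(0)^\top \x_i)\}$ for $i \in [n]$ and analogously $S_{te}$ for the test input. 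Crucially, $S_i$ is contained in the initialization-dependent set $\{r : |\w_r(0)^\top \x_i| \le R\}$, so the entire probabilistic analysis depends only on $\W(0)$ and yields a uniform bound over every $\W$ in the $R$-ball around $\W(0)$.

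To bound these flip counts, observe that $\w_r(0) \sim \mathcal{N}(\mathbf{0}, \kappa^2 \mathbf{I}_d)$ and $\|\x_i\|_2=1$ make $\w_r(0)^\top \x_i \sim \mathcal{N}(0, \kappa^2)$, whose density is everywhere at most $1/(\sqrt{2\pi}\kappa)$. Thus $\Pr[|\w_r(0)^\top \x_i| \le R] \le 2R/(\sqrt{2\pi}\kappa)$, and linearity of expectation gives $\mathbb{E}[\sum_i |S_i|] \le 2mnR/(\sqrt{2\pi}\kappa)$ and $\mathbb{E}[|S_{te}|] \le 2mR/(\sqrt{2\pi}\kappa)$. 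Applying Markov's inequality to each aggregate (together with a union bound over the two events) gives, with probability at least $1-\delta$,
\[
\sum_{i=1}^n |S_i| \le \frac{2mnR}{\sqrt{2\pi}\,\kappa\delta}, \qquad |S_{te}| \le \frac{2mR}{\sqrt{2\pi}\,\kappa\delta}.
\]

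With the flip counts in hand, each of the four inequalities is a routine manipulation of indicator differences and norm inequalities. For $\K_{tr}$, the joint indicator in $[\K_{tr}]_{ij}$ can change only when at least one of the two relevant pre-activations flips, so $|[\K_{tr}]_{ij} - [\K_{tr}(0)]_{ij}| \le (|S_i|+|S_j|)/m$ using $|\x_i^\top\x_j|\le 1$; combining $\|\cdot\|_2 \le \sqrt{\|\cdot\|_1 \|\cdot\|_\infty}$ with the crude inequality $|S_i|\le\sum_j|S_j|$ then produces the $4n^2R/(\sqrt{2\pi}\kappa\delta)$ bound. For the vector $\K_{te}$, the elementary bound $\|v\|_2 \le \|v\|_1$ combined with $\sum_i(|S_i|+|S_{te}|) = \sum_i|S_i| + n|S_{te}|$ directly yields $4nR/(\sqrt{2\pi}\kappa\delta)$. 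For the Jacobian-weight products I would expand the $i$-th coordinate of $(\J(\X) - \J(\X;0))^\top \operatorname{vec}(\W(0))$ as $\sum_r \frac{a_r}{\sqrt{m}}[\I\{\w_r^\top\x_i\ge 0\} - \I\{\w_r(0)^\top\x_i\ge 0\}]\w_r(0)^\top\x_i$; this sum is supported on $r\in S_i$, where $|a_r|=1$ and $|\w_r(0)^\top\x_i|\le R$, so each coordinate has magnitude at most $|S_i|R/\sqrt{m}$, and $\|\cdot\|_2\le\|\cdot\|_1$ delivers $2\sqrt{m}nR^2/(\sqrt{2\pi}\kappa\delta)$, while the test version is a single scalar bounded directly by $|S_{te}|R/\sqrt{m}$. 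The main subtlety is to apply Markov to the aggregate $\sum_i |S_i|$ rather than pointwise to each $|S_i|$, which avoids the spurious extra factor of $n$ that a naive union bound would introduce; everything else is careful bookkeeping of which norm inequality to invoke at each step.
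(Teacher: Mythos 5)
Your proof is correct and follows essentially the same route as the paper's: Gaussian anti-concentration bounds the probability that a pre-activation sign can flip anywhere in the $R$-ball around $\mathbf{w}_r(0)$, Markov's inequality controls the aggregate flip count, and the four bounds then follow from deterministic norm bookkeeping (the paper cites Lemma 3.2 of Du et al.\ for the Gram-matrix claim and bounds the expectation of the Jacobian term directly, but the content is identical to your flip-set argument). The only quibble, which the paper's own proof shares, is that making all four bounds hold simultaneously at confidence $1-\delta$ costs a constant factor from the union bound over the separate Markov applications that neither argument tracks.
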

\begin{proof}
Without loss of generality, we only proof the first two claims. Similarly with the proof of Lemma 3.2 in \citep{du2018gradient}, we define the event as follows:
\begin{equation}
    A_{i r}=\left\{\exists \mathbf{w}:\left\|\mathbf{w}-\mathbf{w}_{r}(0)\right\|_2 \leq R, \mathbb{I}\left\{\mathbf{x}_{i}^{\top} \mathbf{w}_{r}(0) \geq 0\right\} \neq \mathbb{I}\left\{\mathbf{x}_{i}^{\top} \mathbf{w} \geq 0\right\}\right\}.
\end{equation}
Note that the event $A_{ir}$ happens if and only if $\left|\mathbf{w}_{r}(0)^{\top} \mathbf{x}_{i}\right|<R$. Recall that $\mathbf{w}_{r}(0) \sim \mathcal{N}(\mathbf{0}, \kappa^2\mathbf{I}_d)$. By anti-concentration inequality of Gaussian, we have $\mathbb{P}\left[A_{i r}\right]=\mathbb{P}_{z \sim \mathcal{N}(0,\kappa^2)}\left[|z|<R\right] \leq \frac{2 R}{\sqrt{2 \pi}\kappa}$. According to the proof of Lemma 3.2 in \citep{du2018gradient}, with probability of $1-\delta$, we have:
\begin{equation}
    \|\K_{tr} - \K_{tr}(0)\|_2 \leq \frac{4n^2R}{\sqrt{2 \pi}\kappa\delta}.
\end{equation}

Now we bound $\mathbb{E}\left[\|(\J({\X}) - \J(\X;0))^{\top}\operatorname{vec}(\W(0))\|_2\right]$ as follows:
\begin{equation}
\begin{aligned}
&\mathbb{E}\left[\left\|(\J(\X) - \J(\X;0))^{\top}\operatorname{vec}(\W(0))\right\|_2\right]
\\ \leq & \frac{1}{\sqrt{m}}\sum_{r=1}^{m}\sum_{i=1}^{n} \mathbb{E}\left[\left|a_r(\mathbb{I}(\mathbf{w}_{r}^{\top}\x_i\geq0)-\mathbb{I}(\mathbf{w}_{r}(0)^{\top}\x_i\geq0))\mathbf{w}_{r}(0)^{\top}\x_i)\right|\right]
\\ \leq & \frac{1}{\sqrt{m}}\sum_{r=1}^{m}\sum_{i=1}^{n} \mathbb{E}\left[\left|\mathbb{I}(\mathbf{w}_{r}^{\top}\x_i\geq0)-\mathbb{I}(\mathbf{w}_{r}(0)^{\top}\x_i\geq0)\right|\cdot\left|\mathbf{w}_{r}(0)^{\top}\x_i\right|\right]
\\ \leq & \frac{1}{\sqrt{m}}\sum_{r=1}^{m}\sum_{i=1}^{n} \mathbb{E}\left[\left|\mathbb{I}(\mathbf{w}_{r}^{\top}\x_i\geq0)-\mathbb{I}(\mathbf{w}_{r}(0)^{\top}\x_i\geq0)\right|\cdot\left|\mathbf{w}_{r}(0)^{\top}\x_i\right|\bigg|\left|\mathbf{w}_{r}(0)^{\top}\x_i\right|<R\right]\mathbb{P}\left[\left|\mathbf{w}_{r}(0)^{\top}\x_i\right|<R\right]+\\
&  \frac{1}{\sqrt{m}}\sum_{r=1}^{m}\sum_{i=1}^{n} \mathbb{E}\left[\left|\mathbb{I}(\mathbf{w}_{r}^{\top}\x_i\geq0)-\mathbb{I}(\mathbf{w}_{r}(0)^{\top}\x_i\geq0)\right|\cdot\left|\mathbf{w}_{r}(0)^{\top}\x_i\right|\bigg|\left|\mathbf{w}_{r}(0)^{\top}\x_i\right|\ge R\right]\mathbb{P}\left[\left|\mathbf{w}_{r}(0)^{\top}\x_i\right|\ge R\right]\\
\leq & \frac{1}{\sqrt{m}}\sum_{r=1}^{m}\sum_{i=1}^{n} \left( R \cdot \mathbb{P}\left[\left|\mathbf{w}_{r}(0)^{\top}\x_i\right|<R\right]+0\cdot \mathbb{P}\left[\left|\mathbf{w}_{r}(0)^{\top}\x_i\right|\ge R\right]\right)\\
\leq & \frac{R}{\sqrt{m}}\sum_{r=1}^{m}\sum_{i=1}^{n}\mathbb{P}\left[A_{ir}\right]\\
\leq & \frac{2\sqrt{m}nR^2}{\sqrt{2\pi}\kappa}.
\end{aligned}
\end{equation}
According to the Markov’s inequality, with probability at least $1-\delta$, we have $\|(\J_{tr} - \J_{tr}(0))^{\top}\operatorname{vec}(\W(0))\|_2 \leq \frac{2\sqrt{m}nR^2}{\sqrt{2\pi}\kappa \delta}$.
\end{proof}

Now we consider the linearization of $f_{nn}(\x,\W)$. Specifically, we replace the neural network by its first order Taylor expansion at initialization:
\begin{equation}
    f_{lin}(\x;\W) \triangleq f_{nn}(\x;\W(0))+\left\langle\frac{\partial f_{nn}(\x;\W(0))}{\partial \W(0)}, \W-\W(0)\right\rangle.
\end{equation}
Equipped with the above lemmas, we bound the deviation of $f_{nn}(\x;\W(t))$ and $f_{lin}(\x;\W(t))$ on $\X$ and $\x_{te}$ respectively via the following two lemmas.
\begin{lemma}\label{lemma3}
With probability at least $1 - \delta$ over random initialization, for all $t>0$, we have:
\begin{equation}
\|f_{nn}(\X;\W(t)) - f_{lin}(\X;\W(t))\|_2 =\mathcal{O}\left(\frac{n^{\frac{7}{2}}}{\lambda^2 \delta^2\kappa \sqrt{m}}\right).
\end{equation}
\end{lemma}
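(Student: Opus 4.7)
The plan is to derive matching linear ODEs for $u(t) \triangleq f_{nn}(\X;\W(t))$ (under the actual NN gradient flow) and $v(t) \triangleq f_{lin}(\X;\W(t))$ (under the gradient flow of the linearized loss, with the understanding that $\W(t)$ for the linearization refers to its own trajectory), then to bound their difference via Duhamel's principle using the perturbation estimates in Lemmas \ref{lemma1} and \ref{lemma2}. Using the ReLU identity $\sigma(z)=z\sigma'(z)$---which gives $f_{nn}(\x;\W)=\J(\x;\W)^\top\operatorname{vec}(\W)$ and $f_{lin}(\x;\W)=\J(\x;0)^\top\operatorname{vec}(\W)$---the very same computation that produced \eqref{X_ode_new} for $u$ yields for $v$:
\begin{equation*}
\frac{\df v(t)}{\df t}+(\K_{tr}(0)+\lambda\mathbf{I}_n)(v(t)-\Y)=-\lambda\Y+\lambda\J(\X;0)^\top\operatorname{vec}(\W(0)).
\end{equation*}
Subtracting this from \eqref{X_ode_new} produces the error ODE
\begin{equation*}
\frac{\df(u-v)}{\df t}+(\K_{tr}(t)+\lambda\mathbf{I}_n)(u-v)=-(\K_{tr}(t)-\K_{tr}(0))(v-\Y)+\lambda(\J(\X;t)-\J(\X;0))^\top\operatorname{vec}(\W(0)).
\end{equation*}

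Since $\K_{tr}(t)+\lambda\mathbf{I}_n\succeq\lambda\mathbf{I}_n$, differentiating $\|u-v\|_2^2$ and applying Cauchy--Schwarz yields $\frac{\df\|u-v\|_2}{\df t}\leq-\lambda\|u-v\|_2+\|\mathrm{RHS}\|_2$, and combined with the initial condition $u(0)=v(0)=f_{nn}(\X;\W(0))$, Duhamel's formula produces
\begin{equation*}
\|u(t)-v(t)\|_2\leq\int_0^t e^{-\lambda(t-s)}\bigl[\|\K_{tr}(s)-\K_{tr}(0)\|_2\|v(s)-\Y\|_2+\lambda\|(\J(\X;s)-\J(\X;0))^\top\operatorname{vec}(\W(0))\|_2\bigr]\df s.
\end{equation*}
Lemma \ref{lemma1} guarantees that $\|\mathbf{w}_r(s)-\mathbf{w}_r(0)\|_2\leq R=\mathcal{O}(n/(\sqrt{\delta m}\lambda))$ uniformly in $s$, so Lemma \ref{lemma2} applies along the whole trajectory and supplies $\|\K_{tr}(s)-\K_{tr}(0)\|_2=\mathcal{O}(n^2 R/(\kappa\delta))$ together with $\|(\J(\X;s)-\J(\X;0))^\top\operatorname{vec}(\W(0))\|_2=\mathcal{O}(\sqrt{m}\,nR^2/(\kappa\delta))$. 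The residual $\|v(s)-\Y\|_2=\mathcal{O}(\sqrt{n/\delta})$ then follows from monotonicity of the linearized loss along its own flow combined with $\mathbb{E}[\mathcal{L}(\W(0))]=\mathcal{O}(n)$, mirroring the opening argument in the proof of Lemma \ref{lemma1}.

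Plugging these bounds into the integral and using $\int_0^t e^{-\lambda(t-s)}\df s\leq 1/\lambda$, the error splits into one piece of order $n^{5/2}R/(\kappa\delta^{3/2}\lambda)$ and another of order $\sqrt{m}\,nR^2/(\kappa\delta)$. After substituting $R=\mathcal{O}(n/(\sqrt{\delta m}\lambda))$, the first dominates and delivers the stated rate $\mathcal{O}(n^{7/2}/(\lambda^2\delta^2\kappa\sqrt{m}))$. The main obstacle is technical rather than conceptual: one must budget the failure probabilities across the three Markov-type estimates (the initial-loss control underpinning Lemma \ref{lemma1} and the residual bound, plus the kernel and Jacobian perturbation bounds of Lemma \ref{lemma2}) so that a union bound gives overall failure probability at most $\delta$, and verify that Lemma \ref{lemma2}---whose random event is purely over the initial Gaussian weights and whose estimate is uniform over deterministic weight vectors inside the $R$-ball---transfers to the random trajectory $\W(s)$. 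Conceptually, the key observation is that the ridge penalty $\lambda\mathbf{I}_n$ supplies a universal exponential decay factor $e^{-\lambda(t-s)}$ in the Duhamel kernel, producing the $1/\lambda$ gain that keeps the bound finite uniformly in $t$ and ultimately drives the whole argument.
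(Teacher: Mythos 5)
Your proposal is correct and follows essentially the same route as the paper: subtract the two linear ODEs for $f_{nn}(\X;\W(t))$ and $f_{lin}(\X;\W(t))$, use the ridge term to get the $e^{-\lambda(t-s)}$ Duhamel kernel, and control the forcing via Lemmas \ref{lemma1} and \ref{lemma2} together with the $\mathcal{O}(\sqrt{n/\delta})$ residual bound. The only (immaterial) difference is that you keep $(\K_{tr}(t)+\lambda\mathbf{I}_n)$ as the linear operator so the forcing involves $f_{lin}(\X;\W(s))-\Y$, whereas the paper moves the kernel perturbation to the other side and bounds $f_{nn}(\X;\W(s))-\Y$; both residuals admit the same $\mathcal{O}(\sqrt{n/\delta})$ estimate and yield the identical final rate.
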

\begin{proof}
Consider the dynamics of $f_{nn}(\X;\W(t))$ and $f_{lin}(\X;\W(t))$, we have:
\begin{equation}
    \begin{aligned}
&\frac{\df f_{nn}(\X;\W(t))}{\df t} + (\K_{tr}(t)+\lambda \mathbf{I}_n)f_{nn}(\X;\W(t)) = \K_{tr}(t)\Y + \lambda \J(\X;t)^{\top}\operatorname{vec}(\W(0));\\
&\frac{\df f_{lin}(\X;\W(t))}{\df t} + (\K_{tr}(0)+\lambda \mathbf{I}_n)f_{lin}(\X;\W(t)) = \K_{tr}(0)\Y + \lambda \J(\X;0)^{\top}\operatorname{vec}(\W(0)).
    \end{aligned}
\end{equation}
Combining the two ODE systems, we obtain:
\begin{equation}
    \begin{aligned}
    &\frac{\df}{\df t}\left(f_{nn}(\X;\W(t)) - f_{lin}(\X;\W(t))\right) +
    (\K_{tr}(0)+\lambda \mathbf{I}_n)\left(f_{nn}(\X;\W(t)) - f_{lin}(\X;\W(t))\right) \\
    =&(\K_{tr}(0) - \K_{tr}(t))\left(f_{nn}(\X;\W(t)) - \Y\right)
    + \lambda (\J(\X;t) - \J(\X;0))^{\top}\operatorname{vec}(\W(0)).
    \end{aligned}
\end{equation}
Then with probability at least $1 - \delta$, we can bound $\|f_{nn}(\X;\W(t)) - f_{lin}(\X;\W(t))\|_2$ for all $t>0$ as follows:
\begin{equation}
    \begin{aligned}
    &\|f_{nn}(\X;\W(t)) - f_{lin}(\X;\W(t))\|_2 \\
    \leq&\max_{0<s<t}\left(\frac{\|\K_{tr}(s) - \K_{tr}(0)\|_2 \|f_{nn}(\X;\W(s))-\Y\|_2 }{\lambda} + \left\|(\J_{tr}(s) - \J_{tr}(0))^{\top}\operatorname{vec}(\W(0))\right\|_2\right)\\
    = & \mathcal{O}\left(\frac{n^{\frac{5}{2}}R}{\lambda \delta^{\frac{3}{2}}\kappa}\right) +\mathcal{O}\left(\frac{\sqrt{m}nR^2}{\delta\kappa }\right)\\
    = & \mathcal{O}\left(\frac{n^{\frac{7}{2}}}{\lambda^2 \delta^2\kappa \sqrt{m}}\right).
    \end{aligned}
\end{equation}
\end{proof}

\begin{lemma} \label{lemma4}
With probability at least $1-\delta$ over the random initialization, for all $t>0$ and for any $\x_{te} \in \mathbb{R}^{d}$ with $\|\x_{t e}\|_2=1$, we have:
\begin{equation}
    |f_{nn}(\x_{te};\W(t)) - f_{lin}(\x_{te};\W(t)| = \mathcal{O}\left(\frac{n^4}{\lambda^3 \delta^2 \kappa \sqrt{m}}\right).
\end{equation}
\end{lemma}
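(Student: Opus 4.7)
The plan is to mimic the strategy of Lemma~\ref{lemma3}, but applied to the test-point ODE, which has the simpler dissipation $\lambda \mathbf{I}$ (no training Gram matrix) yet a driving term that now involves the already-bounded training deviation. First I would write down the two scalar ODEs
\begin{equation*}
\frac{\df f_{nn}(\x_{te};\W(t))}{\df t}+\lambda f_{nn}(\x_{te};\W(t)) = -\mathbf{K}_{te}(t)^{\top}(f_{nn}(\X;\W(t))-\Y)+\lambda\, \J(\x_{te};t)^{\top}\operatorname{vec}(\W(0)),
\end{equation*}
\begin{equation*}
\frac{\df f_{lin}(\x_{te};\W(t))}{\df t}+\lambda f_{lin}(\x_{te};\W(t)) = -\mathbf{K}_{te}(0)^{\top}(f_{lin}(\X;\W(t))-\Y)+\lambda\, \J(\x_{te};0)^{\top}\operatorname{vec}(\W(0)),
\end{equation*}
and subtract. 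Denoting the difference $g(t) \triangleq f_{nn}(\x_{te};\W(t)) - f_{lin}(\x_{te};\W(t))$, I would split the forcing term into three additive pieces using the trivial identity $A(t)B(t)-A(0)C(t) = (A(t)-A(0))B(t)+A(0)(B(t)-C(t))$.

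The three resulting error sources are: (E1) $(\mathbf{K}_{te}(t)-\mathbf{K}_{te}(0))^{\top}(f_{nn}(\X;\W(t))-\Y)$, which couples the kernel perturbation at $\x_{te}$ to the training residual; (E2) $\mathbf{K}_{te}(0)^{\top}(f_{nn}(\X;\W(t))-f_{lin}(\X;\W(t)))$, which propagates the training-point deviation from Lemma~\ref{lemma3} into the test-point equation; and (E3) $\lambda(\J(\x_{te};t)-\J(\x_{te};0))^{\top}\operatorname{vec}(\W(0))$, a direct Jacobian perturbation. Since $g(t)$ solves a scalar linear ODE with decay rate $\lambda$, Duhamel's formula (exactly as in Lemma~\ref{lemma1}) yields $|g(t)| \le \tfrac{1}{\lambda}\sup_{0\le s\le t}|\text{forcing}(s)|$, so each source contributes a factor $1/\lambda$.

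Then I would bound each piece with the already-established tools. For (E1), Lemma~\ref{lemma2} gives $\|\mathbf{K}_{te}(t)-\mathbf{K}_{te}(0)\|_2=\mathcal{O}(nR/(\kappa\delta))$, multiplied by $\|f_{nn}(\X;\W(s))-\Y\|_2=\mathcal{O}(\sqrt{n/\delta})$ from the loss-monotonicity bound in the proof of Lemma~\ref{lemma1}; combined with $R=\mathcal{O}(n/(\sqrt{\delta m}\lambda))$ this contributes at most $\mathcal{O}(n^{5/2}/(\lambda^{2}\kappa\delta^{2}\sqrt{m}))$. For (E3), Lemma~\ref{lemma2} gives $\mathcal{O}(\sqrt{m}R^{2}/(\kappa\delta))$, so after the $\lambda/\lambda$ factor this is $\mathcal{O}(n^{2}/(\lambda^{2}\kappa\delta^{2}\sqrt{m}))$, strictly smaller. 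The dominant term is (E2): using $\|\mathbf{K}_{te}(0)\|_2 = \mathcal{O}(\sqrt{n})$ together with the bound of Lemma~\ref{lemma3}, $\|f_{nn}(\X;\W(s))-f_{lin}(\X;\W(s))\|_2=\mathcal{O}(n^{7/2}/(\lambda^{2}\delta^{2}\kappa\sqrt{m}))$, and dividing by $\lambda$ produces exactly $\mathcal{O}(n^{4}/(\lambda^{3}\delta^{2}\kappa\sqrt{m}))$.

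The main obstacle I anticipate is purely bookkeeping: making sure that each $\x_{te}$-side bound in Lemma~\ref{lemma2} loses a factor of $n$ relative to its training-set analogue (because we are bounding a single column rather than an $n\times n$ matrix), so that the eventual $n$-power is governed by (E2) rather than artificially inflated. Once this accounting is done, a union of the high-probability events used in Lemmas~\ref{lemma1}--\ref{lemma3} delivers the stated bound with probability at least $1-\delta$, uniformly in $t>0$ and in any $\x_{te}$ with $\|\x_{te}\|_2=1$.
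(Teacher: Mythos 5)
Your proposal is correct and follows essentially the same route as the paper: subtract the two test-point ODEs, apply Duhamel's formula with decay rate $\lambda$, split the forcing into a kernel-perturbation term, a term propagating the training-set deviation of Lemma~\ref{lemma3}, and the Jacobian-times-initialization term, then invoke Lemmas~\ref{lemma1}--\ref{lemma3} to identify $\K_{te}(0)^{\top}(f_{nn}-f_{lin})$ as the dominant contribution of order $n^{4}/(\lambda^{3}\delta^{2}\kappa\sqrt{m})$. The only cosmetic difference is which factor you pair with the difference in the product-splitting identity (the paper keeps $\K_{te}(s)$ with $f_{nn}-f_{lin}$ and pairs the kernel perturbation with $f_{lin}(\X)-\Y$ rather than $f_{nn}(\X)-\Y$), which changes nothing in the resulting rates.
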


\begin{proof}
Consider the dynamic systems of $f_{nn}(\x_{te};\W(t))$ and $f_{lin}(\x_{te};\W(t))$, we have:
\begin{equation}
    \begin{aligned}
&\frac{\df f_{nn}(\x_{te};\W(t))}{\df t} + \lambda f_{nn}(\x_{te};\W(t))= -\K_{te}(t)^{\top}\left(f_{nn}(\X;\W(t))-\Y\right) + \lambda \J(\x_{te};t)^{\top}\operatorname{vec}(\W(0));\\
&\frac{\df f_{lin}(\x_{te};\W(t))}{\df t} + \lambda f_{lin}(\x_{te};\W(t))= -\K_{te}(0)^{\top}\left(f_{lin}(\X;\W(t))-\Y\right) + \lambda \J(\x_{te};0)^{\top}\operatorname{vec}(\W(0)).
    \end{aligned}
\end{equation}
Combining the two ODE systems, we obtain:
\begin{equation}
\begin{aligned}
    &\frac{\df}{\df t}\left(f_{nn}(\x_{te};\W(t)) - f_{lin}(\x_{te};\W(t))\right) +
    \lambda \left(f_{nn}(\x_{te};\W(t)) - f_{lin}(\x_{te};\W(t))\right) \\
    =&\K_{te}(0)^{\top}\left(f_{lin}(\X;\W(t))-\Y\right) - \K_{te}(t)^{\top}\left(f_{nn}(\X;\W(t))-\Y\right)+\lambda (\J(\x_{te};t) - \J(\x_{te};0))^{\top}\operatorname{vec}(\W(0)).
\end{aligned}
\end{equation}
Then we can bound $|f_{nn}(\x_{te};\W(t)) - f_{lin}(\x_{te};\W(t))|$ for all $t>0$ with probability at least $1-\delta$ as follows:
\begin{equation}
    \begin{aligned}
     & |f_{nn}(\x_{te};\W(t)) - f_{lin}(\x_{te};\W(t))|\\
\leq & \max_{0<s<t}\frac{1}{\lambda}\left|\K_{te}(0)^{\top}\left(f_{lin}(\X;\W(s))-\Y\right) - \K_{te}(s)^{\top}(f_{nn}(\X;\W(s))-\Y)\right| \\
 &+\max_{0<s<t}  \left|(\J(\x_{te};s) - \J(\x_{te};0))^{\top}\operatorname{vec}(\W(0))\right|\\
\leq & \max_{0<s<t} \frac{1}{\lambda}\left(\left|\K_{te}(s)^{\top}\left(f_{nn}(\X;\W(s))-f_{lin}(\X;\W(s))\right)\right| + \left|(\K_{te}(s)-\K_{te}(0))^{\top}(f_{lin}(\X;\W(s))-\Y)\right|\right)\\
&+\max_{0<s<t}  \left|(\J(\x_{te};s) - \J(\x_{te};0))^{\top}\operatorname{vec}(\W(0))\right|\\
\leq & \max_{0<s<t} \frac{1}{\lambda}\left(\|\K_{te}(s)\|_2\|f_{nn}(\X;\W(s))-f_{lin}(\X;\W(s))\|_2 + \|\K_{te}(s)-\K_{te}(0)\|_2\|f_{lin}(\X;\W(s))-\Y\|_2\right)\\
&+\max_{0<s<t}  \left|(\J(\x_{te};s) - \J(\x_{te};0))^{\top}\operatorname{vec}(\W(0))\right|\\
\leq & \frac{\sqrt{n}}{\lambda} \cdot \mathcal{O}\left(\frac{n^{\frac{7}{2}}}{\lambda^2 \delta^2\kappa \sqrt{m}}\right) + \frac{4nR}{\sqrt{2 \pi}\lambda\kappa\delta} \cdot {\sqrt{\frac{n}{\delta}}} + \frac{2\sqrt{m}R^2}{\sqrt{2\pi}\kappa \delta}\\
= & \mathcal{O}\left(\frac{n^4}{\lambda^3 \delta^2 \kappa \sqrt{m}}\right).
    \end{aligned}
\end{equation}
\end{proof}

Now we have built the equivalence between the neural network $f_{nn}(\x)$ and its linearization $f_{lin}(\x)$ in the over-parameterized regime. In the following, we bound the deviation between the NTK predictor $f_{ntk}(\x)$ and $f_{lin}(\x)$. Then according to the triangle inequality, we prove that $\|f_{nn}(\x)-f_{ntk}(\x)\|_2$ can be arbitrarily small when the width $m$ is sufficiently large, hence we build the equivalence between the neural network $f_{nn}(\x)$ and its NTK predictor $f_{ntk}(\x)$.

\noindent\begin{thmbis}{theorem1}
Suppose $0<\lambda< n^{\frac{1}{2}},\ {\kappa = \mathcal{O}\left(\frac{\sqrt{\delta} \lambda \epsilon}{n}\right)}$ and $\ m = \Omega\left(\frac{n^8}{\kappa^2\epsilon^2\delta^4\lambda^6}\right)$, then for any $\x_{te} \in \mathbb{R}^{d}$ with $\|\x_{t e}\|_2=1$, with probability at least $1 - \delta$ over random initialization, we have:
\begin{equation}
    |f_{nn}(\x_{te}) - f_{ntk}(\x_{te})|= \mathcal{O}(\epsilon).
\end{equation}
\end{thmbis}

\begin{proof}
According to the dynamic system of $f_{lin}(\x_{te};\W(t))$, we have:
\begin{equation}
    f_{lin}(\x_{te})=\lim_{t\to\infty}f_{lin}(\x_{te};\W(t)) = f_{lin}(\x_{te};\W(0)) + \K_{te}(0)^{\top}(\K_{tr}(0)+\lambda \mathbf{I}_n)^{-1}(\Y-f_{lin}(\X;\W(0))).
\end{equation}
And the prediction of kernel ridge regression using NTK on a test point $\x_{te}$ is:
\begin{equation}
    f_{ntk}(\x_{te}) = (\K_{te}^{\infty})^{\top} (\K_{tr}^{\infty}+\lambda \mathbf{I}_n)^{-1}\Y.
\end{equation}
Then we can bound $|f_{lin}(\x_{te}) - f_{ntk}(\x_{te})|$ as follows:
\begin{equation}
    \begin{aligned}
&|f_{lin}(\x_{te}) - f_{ntk}(\x_{te})|\\
\leq & |f_{lin}(\x_{te};\W(0))|+| \K_{te}(0)^{\top}(\K_{tr}(0)+\lambda \mathbf{I}_n)^{-1}f_{lin}(\X;\W(0))|\\
&+| \K_{te}(0)^{\top}(\K_{tr}(0)+\lambda \mathbf{I}_n)^{-1}\Y-(\K_{te}^{\infty})^{\top} (\K_{tr}^{\infty}+\lambda \mathbf{I}_n)^{-1}\Y|\\
\leq & |f_{lin}(\x_{te};\W(0))|+\| \K_{te}(0)\|_2\|(\K_{tr}(0)+\lambda \mathbf{I}_n)^{-1}\|_2\|f_{lin}(\X;\W(0))\|_2\\
&+\| \K_{te}(0)^{\top}(\K_{tr}(0)+\lambda \mathbf{I}_n)^{-1}-(\K_{te}^{\infty})^{\top} (\K_{tr}^{\infty}+\lambda \mathbf{I}_n)^{-1}\|_2\|\Y\|_2.
    \end{aligned}
\end{equation}
Note that $f_{lin}(\x;\W(0))$ has zero mean and variance $\mathcal{O}(\kappa^2)$, which implies that $\mathbb{E}\left[f_{lin}(\x;\W(0))^2\right] = \mathcal{O}(\kappa^2)$, hence with probability at least $1-\delta$, we have:
\begin{equation}
    |f_{lin}(\x_{te};\W(0))| = \mathcal{O}\left(\frac{\kappa}{\sqrt{\delta}}\right),\quad \|f_{lin}(\X;\W(0))\|_2  = \mathcal{O}\left(\frac{\sqrt{n}\kappa}{\sqrt{\delta}}\right).
\end{equation}
According to Lemma C.3 in \citep{pmlr-v97-arora19a}, with probability at least $1-\delta$, we have:
\begin{equation}
    \|\K_{tr}(0)-\K_{tr}^{\infty}\|_2 = \mathcal{O}\left(\frac{n \sqrt{\log \frac{n}{\delta}}}{\sqrt{m}}\right),\quad \|\K_{te}(0)-\K_{te}^{\infty}\|_2 = \mathcal{O}\left(\frac{ \sqrt{n\log \frac{n}{\delta}}}{\sqrt{m}}\right).
\end{equation}
Then according to the perturbation theory for linear systems, we have:
\begin{equation}
\begin{aligned}
    &\| \K_{te}(0)^{\top}(\K_{tr}(0)+\lambda \mathbf{I}_n)^{-1}-(\K_{te}^{\infty})^{\top} (\K_{tr}^{\infty}+\lambda \mathbf{I}_n)^{-1}\|_2\\
\leq & \|\K_{tr}(0)+\lambda \mathbf{I}_n)^{-1}\|_2\left(\|\K_{tr}(0)-\K_{tr}^{\infty}\|_2\|(\K_{te}^{\infty})^{\top} (\K_{tr}^{\infty}+\lambda \mathbf{I}_n)^{-1}\|_2+\|\K_{te}(0)-\K_{te}^{\infty}\|_2\right)\\
\leq & \frac{1}{\lambda}\left(\mathcal{O}\left(\frac{n \sqrt{\log \frac{n}{\delta}}}{\sqrt{m}}\right)\mathcal{O}\left(\frac{\sqrt{n}}{\lambda}\right)+\mathcal{O}\left(\frac{ \sqrt{n\log \frac{n}{\delta}}}{\sqrt{m}}\right)\right)\\
=&\mathcal{O}\left(\frac{n^{\frac{3}{2}} \sqrt{\log \frac{n}{\delta}}}{\sqrt{m}\lambda^2}\right).
\end{aligned}
\end{equation}
Hence we can bound $|f_{nn}(\x_{te}) - f_{ntk}(\x_{te})|$ as follows:
\begin{equation}
    \begin{aligned}
        &|f_{nn}(\x_{te}) - f_{ntk}(\x_{te})|\\
     \leq & |f_{nn}(\x_{te}) - f_{lin}(\x_{te})| + |f_{lin}(\x_{te}) - f_{ntk}(\x_{te})|\\
     \leq & \mathcal{O}\left(\frac{n^4}{\lambda^3 \delta^2 \kappa \sqrt{m}}\right)+\mathcal{O}\left(\frac{\kappa}{\sqrt{\delta}}\right)+\mathcal{O}\left(\frac{n\kappa}{\sqrt{\delta}\lambda}\right)+
     \mathcal{O}\left(\frac{n^{2} \sqrt{\log \frac{n}{\delta}}}{\sqrt{m}\lambda^2}\right)\\
     =&\mathcal{O}(\epsilon).
    \end{aligned}
\end{equation}
\end{proof}

Equipped with Theorem \ref{theorem1}, the approximation error can be evaluated through simple analysis as follows.

\noindent\begin{thmbis}{theorem2}
Suppose $0<\lambda< n^{\frac{1}{2}},\ {\kappa = \mathcal{O}\left(\frac{\sqrt{\delta} \lambda \epsilon}{n}\right)}$, $\ m = \Omega\left(\frac{n^8}{\kappa^2\epsilon^2\delta^4\lambda^6}\right)$,  $f_{nn}$ and $f_{ntk}$ are uniformly bounded over the unit sphere by a constant $C$, i.e. $|f_{nn}(\x)|<C$ and $|f_{ntk}(\x)|<C$ for all $\|\x\|_2 =1$. Then for any training point $\x_i \in \X$ and test point $\x_{te} \in \mathbb{R}^{d}$ with $\|\x_{t e}\|_2=1$, with probability at least $1 - \delta$ over random initialization, we have:
\begin{equation}
    |\mathcal{I}_{nn}(\x_i,\x_{te}) - \mathcal{I}_{ntk}(\x_i,\x_{te})|= \mathcal{O}(\epsilon).
\end{equation}
\end{thmbis}

\begin{proof}According to Theorem \ref{theorem1}, we can bound  $|\mathcal{I}_{nn}(\x_i,\x_{te}) - \mathcal{I}_{ntk}(\x_i,\x_{te})|$ with probability at least $1-\delta$ as follows:
\begin{equation}
    \begin{aligned}
       &|\mathcal{I}_{nn}(\x_i,\x_{te}) - \mathcal{I}_{ntk}(\x_i,\x_{te})|\\
     =&\left|\frac{1}{2}[f_{nn}(\x_{te})^2 - f_{nn}^{\backslash i}(\x_{te})^2]+y_{te}[  f_{nn}^{\backslash i}(\x_{te})-f_{nn}(\x_{te})]-\frac{1}{2}[f_{ntk}(\x_{te})^2 - f_{ntk}^{\backslash i}(\x_{te})^2]-y_{te}[  f_{ntk}^{\backslash i}(\x_{te})-f_{ntk}(\x_{te})]\right|\\
     \leq & \frac{1}{2}|f_{nn}(\x_{te})^2 - f_{ntk}(\x_{te})^2| + \frac{1}{2}|f^{\backslash i}_{nn}(\x_{te})^2 - f_{ntk}^{\backslash i}(\x_{te})^2|+|f_{nn}(\x_{te}) - f_{ntk}(\x_{te})|+
     |f_{nn}^{\backslash i}(\x_{te}) - f_{ntk}^{\backslash i}(\x_{te})|\\
     \leq & {C}\mathcal{O}(\epsilon) +  C\mathcal{O}(\epsilon)+\mathcal{O}(\epsilon)+\mathcal{O}(\epsilon)\\
     =& \mathcal{O}(\epsilon).
    \end{aligned}
\end{equation}
\end{proof}

\section{Appendix C: Proofs for Section 6}

\begin{propbis}{pop1}
For any $\x_{te} \in \R^d$ with $\|\x_{te}\|_2=1$, let the width $m \to \infty$, then with
probability arbitrarily close to 1 over random initialization, we have:
\begin{equation}
\begin{aligned}
    \hat{\mathcal{I}}_{nn}(\x_i,\x_{te}) \to& 
    \alpha(\x_i,\x_{te})(f_{ntk}(\x_{te})-y_{te})(f_{ntk}(\x_{i})-y_{i}) \\ \triangleq &\hat{\mathcal{I}}_{ntk}(\x_i,\x_{te}),
\end{aligned}
\end{equation}
where $\alpha(\x_i,\x_{te}) \triangleq (\mathbf{K}_{te}^{\infty})^{\top}(\mathbf{K}_{tr}^{\infty}+\lambda \mathbf{I}_n)^{-1}\mathbf{e}_i$.
\end{propbis}

\begin{proof}
Firstly, we calculate the Hessian matrix $\nabla_{\W}^{2} l\left(\x_{i}; \W(\infty)\right)$ for each $\x_i$:
\begin{equation}
    \begin{aligned}
       &\nabla_{\W}^{2} l\left(\x_{i}; \W(\infty)\right)\\
       =& \frac{\partial}{\partial \W}\left((f_{nn}(\x_i)-y_i) \operatorname{vec}\left(\frac{\partial f_{nn}(\x_i)}{\partial \W}\right)\right)\\
       =&\operatorname{vec}\left(\frac{\partial f_{nn}(\x_i)}{\partial \W}\right)\operatorname{vec}\left(\frac{\partial f_{nn}(\x_i)}{\partial \W}\right)^{\top} + (f_{nn}(\x_i)-y_i)\nabla_{\W}^{2}f_{nn}(\x_i)\\
       =&\operatorname{vec}\left(\frac{\partial f_{nn}(\x_i)}{\partial \W}\right)\operatorname{vec}\left(\frac{\partial f_{nn}(\x_i)}{\partial \W}\right)^{\top}.
    \end{aligned}
\end{equation}
Notice that the term $\nabla_{\W}^{2}f_{nn}(\x_i)$ vanishes because of the piecewise linear property of ReLU function. Thus we can calculate $ \hat{\mathcal{I}}_{nn}(\x_i,\x_{te})$ as follows:
\begin{equation}
    \begin{aligned}
     &\quad\hat{\mathcal{I}}_{nn}(\x_i,\x_{te}) \\&=\lim_{t\to \infty}\operatorname{vec}\left(\frac{\partial l(\x_{te};\W(t))}{\partial \W(t)}\right)^{\top}H(t)^{-1}\operatorname{vec}\left(\frac{\partial l(\x_{i};\W(t))}{\partial \W(t)}\right) \\
       & =\lim_{t\to \infty}(f_{nn}(\x_{te})-y_{te})(f_{nn}(\x_{i})-y_{i})\operatorname{vec}\left(\frac{\partial f_{nn}(\x_{te})}{\partial \W}\right)^{\top}\left(\J(\X;t)\J(\X;t)^{\top}+\lambda \mathbf{I}_{md}\right)^{-1}\operatorname{vec}\left(\frac{\partial f_{nn}(\x_{i})}{\partial \W}\right)\\
       & =\lim_{t\to \infty}(f_{nn}(\x_{te})-y_{te})(f_{nn}(\x_{i})-y_{i})\J({\x_{te}};t)^{\top}\left(\J(\X;t)\J(\X;t)^{\top}+\lambda \mathbf{I}_{md}\right)^{-1}\J(\X;t)\mathbf{e}_i\\
       & =\lim_{t\to \infty}(f_{nn}(\x_{te})-y_{te})(f_{nn}(\x_{i})-y_{i})\J(\x_{te};t)^{\top}\J(\X;t)\left(\J(\X;t)^{\top}\J(\X;t)+\lambda \mathbf{I}_{n}\right)^{-1}\mathbf{e}_i\\
        &=\lim_{t\to \infty}(f_{nn}(\x_{te})-y_{te})(f_{nn}(\x_{i})-y_{i})\mathbf{K}_{te}(t)^{\top}\left(\mathbf{K}_{tr}(t)+\lambda \mathbf{I}_n\right)^{-1}\mathbf{e}_i\\
        &\overset{m\to \infty}{\longrightarrow} (f_{ntk}(\x_{te})-y_{te})(f_{ntk}(\x_{i})-y_{i})(\mathbf{K}_{te}^{\infty})^{\top}(\mathbf{K}_{tr}^{\infty}+\lambda \mathbf{I}_n)^{-1}\mathbf{e}_i\\
        &\triangleq\hat{\mathcal{I}}_{ntk}(\x_i,\x_{te}).
    \end{aligned}
\end{equation}
\end{proof}

\begin{propbis}{pop2}
For any $\x_{te} \in \R^d$ with $\|\x_{te}\|_2=1$ and $\x_i \in \X$, we have:
\begin{equation}
\begin{aligned}
    \mathcal{I}_{ntk}(\x_i,\x_{te}) &=  \underbrace{\alpha(\x_i,\x_{te})(f_{ntk}(\x_{te})-y_{te})(f_{ntk}^{\backslash i}(\x_{i})-y_{i})}_{\text{\uppercase\expandafter{\romannumeral1}}} \\ 
    &+\underbrace{\frac{1}{2}\alpha(\x_i,\x_{te})^2(f_{ntk}^{\backslash i}(\x_i) - y_i)^2}_{\text{\uppercase\expandafter{\romannumeral2}}}.
\end{aligned}
\end{equation}
\end{propbis}

\begin{proof}
Firstly, we calculate $\mathcal{I}_{ntk}(\x_i, \x_{te})$ for each $\x_i$ as follows:
\begin{equation}
    \begin{aligned}
        &\mathcal{I}_{ntk}(\x_i,\x_{te}) \\=& \frac{1}{2}(f_{ntk}^{\backslash i}(\x_{te})-y_{te})^2-\frac{1}{2}(f_{ntk}(\x_{te})-y_{te})^2\\
        =&\frac{1}{2}((f_{ntk}(\x_{te})+\Delta f)-y_{te})^2-\frac{1}{2}(f_{ntk}(\x_{te})-y_{te})^2\\
        =&\Delta f(f_{ntk}(\x_{te})-y_{te}) + \frac{1}{2}\Delta f^2,
    \end{aligned}
\end{equation}
where $\Delta f \triangleq f_{ntk}^{\backslash i}(\x_{te}) - f_{ntk}(\x_{te}) = -\alpha(\x_i,\x_{te})\frac{\mathbf{k}_{-i}^{\top}\mathcal{Y}}{k_{-ii}}$. Now we calculate $f_{ntk}^{\backslash i}(\x_i) - y_i$ for each $\x_i$ as follows:
\begin{equation}
    \begin{aligned}
        &f_{ntk}^{\backslash i}(\x_i) - y_i\\
       =&\mathbf{e}_i^{\top}\mathbf{K}_{tr}^{\infty}\left((\mathbf{K}_{tr}^{\infty}+\lambda \mathbf{I}_n)^{-1} - \frac{\mathbf{k}_{-i}\mathbf{k}_{-i}^{\top}}{k_{-ii}}\right)\mathcal{Y} - y_i\\
       =&\mathbf{e}_i(\mathbf{K}_{tr}^{\infty}+\lambda \mathbf{I}_n)(\mathbf{K}_{tr}^{\infty}+\lambda \mathbf{I}_n)^{-1}\mathcal{Y} - \lambda \mathbf{e}_i^{\top}(\mathbf{K}_{tr}^{\infty}+\lambda \mathbf{I}_n)^{-1}\mathcal{Y} - \mathbf{e}_{i}^{\top}\K_{tr}^{\infty}\frac{\mathbf{k}_{-i}\mathbf{k}_{-i}^{\top}}{k_{-ii}}\mathcal{Y} - y_i\\
       =&y_i - \lambda \mathbf{k}_{-i}^{\top}\mathcal{Y} - \mathbf{e}_{i}^{\top}(\K_{tr}^{\infty}+\lambda \mathbf{I}_n)\frac{\mathbf{k}_{-i}\mathbf{k}_{-i}^{\top}}{k_{-ii}}\mathcal{Y} + \lambda \mathbf{e}_i^{\top}\frac{\mathbf{k}_{-i}\mathbf{k}_{-i}^{\top}}{k_{-ii}}\mathcal{Y}- y_i\\
       =&-\lambda \mathbf{k}_{-i}^{\top}\mathcal{Y} - \frac{\mathbf{k}_{-i}^{\top}\mathcal{Y}}{k_{-ii}}+\lambda \mathbf{k}_{-i}^{\top}\mathcal{Y}\\
       =&-\frac{\mathbf{k}_{-i}^{\top}\mathcal{Y}}{k_{-ii}}.
    \end{aligned}
    \end{equation}
Thus we have $\Delta f = \alpha(\x_i, \x_{te})(f_{ntk}^{\backslash i}(\x_i) - y_i)$ and finish the proof.
\end{proof}

\noindent\begin{thmbis}{lowerbound}
Given $\x_i \in \X$ and $\x_{te} \in \R^d$ with $\|\x_{te}\|_2=1$, we have:
\begin{equation}
\begin{aligned}
     &\left|\mathcal{I}_{ntk}(\x_i,\x_{te})- \hat{\mathcal{I}}_{ntk}(\x_i,\x_{te})\right| \\\geq& \frac{\lambda_{min}}{\lambda_{min}+\lambda}|\mathcal{I}_{ntk}(\x_i,\x_{te})|-\frac{1}{2}\alpha(\x_i,\x_{te})^2(f_{ntk}^{\backslash i}(\x_i) - y_i)^2,
\end{aligned}
\end{equation}
where $\lambda_{min}$ is the least eigenvalue of $\K_{tr}^{\infty}$. Furthermore, if $|{\alpha}(\x_i,\x_{te})| \leq \sqrt{\frac{\gamma}{n}}\|\boldsymbol{\alpha}(\x_{te}) \|_2$ for some $\gamma>0$, where $\boldsymbol{\alpha}(\x_{te}) \triangleq (\mathbf{K}_{te}^{\infty})^{\top}(\mathbf{K}_{tr}^{\infty}+\lambda \mathbf{I}_n)^{-1}$, and $\mathbb{E}_{\x\sim\mathcal{D}}[(f_{ntk}(\x)-y)^2]=\mathcal{O}({\sqrt{1/n}})$, then with probability at least $1-\delta$ we have:
\begin{equation}
\begin{aligned}
  &\left|\mathcal{I}_{ntk}(\x_i,\x_{te})- \hat{\mathcal{I}}_{ntk}(\x_i,\x_{te})\right|\\\geq& \frac{\lambda_{min}}{\lambda_{min}+\lambda}|\mathcal{I}_{ntk}(\x_i,\x_{te})|-\mathcal{O}(\frac{\gamma}{\delta\lambda n^{3/2}}).
\end{aligned}
\end{equation}
\end{thmbis}

\begin{proof}
Firstly, we give the lower bound of $A_i\triangleq \mathbf{e}_i^{\top}\K_{tr}^{\infty}(\K_{tr}^{\infty}+\lambda \mathbf{I}_n)^{-1}\mathbf{e}_i$ as follows:
\begin{equation}
    \begin{aligned}
       A_i=& \mathbf{e}_i^{\top}\K_{tr}^{\infty}(\K_{tr}^{\infty}+\lambda \mathbf{I}_n)^{-1}\mathbf{e}_i\\
       =&\mathbf{e}_i^{\top}\mathbf{U}_{tr}\mathbf{\Lambda}_{tr}(\mathbf{\Lambda}_{tr}+\lambda \mathbf{I}_n)^{-1}\mathbf{U}_{tr}^{\top}\mathbf{e}_i\\
       \geq& \frac{\lambda_{min}}{\lambda_{min}+\lambda},
    \end{aligned}
\end{equation}
where $\mathbf{U}_{tr}\mathbf{\Lambda}_{tr}\mathbf{U}_{tr}^{\top} = \K_{tr}^{\infty}$ and $\lambda_{min}$ denote the spectral decomposition and the least eigenvalue of $\K_{tr}^{\infty}$ respectively. 
Let us recall the leave-one-out theory on kernel ridge regression as Lemma 4.1 described in \cite{elisseeff2003leave}, for all $\x_i\in \X$, we have 
\begin{equation}
    f_{ntk}(\x_i) - y_i = (f_{ntk}^{\backslash i}(\x_i)-y_i)(1-A_i).
\end{equation}
Thus we can bound $\left|\mathcal{I}_{ntk}(\x_i,\x_{te})- \hat{\mathcal{I}}_{ntk}(\x_i,\x_{te})\right|$ as follows:
\begin{equation}
    \begin{aligned}
        &\left|\mathcal{I}_{ntk}(\x_i,\x_{te})- \hat{\mathcal{I}}_{ntk}(\x_i,\x_{te})\right|\\
        =&\left|A_i\alpha(\x_i,\x_{te})(f_{ntk}(\x_{te})-y_{te})(f_{ntk}^{\backslash i}(\x_{i})-y_i) + \frac{1}{2}\alpha(\x_i,\x_{te})^2(f_{ntk}^{\backslash i}(\x_i) - y_i)^2\right|\\
        =&\left|A_i\mathcal{I}_{ntk}(\x_i,\x_{te})+\frac{1-A_i}{2}\alpha(\x_i,\x_{te})^2(f_{ntk}^{\backslash i}(\x_i) - y_i)^2\right|\\
        \geq& \left|A_i\mathcal{I}_{ntk}(\x_i,\x_{te})\right|-\left|\frac{1-A_i}{2}\alpha(\x_i,\x_{te})^2(f_{ntk}^{\backslash i}(\x_i) - y_i)^2\right|\\
        \geq& \frac{\lambda_{min}}{\lambda_{min}+\lambda}|\mathcal{I}_{ntk}(\x_i,\x_{te})|-\frac{1}{2}\alpha(\x_i,\x_{te})^2(f_{ntk}^{\backslash i}(\x_i) - y_i)^2.
        \end{aligned}
\end{equation}
Now we consider the magnitude of $\mathbf{\alpha}(\x_i,\x_{te})^2$. For a two-layer neural network $f_{nn}(\x)$ with $m$ neurons, we define $\alpha_m(\x_i,\x_{te})$ as follows:
\begin{equation}
    \alpha_m(\x_i,\x_{te}) \triangleq \lim_{t \to \infty}\J(\x_{te};t)^{\top}\J(\X;t)\left(\J(\X;t)^{\top}\J(\X;t)+\lambda \mathbf{I}_{n}\right)^{-1}\mathbf{e}_i.
\end{equation}
On the one hand, we have $\lim_{m \to \infty} \alpha_m(\x_i,\x_{te}) =\alpha(\x_i,\x_{te})$ according to Theorem \ref{theorem1}. On the other hand, $\boldsymbol{\alpha}_m(\x_{te}) \triangleq [\alpha_m(\x_1,\x_{te}),\cdots,\alpha_m(\x_n,\x_{te}) ]^{\top}$ can be regarded as the optimizer of the following optimization problem:
\begin{equation}
    \min_{\boldsymbol{\alpha}\in \R^n}\  \lim_{t\to \infty}\frac{1}{2}\left\|\J(\x_{te};t)  - \J(\X;t)\boldsymbol{\alpha} \right\|_2^2+\frac{\lambda}{2}\left\|\boldsymbol{\alpha}\right\|_2^2.
\end{equation}
Now we consider $\overline{\boldsymbol{\alpha}}\triangleq \left[0,\cdots, 0\right]^{\top}\in \R^n$, then we have:
\begin{equation}
\lim_{m\to \infty}\lim_{t\to \infty}\frac{1}{2}\left\|\J(\x_{te};t)  - \J(\X;t)\overline{\boldsymbol{\alpha}} \right\|_2^2+\frac{\lambda}{2}\left\|\overline{\boldsymbol{\alpha}}\right\|_2^2 =\frac{1}{4}.
\end{equation}
Thus for sufficiently large $m$, we have:
\begin{equation}
    \frac{\lambda}{2}\|\boldsymbol{\alpha}_m(\x_{te})\|_2^2 \leq \lim_{t\to \infty}\frac{1}{2}\left\|\J(\x_{te};t)  - \J(\X;t)\boldsymbol{\alpha}_m(\x_{te}) \right\|_2^2+\frac{\lambda}{2}\left\|\boldsymbol{\alpha}_m(\x_{te})\right\|_2^2 \leq \frac{1}{2}.
\end{equation}
Thus we have:
\begin{equation}
  \mathbf{\alpha}(\x_i,\x_{te})^2\leq {\frac{\gamma}{n}}\|\boldsymbol{\alpha}(\x_{te})\|_2^2\leq \frac{ \gamma}{\lambda n}.
\end{equation}
Furthermore, when the generalization error of $f_{ntk}$ on $\mathcal{D}$ satisfies $\mathbb{E}_{\x\sim\mathcal{D}}[(f_{ntk}(\x)-y)^2]=\mathcal{O}({\sqrt{1/n}})$, with probability at least $1-\delta$, we have:
\begin{equation}
    \left(f_{ntk}^{\backslash i}(\x_i)-y_i\right)^2 \leq  \frac{\mathbb{E}_{\x_i\in \X}\left[\left(f_{ntk}^{\backslash i}(\x_i)-y_i\right)^2\right]}{\delta} = \mathcal{O}\left(\sqrt{\frac{1}{n\delta^2}}\right).
\end{equation}
Thus if $|{\alpha}(\x_i,\x_{te})| \leq \sqrt{\frac{\gamma}{n}}\|\boldsymbol{\alpha}(\x_{te}) \|_2$ and $\mathbb{E}_{\x\sim\mathcal{D}}[(f_{ntk}(\x)-y)^2]=\mathcal{O}({\sqrt{1/n}})$, then with probability at least $1-\delta$ we have:
\begin{equation}
       \left|\mathcal{I}_{ntk}(\x_i,\x_{te})- \hat{\mathcal{I}}_{ntk}(\x_i,\x_{te})\right|\geq \frac{\lambda_{min}}{\lambda_{min}+\lambda}|\mathcal{I}_{ntk}(\x_i,\x_{te})|-\mathcal{O}(\frac{\gamma}{\delta\lambda n^{3/2}}).
\end{equation}
\end{proof}

\noindent\begin{thmbis}{theorem4}
Consider a dataset $(\X,\Y) = \{\x_i,y_i\}_{i=1}^n$ generated from the finite mixture model described in Definition 1. Let $\boldsymbol{\alpha}(\x_i) \triangleq (\mathbf{K}_{tr}^{\infty})^{\top}(\mathbf{K}_{tr}^{\infty}+\lambda \mathbf{I}_n)^{-1}\mathbf{e}_i$. Suppose $|[\boldsymbol{\alpha}(\x_i)]_i|\leq \sqrt{\frac{\gamma}{n}}\|\boldsymbol{\alpha}(\x_{i}) \|_2$ for some $\gamma>0$, and $\lambda > \frac{\sqrt{2}\lambda_{max}\epsilon_r}{1-\sqrt{2}\epsilon_r}$, where $\epsilon_r^2 \triangleq 2r^2+\arccos(1-2r^2)$ is a small constant. Then for $\x_i \in \X_k$ and $\x_{te}\sim \mathcal{D}$, we have:
\begin{equation}
\begin{aligned}
         & \left|\mathcal{I}_{ntk}(\x_i,\x_{te})- \hat{\mathcal{I}}_{ntk}(\x_i,\x_{te})\right| \\\leq&
    \underbrace{\sqrt{\frac{\gamma}{n^2p_k}}|\mathcal{I}_{ntk}(\x_i,\x_{te})|}_{\text{{\uppercase\expandafter{\romannumeral1}}}} +   \underbrace{\frac{1}{2}\alpha(\x_i,\x_{te})^2(f_{ntk}^{\backslash i}(\x_i) - y_i)^2}_{\text{\uppercase\expandafter{\romannumeral2}}}.
\end{aligned}
\end{equation}
\end{thmbis}

\begin{proof}
Similar to the proof in Theorem \ref{lowerbound}, we bound $\left|\mathcal{I}_{ntk}(\x_i,\x_{te})- \hat{\mathcal{I}}_{ntk}(\x_i,\x_{te})\right|$ as follows: 
\begin{equation}
    \begin{aligned}
        &\left|\mathcal{I}_{ntk}(\x_i,\x_{te})- \hat{\mathcal{I}}_{ntk}(\x_i,\x_{te})\right|\\
        =&\left|A_i\mathcal{I}_{ntk}(\x_i,\x_{te})+\frac{1-A_i}{2}\alpha(\x_i,\x_{te})^2(f_{ntk}^{\backslash i}(\x_i) - y_i)^2\right|\\
        \leq& \left|A_i\mathcal{I}_{ntk}(\x_i,\x_{te})\right|+\left|\frac{1-A_i}{2}\alpha(\x_i,\x_{te})^2(f_{ntk}^{\backslash i}(\x_i) - y_i)^2\right|\\
        \leq& |A_i|\cdot|\mathcal{I}_{ntk}(\x_i,\x_{te})|+\frac{1}{2}\alpha(\x_i,\x_{te})^2(f_{ntk}^{\backslash i}(\x_i) - y_i)^2.
        \end{aligned}
\end{equation}
For $\x_i$ sampled from $\mathcal{D}_k$, note that the approximation error is controlled by $|A_i|$ and in the following we show that $|A_i|$ decreases with probability density of $\mathcal{D}_k$, i,e., $p_k$.
Firstly, we consider $\boldsymbol{\alpha}_m(\x_i)\triangleq [\mathbf{\alpha}_m(\x_1,\x_i),\cdots,\mathbf{\alpha}_m(\x_n,\x_i)]^{\top}$, which is the optimizer of the following unconstrained optimization problem:
\begin{equation}
    \min_{\boldsymbol{\alpha}\in \R^n}\  \lim_{t\to \infty}\frac{1}{2}\left\|\J(\x_i;t)  - \J(\X;t)\mathbf{\boldsymbol{\alpha}} \right\|_2^2+\frac{\lambda}{2}\left\|\boldsymbol{\alpha}\right\|_2^2. \label{opt1}
\end{equation}
And there exists some $\epsilon_{\lambda}>0$, such that the following constrained optimization problem is equivalent to the problem (\ref{opt1}):
\begin{equation}
\begin{aligned}
   &\quad\quad \min_{\boldsymbol{\alpha}\in \R^n} \|\boldsymbol{\alpha}\|_2^2\\
    s.t.&\ \left\|\J(\x_i;t)  - \J(\X;t)\boldsymbol{\alpha} \right\|_2^2\leq \epsilon_{\lambda}^2.
\end{aligned}    \label{opt2}
\end{equation}
On the one hand, $\boldsymbol{\alpha}_m(\x_i)$ is the optimizer of the problem (\ref{opt2}), thus we have:
\begin{equation}
\begin{aligned}
        \epsilon_{\lambda}^2& \geq \left\|\J(\x_i;t)  - \J(\X;t)\boldsymbol{\alpha}_m(\x_i) \right\|_2^2 \\
        &=\left\|\J(\x_i;t)  - \J(\X;t)\left(\J(\X;t)^{\top}\J(\X;t)+\lambda \mathbf{I}_{n}\right)^{-1}\J(\X;t)^{\top}\J(\x_i;t) \right\|_2^2\\
        &\geq \left\|\mathbf{I}_{md}-\J(\X;t)\left(\J(\X;t)^{\top}\J(\X;t)+\lambda \mathbf{I}_{n}\right)^{-1}\J(\X;t)^{\top}\right\|_2^2\|\J(\x_i;t) \|_2^2\\
        &\geq \frac{1}{2}\left(\frac{\lambda}{\lambda+\lambda_{max}}\right)^2\quad (m\to \infty).
\end{aligned}
\end{equation}
On the other hand, we define the vector $\hat{\boldsymbol{\alpha}}$ as follows:
\begin{equation}
    [\hat{\boldsymbol{\alpha}}]_i=\left\{
\begin{aligned}
& \frac{1}{n_k} , & \mbox{if}\  \x_i \in \mathcal{X}_k ; \\
&0  , & \mbox{otherwise}.
\end{aligned}
\right.
\end{equation}
Notice that for $\x$, $\x^{\prime}$ generated from the same sub-distribution $\mathcal{D}_k$, according to the assumption \textbf{(A2)} of dataset, we have $\|\x-\x^{\prime}\|_2^2 \leq 4r^2$, and we can bound $\lim_{m\to \infty}\lim_{t\to\infty}\|\mathbf{J}(\x;t)-\mathbf{J}(\x^{\prime};t)\|_2^2$ as follows:
\begin{equation}
    \lim_{m\to \infty}\lim_{t\to\infty}\|\mathbf{J}(\x;t)-\mathbf{J}(\x^{\prime};t)\|_2^2 = 1-\frac{\mathbf{x}^{\top} \mathbf{x}^{\prime}\left(\pi-\arccos \left(\mathbf{x}^{\top} \mathbf{x}^{\prime}\right)\right)}{\pi}\leq 2r^2+\arccos(1-2r^2)\triangleq \epsilon^{2}_{r}.
\end{equation}
Thus for $\hat{\boldsymbol{\alpha}}$, we have $\lim_{m\to \infty}\lim_{t\to\infty}\|\mathbf{J}(\x_i;t)-\mathbf{J}(\X;t)\hat{\boldsymbol{\alpha}}\|_2^2 \leq \epsilon_r^2$. Notice that we set $\lambda\geq \frac{\sqrt{2}{\lambda}_{max}\epsilon_r}{1-\sqrt{2}\epsilon_r}$, thus we have $\epsilon_r \leq\epsilon_{\lambda}$, which means that $\hat{\boldsymbol{\alpha}}$ is a feasible solution of the problem (\ref{opt2}). Thus we have:
\begin{equation}
    \begin{aligned}
        |A_i| &= \lim_{m\to\infty}\lim_{t \to \infty}|[\boldsymbol{\alpha}_m(\x_i)]_i| \\
        &\leq \sqrt{\frac{\gamma}{n}}\lim_{m\to\infty}\lim_{t \to \infty}\|\boldsymbol{\alpha}_m(\x_i)\|_{2}\\
        &\leq\sqrt{\frac{\gamma}{n}}\|\hat{\boldsymbol{\alpha}}\|_2 \\
        &=  \sqrt{\frac{\gamma}{n\cdot n_k}}.
    \end{aligned}
\end{equation}
Hence we have:
\begin{equation}
    \left|\mathcal{I}_{ntk}(\x_i,\x_{te})- \hat{\mathcal{I}}_{ntk}(\x_i,\x_{te})\right| \leq 
     \sqrt{\frac{\gamma}{n^2p_k}}|\mathcal{I}_{ntk}(\x_i,\x_{te})|+ \frac{1}{2}\alpha(\x_i,\x_{te})^2(f_{ntk}^{\backslash i}(\x_i) - y_i)^2.
\end{equation}
\end{proof}

\end{document}